\theoremstyle{plain}
\newtheorem{thm}{Theorem}[section]
\theoremstyle{definition}
\newtheorem{defn}[thm]{Definition}
\newtheorem{prop}{Proposition}
\newtheorem{lemma}{Lemma}
\newcommand{\x}{\mathbf x}
\newcommand{\y}{\mathbf y}
\newcommand{\R}{\mathbb R}
\newcommand{\RR}{\mathbf R}
\newcommand{\I}{\mathbf I}
\newcommand{\N}{\mathbf N}
\newcommand{\Y}{\mathbf Y}
\newcommand{\PY}{\mathcal P_{\mathcal Y}}
\renewcommand{\L}{\mathcal L}
\renewcommand{\Pr}{\mathbb P}
\newcommand{\E}{\mathbb{E}}
\newcommand{\PX}{\mathcal{P}_\mathcal{X}}
\title{Budgeted Multi-Objective Optimization with a Focus on the Central Part of the Pareto Front - Extended Version}
\author[1,2]{David Gaudrie}
\author[2]{Rodolphe le Riche}
\author[3]{Victor Picheny}
\author[1]{Benoît Enaux}
\author[1]{Vincent Herbert}
\affil[1]{Groupe PSA}
\affil[2]{CNRS LIMOS, École Nationale Supérieure des Mines de Saint-Étienne}
\affil[3]{Prowler.io}
\begin{document}
	
	\maketitle
	
	\begin{abstract}
Optimizing nonlinear systems involving expensive computer experiments with regard to conflicting objectives is a common challenge. 
When the number of experiments is severely restricted and/or when the number of objectives increases, uncovering the whole set of Pareto optimal solutions is out of reach, even for surrogate-based approaches: the proposed solutions are sub-optimal or do not cover the front well. 
As non-compromising optimal solutions have usually little point in applications, this work restricts the search to solutions that are close to the Pareto front center. The article starts by characterizing this center, which is defined for any type of front.
Next, a Bayesian multi-objective optimization method for directing the search towards it is proposed. 
Targeting a subset of the Pareto front allows an improved optimality of the solutions and a better coverage of this zone, which is our main concern. A criterion for detecting convergence to the center is described. If the criterion is triggered, a widened central part of the Pareto front is targeted such that sufficiently accurate convergence to it is forecasted within the remaining budget. Numerical experiments show how the resulting algorithm, C-EHI, better locates the central part of the Pareto front when compared to state-of-the-art Bayesian algorithms.
	\end{abstract}
	
	\textbf{Keywords: }Bayesian Optimization, Computer Experiments, Multi-Objective Optimization
	
	\section{Introduction}
	Over the last decades, computer codes have been widely employed for optimal design.
	Practitioners measure the worth of a design with several criteria, which corresponds to a \emph{multi-objective optimization} problem,
	\begin{equation}\underset{\x \in X}{\min}(f_1(\x) ,\dotsc, f_m(\x))\label{eq:moo}\end{equation}
	where $X\subset\R^d$ is the parameter space, and $f_j(\cdot),~ j=1,\dotsc,m$ are the $m$ objective functions. Since these goals are generally competing, there does not exist a single solution $\x^*$ minimizing every function in (\ref{eq:moo}), but several trade-off solutions that are mutually \emph{non-dominated} (ND). These solutions (or designs) form the Pareto set $\PX$, whose image corresponds to the Pareto front $\PY$. 
Elements and methods of classical multi-objective optimization can be found in \cite{sawaragi1985theory,miettinen_book,marler2004survey}.
	
	Often, the $f_j$'s are outputs of a computationally expensive computer code (several hours to days for one evaluation), so that only a small number of experiments can be carried out. Under this restriction, Bayesian optimization methods \cite{ExpectedImprovement,EGO} have proven their effectiveness in single objective problems. These techniques use a surrogate – generally a Gaussian Process (GP) \cite{GPML,stein2012interpolation} – of the true function to locate the optimum. Extensions of Bayesian optimization to multi-objective cases have also been proposed, see \cite{TheseBinois,Parego,Keane,EHI,EMI,SMS,SUR}.
	In the case of very narrow budgets (about a hundred evaluations), obtaining an accurate approximation of the Pareto front remains out of reach, even for Bayesian approaches. This issue gets worse with increasing number of criteria. The article provides illustrations of this phenomenon in Section \ref{sec:results}.
	Looking for the entire front can anyway seem useless as the Pareto set will contain many irrelevant solutions from an end-user’s point of view.
	
	In this paper, instead of trying to approximate the entire front, we search for a well-chosen part of it. Without any specific information about the preferences of the decision maker, we assume that well-balanced solutions are the most interesting ones. By specifically targeting them, we argue that convergence should be enhanced there.
	
	Restricting the search to parts of the objective space is a common practice in multi-objective optimization. Preference-based methods incorporate user-supplied information to guide the search \cite{fonseca1998multiobjective,book_mcdm,triantaphyllou2000multi,branke2008multiobjective,Pref1}. The preference can be expressed either as an aggregation of the objectives (e.g., \cite{bowman1976relationship,miettinen_book,zhang2007moea,marler2010weighted}) or as an aspiration level (also known as reference point) to be attained or improved upon (e.g., \cite{wierzbicki1980use,book_mcdm_refpoint,RNSGA}).
More recently, preferences have also been included in Bayesian multi-objective optimization. A more detailed review of related works is given in Section \ref{sec:ehi_refpoint}.

Contrarily to existing multi-objective optimization techniques which guide the search using externally supplied information, in the current article the preference region is defined through the Pareto front center and is automatically determined by processing the GPs. 
This is the first contribution of this work.
The other contributions include the definition of a criterion for targeting specific parts of the Pareto front and the management of this preference region according to the remaining computational budget.
		
	An overview of the proposed method, which we name the C-EHI algorithm (for Centered Expected Hypervolume Improvement), is sketched in Figure \ref{fig:resume_algo}. It uses the concept of Pareto front center that is defined in  Section \ref{sec:center}. 
	C-EHI iterations are made of three steps. First, an estimation of the Pareto front center is carried out, as described in Section \ref{sec:center} and sketched in Figure \ref{fig:resume_algo}a. Second, the estimated center allows to target well-balanced parts of the Pareto front by a modification of the EHI criterion (cf. Section \ref{sec:target}). Figure \ref{fig:resume_algo}b illustrates the idea. 
	Third, to avoid wasting computations once the center is sufficiently well located, the part of the Pareto front that is searched for is broadened in accordance with the remaining budget. To this aim, a criterion to test convergence to the center is introduced in Section \ref{sec:conv}. 
	When triggered (see Figure \ref{fig:resume_algo}c), a new type of iteration starts until the budget is exhausted (see Figure \ref{fig:resume_algo}d). 
	Section \ref{sec:seconde_phase} explains how the new goals are determined.
	
	\begin{figure}[h!]
		\centering
		\includegraphics[width=0.5\textwidth]{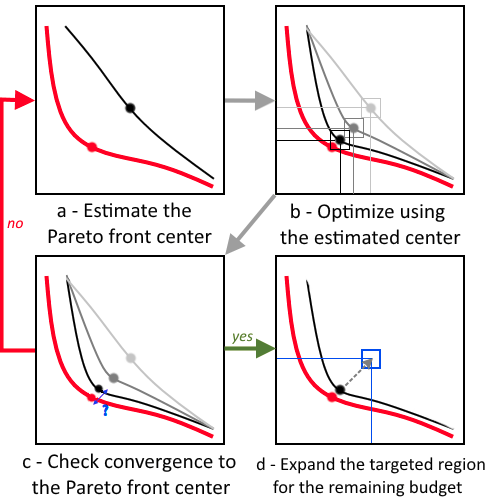}
		\caption{Sketch of the proposed C-EHI algorithm for targeting equilibrated solutions. The Pareto front center properties (a) are discussed in Section~\ref{sec:center}; How to guide the optimization with the center (b) is the topic of Section~\ref{sec:target}; Section~\ref{sec:conv} details how convergence to the Pareto front center is tested (c); How to widen the search within the remaining budget (d), is presented in Section~\ref{sec:seconde_phase}.
}
		\label{fig:resume_algo}
	\end{figure} 
	
The methodology is first tested the popular ZDT1 \cite{zdt2000a} and P1 \cite{TheseParr} functions and then on a benchmark built from real-world airfoil aerodynamic data. The airfoil benchmark has variables in dimension $d=3$, 8 and 22 that represent CAD parameters, and 2 to 4 aerodynamic objectives (lift and drag at different airfoil angles). The results are presented in Section \ref{sec:results}. 
The default test case that illustrates the algorithm concepts before numerical testing (Figures~\ref{fig:center_estimation} to \ref{fig:apres_second_phase}) is the airfoil problem with 2 objectives and 8 variables.
	
	\section{A brief review of Bayesian multi-objective optimization}
	\label{sec:review_bayesian_optim}
	
	\subsection{Bayesian optimization}
	\label{sec:bayesian_optimization}	
	Bayesian optimization techniques \cite{ExpectedImprovement} have become popular to tackle single-objective optimization problems within a limited number of iterations. 
	These methods make use of Bayes rule: a prior distribution, usually a GP, is placed over $f$ and is enhanced by observations to derive the posterior distribution. 
	Denoting $Z(\cdot)$ the GP and  $\mathcal A_n=\{(\x^1,y_1),\dotsc,(\x^n,y_n)\}=\{\mathbb X,\mathbb Y\}$ the observational event, the posterior GP conditioned on $\mathcal A_n$ has a known Gaussian distribution: \[\forall\x\in X~,~ Y(\x):=[Z(\x)\vert\mathcal A_n]\sim\mathcal N(\widehat{y}(\x),s^2(\x))\]
	where 
	\[\widehat{y}(\x)=\widehat{\mu}+k(\x,\mathbb X)K^{-1}(\mathbb Y-\mathbf1\widehat{\mu})\] is the conditional mean function (a.k.a., the kriging mean predictor) \cite{sacks1989design,GPML,stein2012interpolation,santner2013design,cressie2015statistics} 
	and \[s^2(\x)=c(\x,\x)\] is the conditional variance, obtained from the conditional covariance function 
	\[c(\x,\x')=\widehat{\sigma^2}\left(1-k(\x,\mathbb X)K^{-1}k(\mathbb X,\x')+\frac{(1-\mathbf1^\top K^{-1}k(\x,\mathbb X))(1-\mathbf1^\top K^{-1}k(\x',\mathbb X))}{\mathbf1^\top K^{-1}\mathbf1}\right)~.\]
	$K_{ij}=k(\x_i,\x_j)$ is the covariance matrix with $k(\cdot,\cdot)$ the covariance function (or kernel). $\widehat{\mu}$, $\widehat{\sigma^2}$ are the estimated mean and variance of the GP. 
	$k(\cdot,\cdot)$ is typically chosen from a parametric family and its parameters are estimated along with $\widehat{\mu}$ and $\widehat{\sigma^2}$ by likelihood maximization.
	Further discussion about these parameters and their estimation can be found e.g. in \cite{GPML,roustant2012dicekriging}.
	
	Given a set of inputs $\x^{n+1},\dotsc,\x^{n+s}\in X$, the posterior distribution of $Z(\cdot)$ at these points is a Gaussian vector
	\[\begin{pmatrix}Y(\x^{n+1})\\\cdots\\Y(\x^{n+s})\end{pmatrix} ~\sim~ \mathcal N\left(\begin{pmatrix}\widehat{y}(\x^{n+1})\\\cdots\\\widehat{y}(\x^{n+s})\end{pmatrix},\Gamma\right),\] 
	with $\Gamma_{a,b}=c(\x^{n+a},\x^{n+b})$. 
	It is possible to simulate plausible values of $f(\cdot)$ by sampling $n_{sim}$ GPs $\widetilde{Y}^{(k)}(\cdot), k=1,\dotsc,n_{sim}$ at $\x^{n+1},\dotsc,\x^{n+s}\in X$.
	GP simulations require the Cholesky decomposition of the $s\times s$ matrix $\Gamma$ and are therefore only tractable for moderate sample sizes.
	
	For optimization purposes, new data points $(\x^{n+1},f(\x^{n+1}))$ are sequentially determined through the maximization of an \emph{acquisition function} (or infill criterion)
	until a limiting number of function evaluations, the $budget$, is attained. 
	Acquisition functions use the posterior distribution $Y(\cdot)$. 
	A commonly used infill criterion is the Expected Improvement (EI) \cite{ExpectedImprovement,jones2001}, 
	which balances minimization of the GP mean (``exploitation'' of past information) and maximization of the GP variance (``exploration'' of new regions of the design space) in order to both search for the minimum of $f(\cdot)$ and improve the GP accuracy.
	The Expected Improvement below a threshold $T$ is defined as 
	\begin{equation}
	\text{EI}(\x;T):=\E[(T-Z(\x))_+\vert\mathcal A_n]
	\label{eq:EI0}
	\end{equation}
	which is computable	in closed-form: 
	\begin{equation}
	\text{EI}(\x;T)=(T-\widehat{y}(\x))\phi\left(\frac{T-\widehat{y}(\x)}{s(\x)}\right)+s(\x)\varphi\left(\frac{T-\widehat{y}(\x)}{s(\x)}\right)
	\label{eq:EI}
	\end{equation}
	$\varphi$ and $\phi$ correspond to the probability density function and to the cumulative distribution function of a standard normal random variable, respectively. 
	$T$ is generally set as the best value observed so far, $f_{min}:=\min(y_1,\dotsc,y_n)$.
	EGO (Efficient Global Optimization, \cite{EGO}) iteratively evaluates the function to optimize at the EI maximizer (Figure \ref{fig:bayesian_optimization}) before updating the GP. During the update step, 
	the covariance parameters are re-estimated and the additional evaluation taken into account, which modifies the conditional mean and covariance.
	At the end of the procedure, the best observed design and its performance, $\x^*:=\underset{i=1,\dotsc,budget}{\arg\min}f(\x^i)$ $,y^*:=f(\x^*)$, are returned. 
	
	\begin{figure}[h!]
		\centering
		\includegraphics[width=0.5\textwidth]{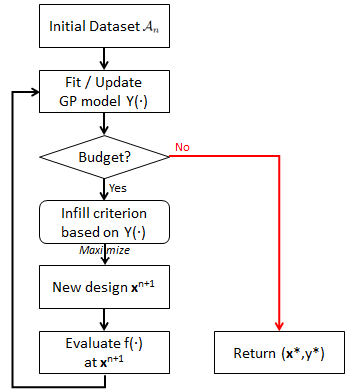}
		\caption{Outline of a Bayesian optimization algorithm}
		\label{fig:bayesian_optimization}
	\end{figure}
	
	\subsection{Extension to the multi-objective case}
	\label{sec:multiobj}
	In multi-objective optimization there is a (possibly infinite) set of solutions to (\ref{eq:moo}) called the Pareto set $\PX$. Designs in $\PX$ correspond to an optimal compromise in the sense that it is not possible to find a competitor being better in all objectives simultaneously.
	
	Mathematically, $\PX=\{\x\in X:\nexists \x'\in X, \mathbf f(\x')\preceq\mathbf f(\x)\}$ where $\preceq$ stands for weak or strong Pareto domination in $\R^m$ as $\mathbf f(\x):=(f_1(\x),\dotsc,f_m(\x))^\top$ is no longer a scalar but an $m$-dimensional objective vector. The Pareto front $\PY$ is the image of the Pareto set and contains only non-dominated solutions: $\PY=\mathbf f(\PX)=\{\y\in Y:\nexists \y'\in Y, \y'\preceq\y\}$, with $Y=\mathbf f(X)\subset\R^m$ the image of the design space through the objectives.
	Multi-objective optimizers aim at finding an approximation front built upon past observations $\widehat\PY=\{\y\in\mathbb Y:\nexists\y'\in\mathbb Y,\y'\preceq\y\}$ to $\PY$, with some properties such as convergence or diversity. Evolutionary Multi-Objective Optimization Algorithms (EMOA) have proven their benefits for solving Multi-Objective Problems \cite{LivreDeb}. They are however, in the absence of a model to the objective functions, not adapted to expensive objectives (this will be observed in Section \ref{sec-test_results}).
	
	Multi-objective extensions to EGO do exist. These Bayesian approaches generally model the objective functions $f_j(\cdot)$ as independent GP's $Y_j(\cdot)$. Svensson \cite{TheseSvensson} has considered the GP's to be (negatively) correlated in a bi-objective case, without noticing significant benefits. The GP framework enables both the prediction of the objective functions, $\widehat{y_j}(\x)$, and the quantification of the uncertainties, $s_j^2(\x), \forall\x\in X$. 
As in the single-objective case, an acquisition function is used for determining $\x^{t+1}\in X$, the most promising next iterate to be evaluated. In some approaches, the $m$ surrogates are aggregated or use an aggregated form of EI \cite{Parego,EIEMO,WSEI,TAEI}. Other methods use a multi-objective infill criterion for taking into account all the metamodels simultaneously \cite{ICMOO}. 
The Expected Hypervolume Improvement (EHI) \cite{EHI,EHI2,EHI3}, the EMI \cite{TheseSvensson,EMI}, and Keane's Euclidean-based Improvement \cite{Keane} are three multi-objective infill criteria that reduce to EI when facing a single objective. SMS \cite{SMS} is based on a lower confidence bound strategy, and SUR \cite{SUR} considers the stepwise uncertainty reduction on the Pareto front. These infill criteria aim at providing new non-dominated points and eventually approximating the Pareto front in its entirety.
All these Bayesian multi-objective methods conform to the outline of Figure \ref{fig:bayesian_optimization}, excepted that $m$ surrogates $Y_1(\cdot),\dotsc,Y_m(\cdot)$ and $m$ objective functions are now considered, and that an empirical Pareto set $\widehat{\PX}$ and Pareto front $\widehat{\PY}$ are returned \cite{liang2017pareto,zhang2019multi}.
	
	\subsubsection*{EHI: a multi-objective optimization infill criterion}
	The EHI (Expected Hypervolume Improvement) \cite{EHI,EHI2,EHI3} is one of the most competitive \cite{yang2015expected} multi-objective infill criterion. It rewards the expected growth of the hypervolume indicator \cite{TheseZitzler}, corresponding to the hypervolume dominated by the approximation front up to a reference point $\RR$ (see Fig.~\ref{fig:hypervolume}), when adding a new observation $\x$. More precisely, the hypervolume indicator of a set $\mathcal A$ is
	
	\[H(\mathcal A;\RR)=\bigcup_{\y\in\mathcal A}\int_{\y\preceq\mathbf z\preceq\RR}d\mathbf z=\Lambda\left(\bigcup_{\mathbf y\in\mathcal A}\{\mathbf z : \mathbf y\preceq\mathbf z\preceq\mathbf R\}\right)\]
	where $\Lambda$ is the Lebesgue measure on $\R^m$. The hypervolume improvement induced by $\y\in\R^m$ is $I_H(\y;\RR)=H(\mathcal A\cup\{\y\};\RR)-H(\mathcal A;\RR)$. In particular, if $\mathcal A\preceq \y$ (in the sense that $\exists\mathbf a\in\mathcal A:\mathbf a\preceq\y$), or if $\y\npreceq\RR$, $I_H(\y;\RR)=0$. For a design $\x$, EHI$(\x;\RR)$ is
	\begin{equation}
	\text{EHI}(\x;\RR) := \E[I_H(\Y(\x);\RR)]
	\label{eq:EHI}
	\end{equation}
	\begin{figure}[!ht]
		\centering
		\includegraphics[width=0.4\textwidth]{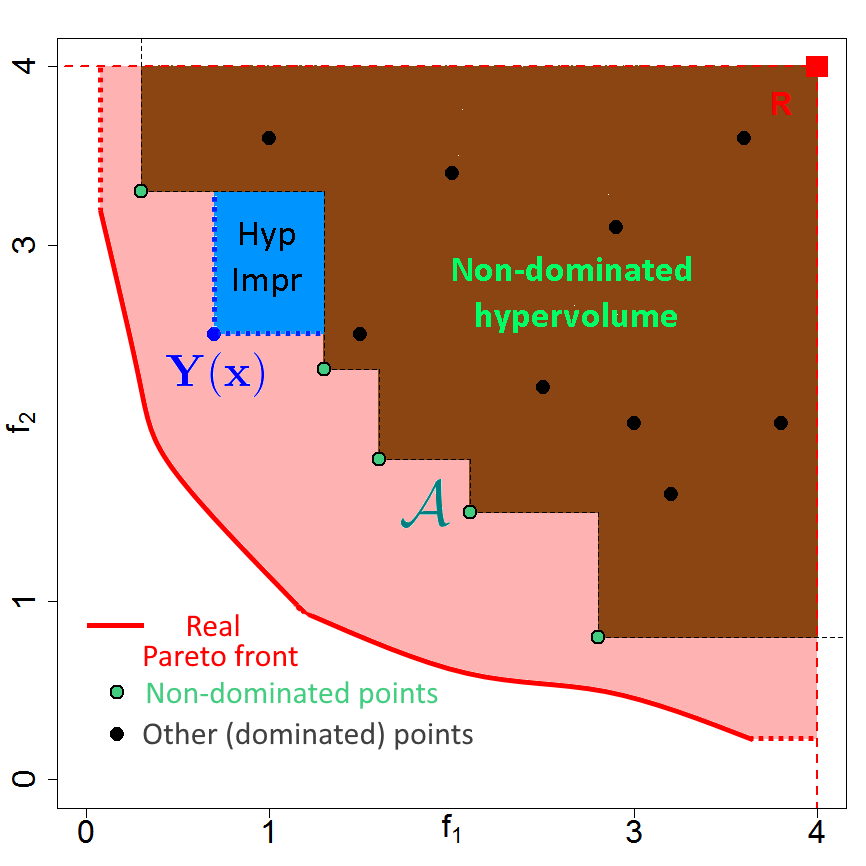}
		\caption{The hypervolume indicator of the non-dominated set (green points) corresponds to the area dominated by it, up to $\RR$ (in brown). The blue rectangle is the hypervolume improvement brought by $\mathbf{Y}(\mathbf{x})$, $I_H(\Y(\x);\RR)$.
		}
		\label{fig:hypervolume}
	\end{figure}
	
	The hypervolume indicator being a refinement of the Pareto dominance \cite{EHI,ICMOO} ($\mathcal{A}\preceq\mathcal{B}\Rightarrow H(\mathcal{A};\RR)>H(\mathcal{B};\RR)$ for two non-dominated sets $\mathcal{A}$ and $\mathcal{B}$, and any reference point $\RR$), and as the hypervolume Improvement induced by a dominated solution equals zero, EHI maximization intrinsically leads to Pareto optimality. It also favors well-spread solutions, as the hypervolume increase is small when adding a new value close to an already observed one in the objective space \cite{auger2009theory,auger2012hypervolume}.
	
	Several drawbacks should be mentioned. 
	First, EHI requires the computation of $m$-dimensional hypervolumes. Even though the development of efficient algorithms for computing the criterion to temper its computational burden is an active field of research \cite{beume2009complexity,while2012fast,chan2013klee,couckuyt2014fast,russo2014quick,lacour2017box,jaszkiewicz2018improved} with two \cite{EHI,emmerich2016multicriteria} and three objectives \cite{yang2017computing}, the complexity grows exponentially with the number of objectives and non-dominated points. When $m>3$, expensive Monte-Carlo estimations are required to compute the EHI. An analytic expression of its gradient has been discovered recently in the bi-objective case \cite{yang2019multi}.
Second, the hypervolume indicator is less relevant for many-objective optimization, as the amount of non-dominated solutions rises with $m$, and more and more solutions contribute to the growth of the non-dominated hypervolume; in a many-objective settings, this metric is less able to distinguish truly relevant from non-informative solutions. 
Last, the choice of the reference point $\RR$ is unclear and influences the optimization results, as will be discussed in Section \ref{sec:target}.
	
	\subsection{Past work on targeted multi-objective optimization}
	\label{sec:ehi_refpoint}
Targeting special parts of the objective space has been largely discussed within the multi-objective optimization literature, see for example \cite{Pref1} or \cite{Pref2} for a review. 
The benefits of targeting a part of the Pareto front instead of trying to unveil it entirely go beyond reflecting the user's preferences: as will be shown by the experiments of Section \ref{sec:results}, it allows an enhanced distribution of the proposed solutions within this area.
Preference-based optimization makes use of user-supplied information to guide the search towards specific parts of the Pareto front. The preference is typically expressed as desired objective values (i.e., reference or aspiration points, cf. \cite{book_mcdm_refpoint,wierzbicki1980use}) the distance to which is measured by a specific metric (e.g., $L_1$, $L_2$ or $L_\infty$ norms). Preference can appear as a ranking of solutions or of objectives via an aggregation function \cite{miettinen_book,bowman1976relationship}

Bayesian multi-objective optimization (see Section~\ref{sec:multiobj}) most often relies on the EHI infill criterion where the hypervolume is computed up to a reference point $\RR$.
$\RR$ has been originally seen as a second order hyperparameter with default values chosen so that all Pareto optimal points are valued in the EHI. 
For example, several studies (e.g. \cite{SMS}) suggest taking $\N+\mathbf 1$ ($\N$ being the Nadir point of the empirical Pareto front).
	
	Later, the effect of $\RR$ has received some attention. 
	Auger et al. \cite{auger2009theory,auger2012hypervolume} have theoretically and experimentally investigated the $\mu$-optimal distribution on the Pareto front induced by the choice of $\RR$. 
	Ishibuchi et al. \cite{REMOA} have noticed a variability in the solutions given by an EMO algorithm when $\RR$ changes. 
	Feliot \cite{TheseFeliot} has also observed that $\RR$ impacts the approximation front and recommends $\RR$ to be neither too far away nor too close to $\PY$. 
	By calculating EHI restricted to areas dominated by ``goal points'', Parr \cite{TheseParr} implicitly acted on $\RR$ and noticed fast convergence when the goal points were taken on $\widehat{\PY}$. In \cite{li2018modified}, a modification of the hypervolume improvement is proposed. It is a sum of EHI's over different non-dominated $\RR$'s which eases the computations when compared to EHI in a fashion similar to the Section \ref{sec:mEI}.
	
	Previous works in Bayesian Multi-Objective Optimization have also targeted particular areas of the objective space thanks to ad-hoc infill criteria. The Weighted Expected Hypervolume Improvement (WEHI) \cite{WHI,WHI2,WHI3,WHI4} is a variant of EHI that emphasizes given parts of the objective space through a user-defined weighting function. 
	In \cite{TEHI,TEHI2}, the Truncated EHI  criterion is studied where the Gaussian distribution is restricted to a user-supplied hyperbox in which new solutions are sought.
	
In the absence of explicitly provided user preferences, the algorithm proposed here targets a specific part of the Pareto front, its center, through a choice of $\RR$ that is no longer arbitrarily chosen. 
The center of the Pareto front is defined in the following section. 
Since it balances the objectives, the center is considered as a default preference.
	
\section{Center of the Pareto front: definition and estimation}
\label{sec:center}
There has been attempts to characterize parts of the Pareto front where objectives are ``visually'' equilibrated. In \cite{book_mcdm_refpoint}, the neutral solution is defined as the closest point in the objective space to the Ideal point in a (possibly weighted) $\mathcal L^p$ norm and is located ``somewhere in the middle'' of the Pareto front. The point of the Pareto front which minimizes the distance to the Ideal point is indeed a commonly preferred solution \cite{zeleny1976theory}. In \cite{buchanan03}, not only the closest to the Ideal point, but also the farthest solution to the Nadir point (see definitions hereafter) are brought out, in terms of a weighted Tchebycheff norm. Note that the weights depend on user-supplied aspiration points.
Other appealing points of the Pareto front are knee points as defined in \cite{branke2004finding}. They correspond to parts of the Pareto front where a small improvement in one objective goes with a large deterioration in at least one other objective, which makes such points stand out as kinks in the Pareto front. When the user's preferences are not known, the authors claim that knee points should be emphasized and propose methods for guiding the search towards them.

Continuing the same effort, we propose a definition of the Pareto front center that depends only on the geometry of the Pareto front.

\subsection{Definitions}

Before defining the center of a Pareto front, other concepts of multi-objective optimization have to be outlined.
	
	\begin{defn}
		The Ideal point $\I$ of a Pareto front $\PY$ is its component-wise minimum, $\I=(\underset{\y\in\PY}{\min}y_1,\dotsc,\underset{\y\in\PY}{\min}y_m)$.
	\end{defn}
		
	The Ideal point also corresponds to the vector composed of each objective function minimum.
	Obviously, there exists no $\y$ better in all objectives than the minimizer of objective $j$. As a consequence, the latter belongs to $\PY$ and
	$\underset{\y\in\PY}{\min}y_j=\underset{\y\in Y}{\min}~y_j$, $j=1,\dotsc,m$. $\I$ can therefore be alternatively defined as $(\underset{\x\in X}{\min}~f_1(\x),\dotsc,\underset{\x\in X}{\min}~f_m(\x))$. 
	The decomposition on each objective
	does not hold for the Nadir point, which depends on the Pareto front structure:	
	
	\begin{defn}\label{defnadir} The Nadir point $\N$ of a Pareto front $\PY$ is the component-wise maximum of the Pareto front, $\N=(\underset{\y\in\PY}{\max}y_1,\dotsc,\underset{\y\in\PY}{\max}y_m)$.
	\end{defn}
	
	$\I$ and $\N$ are virtual points, that is to say that there generally does not exist an $\x\in X$ such that $\mathbf f(\x)=\I$ or $\N$. They are bounding points for the Pareto front, as every $\y\in\PY$ will be contained in the hyperbox defined by these points.	
	
	\begin{defn}\label{defextremepoints}An extreme point for the $j$-th objective, $\pmb\nu^j$, is an $m$-dimensional vector that belongs to the Pareto front, $\pmb\nu^j\in \PY$, and such that $\nu^{j}_j=N_j$.
	The Nadir point can thus be rewritten as $\N=(\nu^1_1,\dotsc,\nu^m_m)$. A $j$-th extreme design point is $\pmb\xi^{j}\in X$ such that $\mathbf f(\pmb\xi^{j})=\pmb\nu^j$.
	\end{defn}
		
	In the following, extreme points of the approximation front $\widehat\PY$ are denoted by $\widehat{\pmb\nu}^{j}$, hence the Nadir of that empirical front is  $\pmb{\widehat{\nu}}=(\widehat{\nu}_1^1,\dotsc,\widehat{\nu}_m^m)$.
	Note that we will also introduce in Section~\ref{sec:estimation} the notation $\widehat{\N}$ to denote an estimator of the Nadir point. We can now define the center of a Pareto front:
	
	\begin{defn}The center of a Pareto front $\mathbf C$ is the closest point in Euclidean distance to $\PY$ on the Ideal-Nadir line $\L$.
	\end{defn}
	
	In the field of Game Theory, our definition of the center of a Pareto front corresponds to a particular case of the Kalai-Smorodinsky equilibrium \cite{KS,binois2019kalai}, taking the Nadir as disagreement point $\mathbf d \equiv \N$. 
	This equilibrium aims at equalizing the ratios of maximal gains of the players, which is the appealing property for the center of a Pareto front as an implicitly preferred point. Recently, it has been used for solving many-objective problems in a Bayesian setting \cite{binois2019kalai}.
In general, $\mathbf C$ is different from the neutral solution \cite{book_mcdm_refpoint} and from knee points \cite{branke2004finding}. They coincide in particular cases, e.g. a symmetric and convex front with scaled objectives and a non-weighted norm.
	
	In the case where the Pareto front is an $m$-dimensional continuous hypersurface, $\mathbf C$ corresponds to the intersection between $\PY$ and $\L$. 
	In a more general case, e.g. if the Pareto front is not continuous, or contains some lower dimensional hypersurfaces, $\mathbf C$ is the projection of the closest point belonging to $\PY$ on $\L$. 
	The computation of this point remains cheap even for a large $m$ in comparison with alternative definitions involving e.g. the computation of a barycenter in high-dimensional spaces. Some examples for two-dimensional fronts are shown in Figure~\ref{fig:exemples_centre}. The center of the Pareto front has also some nice properties that are detailed in following section. The center exists even if $\PY$ is discontinuous (top right front) or convoluted.
	
	\begin{figure}[!ht]
		\centering
		\begin{minipage}[c]{.46\linewidth}
			\includegraphics[width=\textwidth]{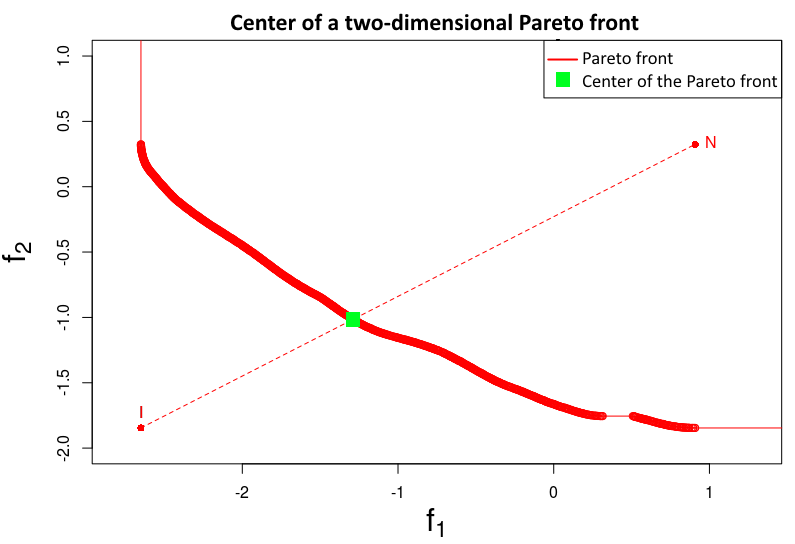}
		\end{minipage}
		\begin{minipage}[c]{.46\linewidth}
			\includegraphics[width=\textwidth]{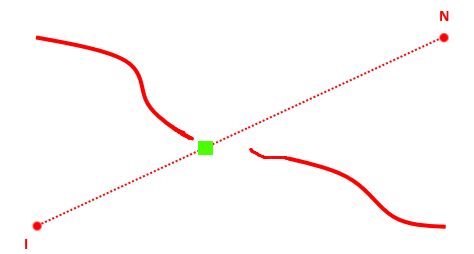}
		\end{minipage}
		\begin{minipage}[c]{.46\linewidth}
			\includegraphics[width=\textwidth]{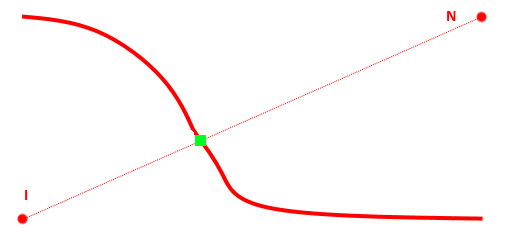}
		\end{minipage}
		\begin{minipage}[c]{.46\linewidth}
			\centering
			\includegraphics[width=0.6\textwidth]{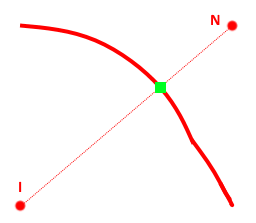}
		\end{minipage}
		\caption{Examples of two-dimensional Pareto fronts and their center. Notice that on the bottom, the left and the right fronts are the same, except that the left is substantially extended in the $x$ direction. However, the center has been only slightly modified.}
		\label{fig:exemples_centre}
	\end{figure}
	
	\subsection{Properties}
	
	\subsubsection*{Invariance to a linear scaling of the objectives}
	The Kalai-Smordinsky solution has been proved to verify a couple of properties, such as invariance to linear scaling\footnote{
		in Game Theory, given a feasible agreement set $F\subset\R^m$ ($Y$ in our context) and a disagreement point $\mathbf d\in\R^m$ ($\mathbf N$ here), a KS solution $f\in F$ (the center $\mathbf C$) satisfies the four following requirements: 
		Pareto optimality, symmetry with respect to the objectives, invariance to affine transformations (proven in Proposition~\ref{prop:intersection}) and, contrarily to a Nash solution, 
		monotonicity with respect to the number of possible agreements in $F$.} \cite{KS},
	which hold in our case.
	We extend here the linear invariance to the case where there is no intersection between $\PY$ and $\L$.
	
	\begin{prop}[Center invariance to linear scaling, intersection case]
		When $\PY$ intersects $\L$, the intersection is unique and is the center of the Pareto front. Furthermore, in that case, the center is invariant after a linear scaling $S:\R^m\rightarrow\R^m$ of the objectives: $S(\mathbf C(\PY))=\mathbf C(S(\PY))$.
		\label{prop:intersection}
	\end{prop}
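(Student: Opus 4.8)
The plan is to reduce the Euclidean-distance definition of $\mathbf C$ to a purely incidence-geometric condition in the intersection case, after which the scaling invariance becomes almost formal. The whole argument rests on the observation that the Ideal--Nadir line points in a coordinatewise nondecreasing direction, so that its points are totally ordered by Pareto domination.

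I would first parameterize $\L$ as $\y(t) = \I + t(\N - \I)$ for $t \in \R$. Since $\I$ and $\N$ are the coordinatewise minimum and maximum over $\PY$, the direction $\N - \I$ has all components nonnegative, and the index set $J^+ = \{j : N_j > I_j\}$ is nonempty for any nondegenerate front. Hence for $t_1 < t_2$ one has $y_j(t_1) \le y_j(t_2)$ for every $j$, with strict inequality on $J^+$, i.e. $\y(t_1)$ dominates $\y(t_2)$. Two distinct points of $\L$ are therefore always comparable, so at most one can lie on the mutually nondominated front $\PY$; under the hypothesis $\PY \cap \L \ne \emptyset$ this single point $\y^\star$ is the unique intersection. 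To identify $\y^\star$ with the center, I would note that the distance $d(\y,\PY) = \min_{\y' \in \PY}\|\y - \y'\|$ is nonnegative on $\L$ and (taking $\PY$ closed) vanishes exactly on $\PY \cap \L$; thus $\y^\star$ attains the global minimum value $0$ and, by uniqueness, is the only minimizer, so $\mathbf C(\PY) = \y^\star$.

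For the invariance, I would take $S$ to act coordinatewise as $S_j(y_j) = a_j y_j + b_j$ with every $a_j > 0$, these being exactly the maps preserving each coordinate order; $S$ is then a bijection carrying lines to lines and sending the Pareto front $\PY$ to the Pareto front $S(\PY)$. Positivity of the $a_j$ lets the coordinatewise min and max commute with $S$, giving $\I(S(\PY)) = S(\I(\PY))$ and $\N(S(\PY)) = S(\N(\PY))$, whence $S$ maps $\L(\PY)$ onto $\L(S(\PY))$. Because $S$ is bijective it commutes with intersection, so
\[ S(\mathbf C(\PY)) = S\bigl(\PY \cap \L(\PY)\bigr) = S(\PY) \cap \L(S(\PY)). \]
The right-hand side is nonempty, so $S(\PY)$ is again in the intersection case, and the first part identifies this intersection with $\mathbf C(S(\PY))$; this gives $S(\mathbf C(\PY)) = \mathbf C(S(\PY))$.

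The main obstacle is conceptual rather than computational: one must recognize that the Euclidean distance used to define $\mathbf C$ is \emph{not} itself preserved by the scaling $S$, yet in the intersection case the definition collapses to the incidence condition $\PY \cap \L$, which \emph{is} preserved. The only technical care needed is in the monotonicity step, where degenerate coordinates $N_j = I_j$ must be allowed (handled by requiring only $J^+ \ne \emptyset$), and in pinning down the admissible class of $S$: positivity of the diagonal entries is essential, since without it $S$ would neither preserve Pareto domination nor transform $\I$ and $\N$ covariantly.
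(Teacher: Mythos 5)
Your proposal is correct and follows essentially the same route as the paper: uniqueness of the intersection via the total ordering of points on $\L$ by Pareto domination, identification with the center because the intersection realizes distance zero, and invariance because a positive diagonal affine map preserves domination and hence carries $\I$, $\N$, $\L$ and $\PY$ covariantly so that the intersection is preserved. Your explicit handling of degenerate coordinates ($N_j = I_j$) is a minor refinement over the paper's appeal to the non-degenerate case, but the argument is otherwise the same.
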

	
	\begin{proof}
		First, it is clear that if $\PY$ intersects $\L$, the intersection is unique. Indeed, as in non degenerated cases $\I\prec\N$, $t\I+(1-t)\N\prec t'\I+(1-t')\N\Leftrightarrow t>t'$. Two points on $\mathcal L$ are different as long as $t\ne t'$. $\PY$ being only composed of non-dominated points it is impossible to find two different points $t\I+(1-t)\N$ and $t'\I+(1-t')\N$ that belong simultaneously to $\PY$. Obviously, as it lies on $\mathcal L$, $\nexists\y\in\PY$ that is closer to it.
		
		Let $\mathbf C$ be this intersection. $S$ being a linear scaling, it can be expressed in the form $S(\y)=\mathbf A\y+\mathbf b$ with $\mathbf A$ an $m\times m$ diagonal matrix with entries $a_i>0$, and $\mathbf b\in\R^m$. Applying this scaling to the objective space modifies $\mathbf C$ to $\mathbf C'=\mathbf A\mathbf C+\mathbf b$, $\I$ to $\I'=\mathbf A\I+\mathbf b$ and $\N$ to $\N'=\mathbf A\N+\mathbf b$. Because the scaling preserves orderings of the objectives, $\mathbf C'$ remains non-dominated, and $\I'$ and $\N'$ remain the Ideal point and the Nadir point of $\PY$ in the scaled objective space. As $\mathbf C$ belongs to $\mathcal L$ it writes $t\I+(1-t)\N$ for one $t\in[0,1]$, and therefore	
		\begin{align*}\mathbf C'&=\mathbf A(t\I+(1-t)\N)+\mathbf b\\&=t\mathbf{AI}+(1-t)\mathbf{AN}+\mathbf b\\&=t(\mathbf{AI}+\mathbf b)+(1-t)(\mathbf{AN}+\mathbf b)\\&=t\I'+(1-t)\N'\end{align*}
		
		$\mathbf C'$ is thus the unique point belonging to both the Pareto front and to the Ideal-Nadir line in the transformed objective space: it is the center in the scaled objective space.
	\end{proof}
	
	In the bi-objective case ($m=2$), we also show that a linear scaling applied to the objective space does not change the order of Euclidean distances to $\L$. When $\PY\cap\L=\emptyset$, the closest $\y\in\PY$ to $\L$, whose projection on $\L$ produces ${\mathbf C}$, remains the closest after any linear scaling of the objective space.
	
	\begin{prop}[Center invariance to linear scaling, 2D case]
		Let $\y,\y'\in Y\subset\R^2$, and $\L$ be a line in $\R^2$ passing through the two points $\I$ and $\N$
		. Let $\Pi_\L$ be the projection on $\L$. If $\Vert\y-\Pi_\L(\y)\Vert\le\Vert\y'-\Pi_\L(\y')\Vert$, then $\y$ remains closer to $\L$ than $\y'$ after having applied a linear scaling $S:\R^2\rightarrow\R^2$ to $Y$.
		\label{lemma}
	\end{prop}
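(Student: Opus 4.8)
The plan is to reduce everything to the closed-form expression for the Euclidean distance from a point to a line in the plane, and then track precisely how each factor in that expression transforms under the affine scaling $S(\y)=\mathbf A\y+\mathbf b$ with $\mathbf A=\mathrm{diag}(a_1,a_2)$, $a_1,a_2>0$. Writing $\Delta:=\N-\I=(\Delta_1,\Delta_2)$ for the direction vector of $\L$ (with $\Delta_1,\Delta_2>0$ since $\I\prec\N$), the distance of a point $\y$ to $\L$ is $d(\y,\L)=\Vert\y-\Pi_\L(\y)\Vert=\frac{|(y_1-I_1)\Delta_2-(y_2-I_2)\Delta_1|}{\sqrt{\Delta_1^2+\Delta_2^2}}$, whose numerator is the magnitude of the planar cross product of $\y-\I$ with $\Delta$.

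First I would recall, as in Proposition~\ref{prop:intersection}, that $S$ sends $\I$ and $\N$ to the Ideal and Nadir of the scaled front, so the relevant line in the scaled space is $S(\L)$, the line through $S(\I)$ and $S(\N)$, with direction $\Delta'=(a_1\Delta_1,a_2\Delta_2)$. Next I would substitute $S(\y)-S(\I)=(a_1(y_1-I_1),a_2(y_2-I_2))$ into the distance formula. The key computation is that the cross product factorizes, $(a_1(y_1-I_1))(a_2\Delta_2)-(a_2(y_2-I_2))(a_1\Delta_1)=a_1a_2\big[(y_1-I_1)\Delta_2-(y_2-I_2)\Delta_1\big]$, so the numerator of $d(S(\y),S(\L))$ is the old numerator multiplied by the constant $a_1a_2$, while the denominator $\sqrt{a_1^2\Delta_1^2+a_2^2\Delta_2^2}$ depends only on the line.

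The decisive observation is then that both the factor $a_1a_2>0$ and the new denominator are independent of which point is considered, hence common to $\y$ and $\y'$. Forming the ratio $d(S(\y),S(\L))/d(S(\y'),S(\L))$, these common factors cancel and leave exactly $d(\y,\L)/d(\y',\L)$; therefore $d(\y,\L)\le d(\y',\L)$ is equivalent to $d(S(\y),S(\L))\le d(S(\y'),S(\L))$, which is the claim (and in particular the $\y$ whose projection yields $\mathbf C$ stays closest).

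I do not expect a serious obstacle: the argument is a direct computation once the cross-product form of the distance is used. The only point requiring care, and the place where the two-dimensional hypothesis is genuinely used, is the factorization of the cross product into the common scalar $a_1a_2$ times the unscaled cross product. This clean separation of a point-dependent part from a point-independent one is precisely what lacks such a simple form in higher dimensions, which is why the statement is restricted to $m=2$.
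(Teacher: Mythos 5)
Your proof is correct and is essentially the paper's argument in explicit coordinates: the paper observes that the areas of the triangles $\I\y\N$ and $\I\y'\N$ are both multiplied by $\alpha\beta$ under the scaling while the common base $\L$ is shared, and since the distance to $\L$ is twice the area over the base length, the ordering is preserved — your cross-product numerator is exactly twice that signed area and your common denominator is the scaled base length. No meaningful difference in substance, only in presentation.
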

	
	\begin{proof}
		Let $A$ be the area of the $\I\y\N$ triangle and $A'$ be the area of $\I\y'\N$. Applying a linear scaling $S(\y)=\mathbf A\y+\mathbf b$ with $\mathbf A=\begin{pmatrix}\alpha & 0\\0 & \beta\end{pmatrix}$, $\alpha, \beta > 0$ to $Y$ will modify the areas $A$ and $A'$ by the same factor $\alpha\beta$. Thus, $\Vert S(\y)-\Pi_{S(\L)}(S(\y))\Vert\le\Vert S(\y')-\Pi_{S(\L)}(S(\y'))\Vert$ still holds: in the transformed subspace, $\y$ remains closer to $\L$ than $\y'$.
	\end{proof}
	
	This property is of interest as the solutions in the approximation front $\widehat{\PY}$ will generally not belong to $\L$. Applying a linear scaling to $Y$ in a bi-objective case does not change the solution in $\widehat{\PY}$ that generates $\widehat{\mathbf C}$. However, exceptions may occur for $m\ge3$ as the closest $\y\in\PY$ to $\L$ may not remain the same after a particular affine transformation of the objectives, as seen in the following example:
	
	Let us consider the case of an approximation front composed of the five following non-dominated points (in rows), in a three-dimensional space: $\mathbf P=\begin{bmatrix}1 & 0 & 0\\0 & 1 & 0\\0 & 0 & 1\\0.5 & 0.5 & 0.6\\0.5 & 0.55 & 0.5\end{bmatrix}$. The Ideal point is $\I=(0,0,0)^\top $ and the Nadir point $\N=(1,1,1)^\top $. The squared Euclidean distance to $\L$ of these 5 points equals respectively 2/3, 2/3, 2/3, 0.02/3 and 0.005/3, hence $\mathbf P^5=(0.5,0.55,0.5)^\top $ is the closest point to $\L$. Let us now apply a linear scaling $S(\y)=\mathbf A\y$ with $\mathbf A=\begin{pmatrix}3 & 0 & 0\\0 & 3 & 0\\0 & 0 & 1\end{pmatrix}$. In the modified objective space, we now have $\mathbf{\widetilde{P}}=\begin{bmatrix}3 & 0 & 0\\0 & 3 & 0\\0 & 0 & 1\\1.5 & 1.5 & 0.6\\1.5 & 1.65 & 0.5\end{bmatrix}$, $\widetilde\I=(0,0,0)^\top $ and $\widetilde\N=(3,3,1)^\top $. The squared distances to $\widetilde\L$ after scaling are now respectively 1710/361, 1710/361, 342/361, 3.42/361, 4.275/361. After scaling, the fourth point becomes the closest to the line. As the projection of the latter on $\L$ is different from the projection of the fifth point, the center of the Pareto front will change after this scaling.
	
	\subsubsection*{Low sensitivity to Ideal and Nadir variations}	
	Another positive property is the low sensitivity of $\mathbf C$ with regard to extreme points. This property is appealing because the Ideal and the Nadir will be estimated with errors at the beginning of the search (cf. Section \ref{sec:estimation}) and having a stable target $\mathbf C$ prevents dispersing search efforts.
	
	Under mild assumptions, the following Proposition expresses the low sensitivity in terms of the norm of the gradient of $\mathbf C$ with respect to $\mathbf N$. Before, Lemma \ref{lemme:vecteur_normal} gives a condition on the normal vector to the Pareto front that will be needed to prove the Proposition.
	
	\begin{lemma}
		\label{lemme:vecteur_normal}
		Let $\y^*\in\R^m$ be a Pareto optimal solution, and the Pareto front be continuous and differentiable at $\y^*$ with $\mathbf d\in\R^m$ the normal vector to the Pareto front at $\y^*$. Then all components of $\mathbf d$ have the same sign.
	\end{lemma}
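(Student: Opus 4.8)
The plan is to translate Pareto optimality into a geometric constraint on the tangent hyperplane at $\y^*$ and then read off the sign pattern of its normal $\mathbf d$. Write $T=\{\mathbf v\in\R^m:\mathbf d\cdot\mathbf v=0\}$ for the tangent space of the front at $\y^*$ (well defined and of dimension $m-1$, since by hypothesis the front is a differentiable hypersurface there and $\mathbf d\ne\mathbf 0$). The key claim I would establish first is that $T$ contains no vector all of whose components are strictly positive, and none all of whose components are strictly negative; equivalently, $T$ misses the open orthants $\{\mathbf v:\mathbf v\succ\mathbf 0\}$ and $\{\mathbf v:\mathbf v\prec\mathbf 0\}$.

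To prove this claim, suppose $\mathbf v\in T$ with $\mathbf v\prec\mathbf 0$ and pick a differentiable curve $\gamma$ on the front with $\gamma(0)=\y^*$ and $\gamma'(0)=\mathbf v$ (such a curve exists by the definition of the tangent space of a differentiable manifold). Since every component of $\gamma'(0)=\mathbf v$ is strictly negative, for all sufficiently small $t>0$ each coordinate satisfies $\gamma_j(t)<\gamma_j(0)=y^*_j$, so $\gamma(t)\prec\y^*$: the front point $\gamma(t)$ strictly dominates $\y^*$, contradicting that both lie on $\PY$ and are therefore non-dominated. The case $\mathbf v\succ\mathbf 0$ is symmetric, with $\y^*$ dominating $\gamma(t)$. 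This is the step I expect to be the main obstacle, as it is where the hypotheses must be used with care: the first-order argument only forces strict domination when all components of the velocity are nonzero, which is precisely why the claim concerns strictly signed tangent vectors and why a possible vanishing of some $d_i$ (a ``flat'' tangent direction) causes no trouble.

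It then remains to deduce the sign pattern of $\mathbf d$ from the fact that $T=\mathbf d^\perp$ avoids the open positive orthant. I would argue the contrapositive: if $\mathbf d$ had both a strictly positive component $d_j>0$ and a strictly negative one $d_k<0$, then setting $v_i=\varepsilon>0$ for $i\ne j,k$ and using that $(v_j,v_k)\mapsto d_jv_j+d_kv_k$ ranges over all of $\R$ on the quadrant $v_j,v_k>0$, I could solve $\mathbf d\cdot\mathbf v=0$ with every $v_i>0$, producing a forbidden vector $\mathbf v\in T$ with $\mathbf v\succ\mathbf 0$. Hence $\mathbf d$ cannot mix strict signs, i.e.\ all its components are $\ge 0$ or all $\le 0$, which (up to the overall sign ambiguity of a normal vector) is the assertion. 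As a sanity check and an alternative route, the same conclusion follows by writing the front locally as a graph $y_m=g(y_1,\dots,y_{m-1})$ over a coordinate for which $d_m\ne0$: the trade-off nature of the front forces $\partial g/\partial y_i\le0$ for every $i$, and since $\mathbf d\parallel(-\partial_1 g,\dots,-\partial_{m-1}g,1)$, all components again share a sign.
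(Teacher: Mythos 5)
Your proof is correct and follows essentially the same route as the paper's: both rule out mixed strict signs in $\mathbf d$ by producing a first-order tangent direction with all components strictly negative, which would dominate $\y^*$ and contradict Pareto optimality (the paper builds this vector explicitly as $(-\varepsilon^+\mathbf d^+,\varepsilon^-\mathbf d^-)$ with $\varepsilon^+/\varepsilon^-$ tuned so that it lies in $\mathbf d^\perp$, while you solve $\mathbf d\cdot\mathbf v=0$ inside the open positive orthant; your curve $\gamma$ on the front makes the paper's appeal to the ``local first order approximation'' slightly more rigorous). The one divergence is that the paper additionally asserts, without real justification, that $\mathbf d$ has no null components and so concludes a strict common sign, whereas you explicitly stop at the weak version (all $d_i\ge 0$ or all $d_i\le 0$); the weak version is all that is needed where the lemma is invoked (to get $\vert d_i\vert\le\vert\sum_k d_k\vert$ in Proposition~\ref{prop:insensibilite}), but be aware that the statement as written claims marginally more than you prove.
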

	
	\begin{proof}
		Because of the differentiability assumption at $\y^*$ and the definition of Pareto dominance, $\mathbf d$ cannot have null components. Suppose that some components in $\mathbf d$ have opposite signs, $\mathbf d^+$ corresponding to positive ones and $\mathbf d^-$ to negatives ones, $\mathbf d=[\mathbf d^+, \mathbf d^-]^\top$. Let $\varepsilon^+$ and $\varepsilon^-$ be two small positive scalars such that $\frac{\varepsilon^+}{\varepsilon^-}=\frac{\sum_{i:d_i<0} {d_i}^2}{\sum_{i:d_i>0} {d_i}^2}$. Then, $\mathbf f = \y^* + \begin{pmatrix} - \varepsilon^+ \mathbf d^+ \\ \varepsilon^- \mathbf d^- \end{pmatrix}$ dominates $\y^*$ and belongs to the local first order approximation to $\PY$ since  $\mathbf d^\top (\mathbf f - \mathbf C) = 0$, which is a contradiction as $\y^*$ is Pareto optimal.
	\end{proof}
	
	\begin{prop}[Stability of the Center to perturbations in Ideal and Nadir]
		\label{prop:insensibilite}
		Let $\PY$ be locally continuous and $m-1$ dimensional around its center $\mathbf C$. Then, $\vert\frac{\partial C_i}{\partial N_j}\vert<1,\ i,j=1,\dotsc,m$ where $\mathbf N$ is the Nadir point, and the variation $\Delta\mathbf C$ of $\mathbf C$ induced by a small variation $\Delta\mathbf N$ in $\N$ verifies $\Vert\Delta\mathbf C\Vert_2<\Vert\Delta\mathbf N\Vert_2$. A similar relation stands for small Ideal points variations, $\Vert\Delta\mathbf C\Vert_2<\Vert\Delta\mathbf I\Vert_2$.
	\end{prop}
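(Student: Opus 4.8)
The plan is to treat $\mathbf C$ as the solution of a two-condition system---lying on the Ideal--Nadir line $\L$ and on the front---and to linearise it. Parametrise the line by $\mathbf C=\N+t\,\mathbf v$ with $\mathbf v:=\I-\N$, and let $\mathbf d$ be the normal to $\PY$ at $\mathbf C$, which exists because $\PY$ is locally continuous and $(m-1)$-dimensional there. Holding $\I$ and the front fixed and perturbing $\N\mapsto\N+\Delta\N$, the new center must, to first order, remain on the tangent plane, $\mathbf d^\top\Delta\mathbf C=0$, and on the displaced line, $\Delta\mathbf C=(1-t)\Delta\N+\Delta t\,\mathbf v$. Eliminating $\Delta t$ gives $\Delta t=-(1-t)\,(\mathbf d^\top\Delta\N)/(\mathbf d^\top\mathbf v)$ and hence the rank-one expression
\[\Delta\mathbf C=(1-t)\Bigl(\mathrm{Id}-\tfrac{\mathbf v\,\mathbf d^\top}{\mathbf d^\top\mathbf v}\Bigr)\Delta\N\]
where $\mathrm{Id}$ is the $m\times m$ identity, from which $\partial C_i/\partial N_j=(1-t)\bigl(\delta_{ij}-v_i d_j/(\mathbf d^\top\mathbf v)\bigr)$.

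For the signs I would invoke Lemma~\ref{lemme:vecteur_normal}: all $d_k$ share one sign, while $\I\prec\N$ forces every $v_k<0$, so all products $v_k d_k$ have a common sign, $\mathbf d^\top\mathbf v\neq0$, and each ratio $v_i d_j/(\mathbf d^\top\mathbf v)$ is positive. Normalising the objectives---licit because the center is scaling-invariant by Proposition~\ref{prop:intersection}---so that $\mathbf v=-\mathbf 1$ turns these ratios into weights $d_j/\sum_k d_k\in(0,1)$ summing to one. The diagonal entries then read $(1-t)\bigl(1-d_i/\sum_k d_k\bigr)\in(0,1)$ and the off-diagonal ones $-(1-t)\,d_j/\sum_k d_k\in(-1,0)$, giving $|\partial C_i/\partial N_j|<1$, the first assertion. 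Perturbing $\I$ instead only replaces the prefactor $(1-t)$ by $t$ with the same matrix, so the Ideal statement is identical.

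For the Euclidean bound I would recognise $\Pi:=\mathrm{Id}-\mathbf v\,\mathbf d^\top/(\mathbf d^\top\mathbf v)$ as the oblique projector onto the tangent plane $\mathbf d^\perp$ along the line direction $\mathbf v$, so that $\Delta\mathbf C=(1-t)\,\Pi\,\Delta\N$ and the claim reduces to $(1-t)\Vert\Pi\Vert_2<1$. In the symmetric situation $\mathbf v\parallel\mathbf d$ the projector is orthogonal, $\Vert\Pi\Vert_2=1$, and $\Vert\Delta\mathbf C\Vert_2\le(1-t)\Vert\Delta\N\Vert_2<\Vert\Delta\N\Vert_2$ follows immediately since $1-t\in(0,1)$.

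The main obstacle is precisely the general Euclidean estimate: an oblique projector satisfies $\Vert\Pi\Vert_2=\sqrt m\,\Vert\mathbf d\Vert_2/\Vert\mathbf d\Vert_1$, which exceeds $1$ as soon as $\mathbf d$ is skewed relative to $\L$ (boxy, asymmetric fronts), so the contraction factor $1-t$ need not automatically dominate it. I would try to close the gap by noting that the top singular direction of $\Pi$ lies in $\mathrm{span}(\mathbf v,\mathbf d)$---reducing the spectral estimate to a $2\times2$ problem---and then seeking a geometric constraint tying $t$ to the obliqueness of $\mathbf d$. I expect this trade-off to be the crux and to require either an additional hypothesis limiting how far the front's normal may tilt from the Ideal--Nadir direction, or a restriction to the (near-)symmetric regime; the entrywise bound and the symmetric case, by contrast, are routine.
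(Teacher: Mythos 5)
Your derivation follows essentially the same route as the paper's. After normalising so that $\I=\mathbf 0_m$ and $\N=\mathbf 1_m$ (justified, as in your write-up, by Proposition~\ref{prop:intersection}), the paper writes the tangent hyperplane at $\mathbf C$ as $\mathbf d^\top\mathbf f+e=0$, obtains $\mathbf C=\frac{-e}{\mathbf d^\top\N}\N$, and computes exactly the entries $\partial C_i/\partial N_j=\alpha_C\bigl(\delta_{ij}-d_j/\sum_k d_k\bigr)$ that you find, with $\alpha_C=1-t$ in your parametrisation; the entrywise bound $\vert\partial C_i/\partial N_j\vert<1$ is then established as you do, via the sign condition of Lemma~\ref{lemme:vecteur_normal}. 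So the first assertion and the Ideal/Nadir symmetry (swapping the prefactors $t$ and $1-t$) are proved identically in both texts.

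For the Euclidean bound, the paper writes $\nabla\mathbf C=\alpha_C(I_m-D)$ with $D=\mathbf 1\mathbf d^\top/(\mathbf d^\top\mathbf 1)$, notes that $D$ has eigenvalues $0$ and $1$, concludes that the largest eigenvalue of $\nabla\mathbf C$ is $\alpha_C<1$, and from there asserts $\Vert\Delta\mathbf C\Vert_2\le\Vert\nabla\mathbf C\Vert_2\Vert\Delta\N\Vert_2\le\Vert\Delta\N\Vert_2$. That last step silently identifies the spectral radius with the operator $2$-norm, which is not legitimate for the non-normal matrix $I_m-D$: as you observe, $I_m-D$ is the oblique projector onto $\mathbf d^{\perp}$ along $\mathbf 1$, and its $2$-norm equals $\sqrt m\,\Vert\mathbf d\Vert_2/\Vert\mathbf d\Vert_1\ge 1$, with equality only when all $d_k$ coincide. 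Consequently $\Vert\nabla\mathbf C\Vert_2=\alpha_C\sqrt m\,\Vert\mathbf d\Vert_2/\Vert\mathbf d\Vert_1$ can exceed $1$ when $\alpha_C$ is close to $1$ and the front's normal at $\mathbf C$ is strongly tilted relative to $\L$. The obstacle you single out as "the crux" is therefore not a deficiency of your attempt relative to the paper: it is precisely the point at which the paper's own argument is incomplete, and the trade-off between $\alpha_C$ and the obliqueness of $\mathbf d$ that you propose to analyse (or an added near-symmetry hypothesis) is what would genuinely be required to make the $\Vert\Delta\mathbf C\Vert_2<\Vert\Delta\N\Vert_2$ claim rigorous. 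Everything else in your proposal is correct and matches the paper.
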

	
	\begin{proof}
		If $\PY$ is locally continuous and $m-1$ dimensional, $\mathbf C$ is the intersection between $\mathcal L$ and $\PY$. For simplicity, the Pareto front is scaled between 0 and 1, that is, $\I=\mathbf0_m$ and $\N=\mathbf1_m$. Proposition \ref{prop:intersection} ensures that the center is not modified by such a scaling. The tangent hyperplane to $\PY$ at $\mathbf C$ writes $\mathbf d^\top\mathbf f+e=0$ where $\mathbf d\in\R^m$, the normal vector to the tangent hyperplane, and $e\in\R$ depend on $\PY$ and are supposed to be known. Lemma \ref{lemme:vecteur_normal} ensures that $d_i,\ i=1,\dotsc,m$ have the same sign, that we choose positive.
		$\mathbf C$ satisfies both $\mathbf d^\top \mathbf C=-e$ and $\mathbf C=(1-\alpha_C)\I+\alpha_C\N=\alpha_C\mathbf1_m$ for some $\alpha_C\in]0,1[$. Hence, \[\mathbf C=\frac{-e}{\mathbf d^\top \N}\N,\ C_i=\frac{-e}{\mathbf d^\top \N}N_i\]
		$\forall j=1,\dotsc,m,j\ne i$,
		\[\frac{\partial C_i}{\partial N_j}=\frac{eN_id_j}{(\mathbf d^\top \N)^2}=\frac{-d_j}{\sum_kd_kN_k}C_i=\frac{-d_j}{\sum_kd_k}C_i\]
		For $i=j$,
		\[\frac{\partial C_i}{\partial N_i}=\frac{-e\mathbf d^\top \N+eN_id_i}{(\mathbf d^\top \N)^2}=\frac{C_i}{N_i}-\frac{C_i}{\sum_kd_kN_k}=C_i\left(1-\frac{d_i}{\sum_kd_k}\right)\]
		$C_i=\alpha_C\in]0,1[\ \forall i=1,\dotsc,m$ and as the $d_i$'s share the same sign, $\vert d_i\vert\le\vert\sum_kd_k\vert$. Therefore, $\vert \frac{\partial C_i}{\partial N_i}\vert<1$ and $\vert \frac{\partial C_i}{\partial N_j}\vert<1$~.
		Consider now that $\N$ is modified into $\N+\Delta\N$, which changes the center to $\mathbf C+\Delta\mathbf C$. One has $\Delta\mathbf C=\nabla\mathbf C\cdot\Delta\N$ where $\nabla\mathbf C$ is the $m\times m$ matrix with entries $\frac{\partial C_i}{\partial N_j}$. Rearranging the terms of the derivatives into matrix form yields
		\[\nabla\mathbf C=\alpha_C\left[I_m-\underbrace{\frac{1}{\sum_kd_k}\begin{pmatrix}
			d_1 & d_2 & \cdots & d_m\\
			\vdots & \vdots & \vdots & \vdots\\
			d_1 & d_2 & \cdots & d_m
			\end{pmatrix}}_D\right]\]
		where $I_m$ stands for the identity matrix here.
		$D$ is a rank 1 matrix with positive entries whose rows sum to 1, and has eigenvalues 0 and 1 with respective multiplicity $m-1$ and 1. Consequently, $\nabla\mathbf C$'s largest eigenvalue is $\alpha_C\in]0,1[$. Finally, $\Vert\Delta\mathbf C\Vert_2\le\Vert\nabla\mathbf C\Vert_2\Vert\Delta\mathbf N\Vert_2\le\Vert\Delta\mathbf N\Vert_2$. By symmetry, the Proposition extends to the sensitivity of the center to the Ideal point, $\vert\frac{\partial C_i}{\partial I_j}\vert<1,\ i,j=1,\dotsc,m$ and $\Vert\Delta\mathbf C\Vert_2<\Vert\Delta\mathbf I\Vert_2$.
	\end{proof}
	
	Proposition \ref{prop:insensibilite} is a local stability result. Without formal proof, it is observed that the center will be little affected by larger errors in Ideal and Nadir positions when compared to alternative definitions of the center. A typical illustration is as follows: the Nadir point is moved by a large amount in one objective (see Figure \ref{fig:streched_center}).
	The center is shifted by a relatively small amount and will continue to correspond to an area of equilibrium between \emph{all} objectives. Other definitions of the center, typically those based on the barycenter of $\PY$ would lead to a major displacement of $\mathbf C$. In Figure~\ref{fig:streched_center}, the barycenter on $\PY$ signaled by $\mathbf B$ and $\mathbf B'$ has $B'_2\approx I_2$, which does not correspond to an equilibrated solution as the second objective would almost be at its minimum.
	
	\begin{figure}[!ht]
		\centering
		\includegraphics[width=0.8\textwidth]{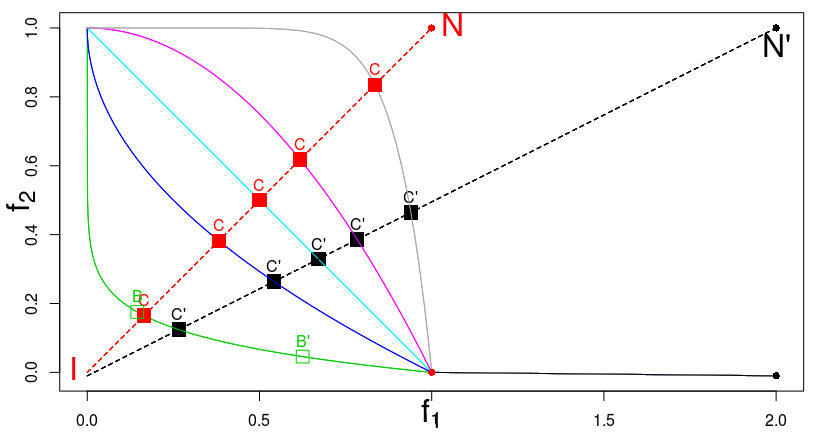}
		\caption{Illustration of the global stability of the center in 2D: adding the black part to the colored Pareto fronts will highly modify them and $\mathbf N'$ becomes the new Nadir point. The new center $\mathbf C'$ is relatively close to $\mathbf C$ despite this major $\N$ modification. $\mathbf B$, a barycenter-based center would be much more affected, and would no longer correspond to an equilibrium.}
		\label{fig:streched_center}
	\end{figure}
	
	\subsection{Estimation of the Pareto front center using Gaussian processes}
	\label{sec:estimation}
	Now that we have given a definition of $\mathbf C$ relying on $\PY$ through $\I$ and $\N$, let us discuss the estimation of $\mathbf C$. The \emph{real} front $\PY$ is obviously unknown and at any stage of the algorithm, we solely have access to an approximation front $\widehat{\PY}$. The empirical Ideal and Nadir points (computed using $\widehat{\PY}$) could be weak estimates in the case of a biased approximation front.
	Thus, we propose an approach using the GPs $Y_j(\cdot)$ to better estimate $\I$ and $\N$ through conditional simulations.
	
	\subsubsection*{Estimating I and N with GP simulations}
	Estimating the Ideal and the Nadir point accurately is a difficult task.
	Indeed, obtaining $\I$ is equivalent to finding the minimum of each $f_j(\cdot), j=1,\dotsc,m$, which corresponds to $m$ classical mono-objective optimization problems. 
	Prior to computing $\N$, the whole Pareto front has to be unveiled but this is precisely our primary concern!
	Estimating $\N$ before running the multi-objective optimization has been proposed in \cite{DebNadir,BechikhNadir} using modified EMOAs to emphasize extreme points. We aim at obtaining sufficiently accurate estimators $\widehat{\I}$ and $\widehat{\N}$ of $\I$ and $\N$ rather than solving these problems exactly. The low sensitivity of $\mathbf C$ with regard to $\I$ and $\N$ discussed previously suggests that the estimation error should not be a too serious issue for computing $\mathbf C$.
	As seen in Section \ref{sec:bayesian_optimization}, given $s$ simulation points $\x^{n+1},\dotsc,\x^{n+s}$, possible responses at those locations can be obtained through the conditional GPs $Y_j(\cdot), j=1,\dotsc,m$. The simulated responses can be filtered by Pareto dominance to get $n_{sim}$ simulated fronts $\widetilde{\PY}^{(k)}$.
	The Ideal and Nadir points are then estimated by
	$\widehat{I_j}=\underset{k=1,\dotsc,n_{sim}}{\text{median}}\left(\underset{\y\in\widetilde{\PY}^{(k)}}{\min}~y_j\right)$; $\widehat{N_j}=\underset{k=1,\dotsc,n_{sim}}{\text{median}}\left(\underset{\y\in\widetilde{\PY}^{(k)}}{\max}~y_j\right)$, $j=1,\dotsc,m$.
	
	Notice that the definition of $\I$ is not based on the Pareto front. Hence the estimation of $I_j$ does not require $m$-dimensional simulated fronts, but only single independently simulated responses $\widetilde{Y_j}^{(k)}$. By contrast, as the Nadir point needs the front to be defined, simulated fronts $\widetilde{\PY}^{(k)}$ are mandatory for estimating $\N$.
	
	GP simulations are attractive for estimating extrema because they not only provide possible responses of the objective functions but also take into account the surrogate's uncertainty. It would not be the case by applying a (multi-objective) optimizer to a deterministic surrogate such as the conditional mean functions. 
	Even so, they rely on the choice of simulation points $\x^{n+i}, i=1,\dotsc,s$ (in a $d$-dimensional space).	For technical reasons (Cholesky or spectral decomposition of $\Gamma_j$ required for sampling from the posterior), the number of points is restricted to $s \lessapprox5000$. $\x^{n+i}$ have thus to be chosen in a smart way to make the estimation as accurate as possible.
	In order to estimate $\I$ or $\N$, GP simulations are performed at $\x$'s that have a large probability of contributing to one component of those points: first, the kriging mean and variance of a very large sample $\mathbb S\subset X$ is computed. The calculation of $\widehat{y_j}(\mathbb S)$ and $s_j(\mathbb S)$ is indeed tractable for large samples contrarily to GP simulations. Next, $s$ designs are picked up from $\mathbb S$ using these computations. In order to avoid losing diversity, the selection is performed using an importance sampling procedure \cite{bect2017bayesian}, based on the probability of contributing to the components $I_j$ or $N_j$.
	
	\vskip\baselineskip
	As $I_j=\underset{\x\in X}{\min}~f_j(\x)$ good candidates are $\x$'s such that $\Pr(Y_j(\x)<a_j)$ is large. To account for new evaluations of $f_j$, a typical value for $a_j$ is the minimum observed value in the $j$-th objective, $\underset{i=1,\dotsc,n}{\min}f_j(\x^{i})$.
	According to the surrogate, such points have the greatest probability of improving over the currently best value if they were evaluated.
	
	\vskip\baselineskip
	Selecting candidates for estimating $\N$ is more demanding. 
	Indeed, as seen in Definition~\ref{defnadir}, $N_j$ is not the maximum value over the whole objective space $Y$ but over the unknown $\PY$, i.e., each $N_j$ arises from a ND point.
	Thus the knowledge of an $m$-dimensional front is mandatory for estimating $\N$. The best candidates for $\N$'s estimation are, by Definition~\ref{defextremepoints}, extreme design points.
	Quantifying which points are the most likely to contribute to the Nadir components, in other terms produce extreme points, is a more difficult task than its pendant for the Ideal. Good candidates are $\mathbf x$'s such that the sum of probabilities $\Pr(Y_j(\x)>\widehat{\nu}^{j}_j, \Y(\x) \text{ ND})+ \Pr(\Y(\x)\preceq\widehat{\pmb\nu}^j)$ is large.
	For reasons of brevity, the procedure is detailed in Appendix~\ref{annexes:estimation_nadir}.
	
	Since the optimization is directed towards the center of the Pareto front, the metamodel may lack precision at extreme points. 
It might be tempting to episodically target these parts of the Pareto front to improve $\I$ and $\N$'s estimation. But this goes against the limited budget of calls to $\mathbf f(\cdot)$
and it is not critical since the center is quite stable with respect to $\I$ and $\N$'s inaccuracies (Proposition~\ref{prop:insensibilite}).
Since the optimality of solutions is favored over the attainment of the exact center of the Pareto front, this option has not been further investigated.
	
	\subsubsection*{Ideal-Nadir line and estimated center}
	To estimate $\I$ and $\N$, we first select $s=5000$ candidates from a large space-filling Design of Experiments (DoE) \cite{halton,sobol}, $\mathbb S\subset X$, with a density proportional to their probability of generating either a $I_j$ or a $N_j$ as discussed before. 
	$s/2m$ points are selected for the estimation of each component of $\mathbf I$ and $\mathbf N$. $n_{sim}$ conditional GP simulations are then performed at those $\x^{n+i},i=1,\dotsc,s$ in order to generate simulated fronts, whose Ideal and Nadir points are aggregated through the medians to produce the estimated $\widehat{\I}$ and $\widehat{\N}$. The resulting simulated fronts are biased towards particular parts of the Pareto front (extreme points, individual minima).
	
	Experiments have shown significant benefits over methodologies that choose $\x^{n+i}$'s according to their probability of being not dominated by the whole approximation front, or that use $s$ points from a space-filling DoE \cite{doe} in $X$.
	Figure~\ref{fig:component_estimation_comparison} compares the component estimation of $\I$ and $\N$ for different techniques during one optimization run with $m=3$ objectives. X.IN (blue curve) corresponds to our methodology. The other curves stand for competing methodologies: X.LHS (green) selects the $\x^{n+i}$ from a space-filling design, and X.ND (red) chooses them according to their probability of being non-dominated with respect to the entire front. NSGA-II (gold) does not select design points $\x^{n+i}$ to perform GP simulations but rather uses the Ideal and Nadir point found by one run of the NSGA-II \cite{NSGAII} multi-objective optimizer applied to the kriging predictors $\widehat{y_i}(\cdot),i=1,\dotsc,m$. The black dashed line corresponds to the component of the current empirical front, a computationally much cheaper estimator. The bold dashed line shows $\I$ and $\N$'s true components.
	
	\begin{figure}[h!]
		\centering
		\includegraphics[width=\textwidth]{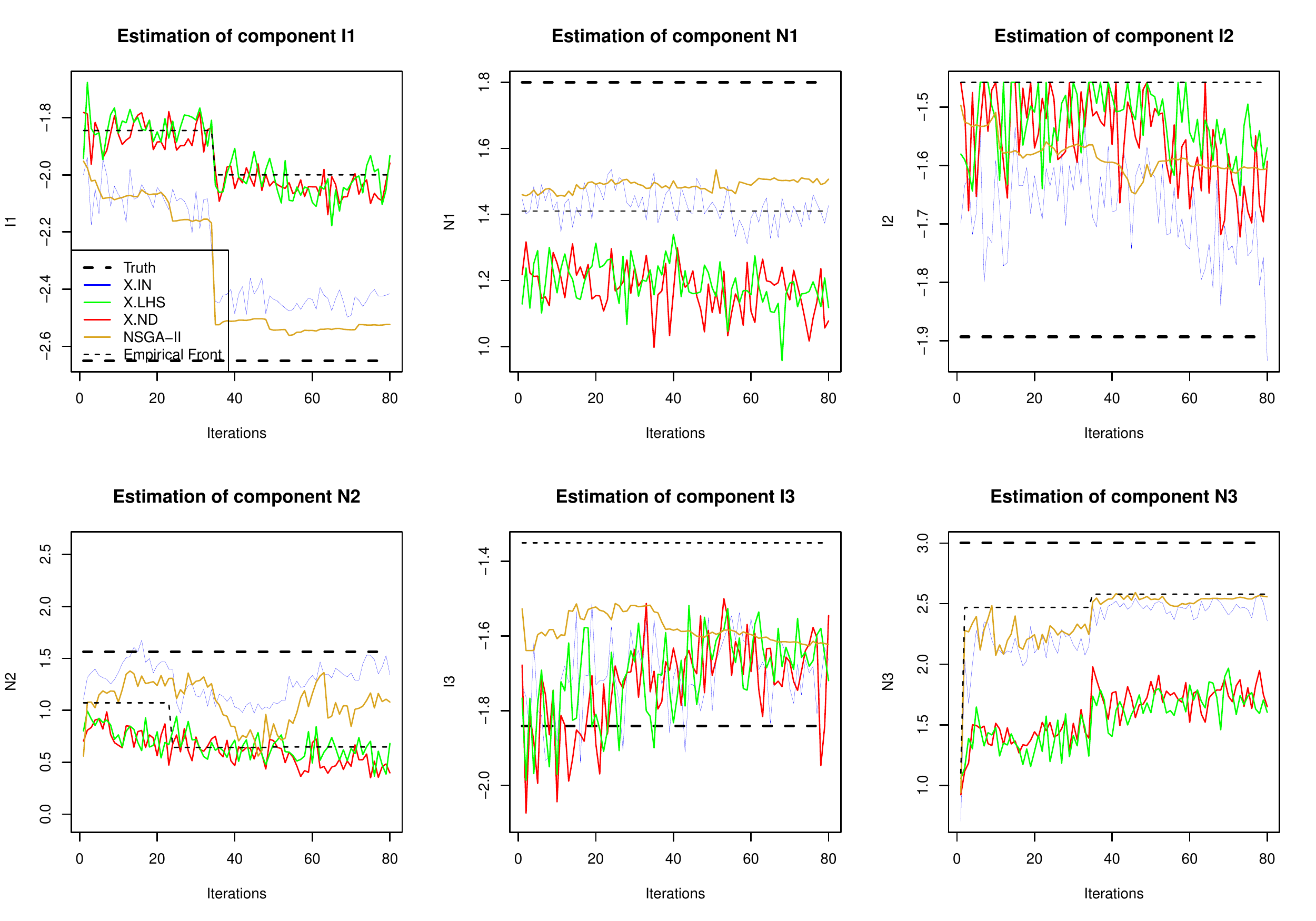}
		\caption{Example of estimation of $\I$ and $\N$ using different techniques. The proposed methodology (blue) is able to consistently produce close estimates to $\I$'s and $\N$'s components (bold black dashed line).}
		\label{fig:component_estimation_comparison}
	\end{figure}

Our methodology outperforms the two other simulation techniques, because they do not perform the simulations specifically at locations that are likely to correspond to an extreme design point or to a single-objective minimizer. Benefits are also observed compared with the empirical Ideal and Nadir points, that are sometimes poor estimators (for example for $I_1$, $I_2$ and $N_2$). Using the output of a multi-objective optimizer (here NSGA-II) applied to the kriging mean functions is also a promising approach but has the drawback of not considering any uncertainty in the surrogates (that may be large at the extreme parts of the Pareto front). It also suffers from classical EMOA's disadvantages, e.g. several runs would be required for more reliable results and convergence can not be guaranteed. Note that as these methods rely on the surrogates they are biased by the earlier observations: the change of the empirical Ideal or Nadir point has an impact on the estimation. However, the X.IN, X.LHS and X.ND estimators compensate by considering the GPs uncertainty to reduce this bias.

As we are in fine not interested in the Ideal and the Nadir point but in the Pareto front center, we want to know if these estimations lead to a good $\widehat{\mathbf C}$. Proposition~\ref{prop:insensibilite} suggests that the small Ideal and Nadir estimation error should not be a too serious concern. This is confirmed by Figure~\ref{fig:estimated_pareto_front_center}, where the center estimation error is low with respect to the range of the Pareto front.

\begin{figure}[h!]
	\centering
	\includegraphics[width=0.7\textwidth]{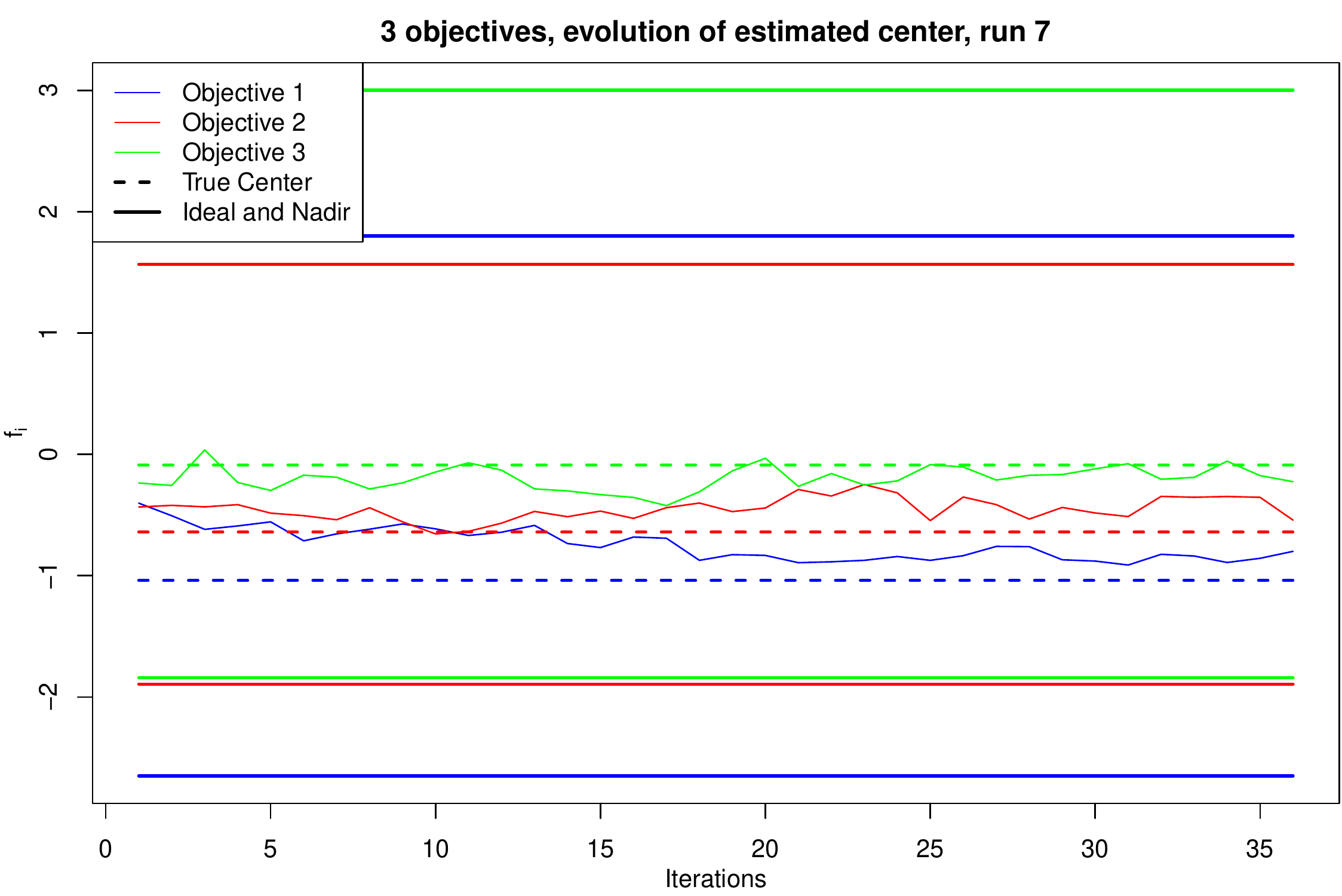}
	\caption{Evolution of the estimated center during one run (using $\widehat{\I}$ and $\widehat{\N}$ from Figure~\ref{fig:component_estimation_comparison}): $\widehat{\mathbf C}$'s components are close to the true ones.}
	\label{fig:estimated_pareto_front_center}
\end{figure}

Figure \ref{fig:center_estimation} shows an example of one GP simulation targeting the extremes of the Pareto front.
Notice the difference between the current empirical Pareto front (in blue) and the simulated front for $\mathbf N$ and $\mathbf I$ (in black): the extreme points which are simulated go well beyond those already observed.

Linearly extending the Pareto front approximation \cite{Paint} and taking the intersection with $\widehat\L$ was originally considered for defining $\widehat{\mathbf C}$. But as an $m$-dimensional interpolated Pareto front is not necessarily composed of only $m-1$ dimensional hyperplanes (but is a collection of polytopes of dimension at most $m-1$), the intersection with an $m$-dimensional line does not necessarily exist.

	\begin{figure}[!ht]
		\centering
		\includegraphics[width=0.8\textwidth]{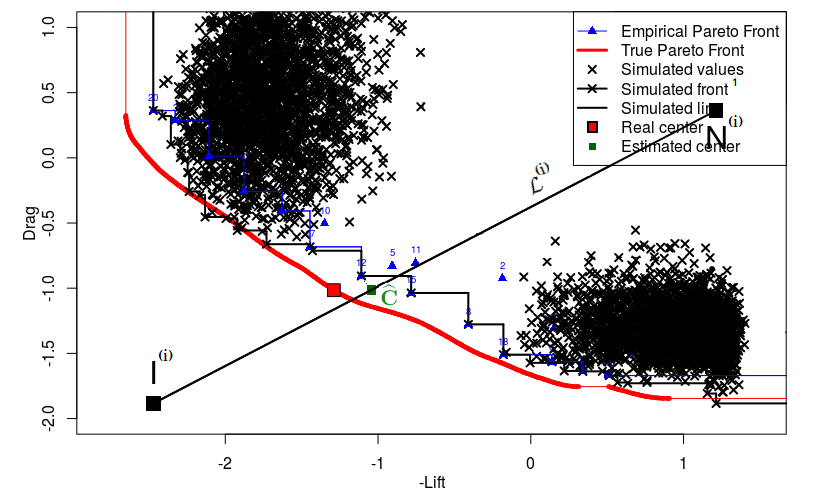}
		\caption{One GP simulation targeting the extremes of the Pareto front to enhance the estimation of $\I$ and $\N$. The projection of the closest non-dominated point to $\L$ on it is the estimated center (in green). The real center (in red) lies close to the estimated center and to the estimated Ideal-Nadir line.}
		\label{fig:center_estimation}
	\end{figure}

	\section{An infill criterion to target the center of the Pareto front}
	\label{sec:target}
	
	\subsection{Targeting the Pareto front center with the reference point}
	\label{sec:ehi_refpoint_targeting}
	
	Our approach starts from the observation that any region of the objective space can be aimed targeted with EHI solely by controlling the reference point $\mathbf R$. Indeed, as $\y\npreceq\RR\Rightarrow I_H(\y;\RR)=0$, the choice of $\mathbf R$ is instrumental in deciding the combination of objectives for which improvement occurs, the \emph{improvement region}:
	\begin{equation*}
	\mathcal I_{\mathbf R} :=\{\y\in Y : \y\preceq\mathbf R\}~.
	\end{equation*}
	As shown in Fig.~\ref{fig:4R}, the choice of $\RR$ defines the region in objective space where $I_H>0$ and where the maximum values of EHI are expected to be found.
	The choice of $\RR$ is crucial as it defines the region in objective space that is highlighted. To our knowledge, $\RR$ has always been chosen to be dominated by the whole approximation Front (that is, $\RR$ is at least the empirical Nadir point, which corresponds to the case of $\RR\mathbf1$ in Fig.~\ref{fig:4R}).
	The targeting ability of $\RR$ can and should however be taken into account: for example, solutions belonging to the left part of the Pareto front in Fig.~\ref{fig:4R} can be aimed at using EHI$(\cdot;\RR\mathbf2)$ instead of the more general EHI$(\cdot;\RR\mathbf1)$.
	
	\begin{figure}[!ht]
		\centering
		\includegraphics[width=0.4\textwidth]{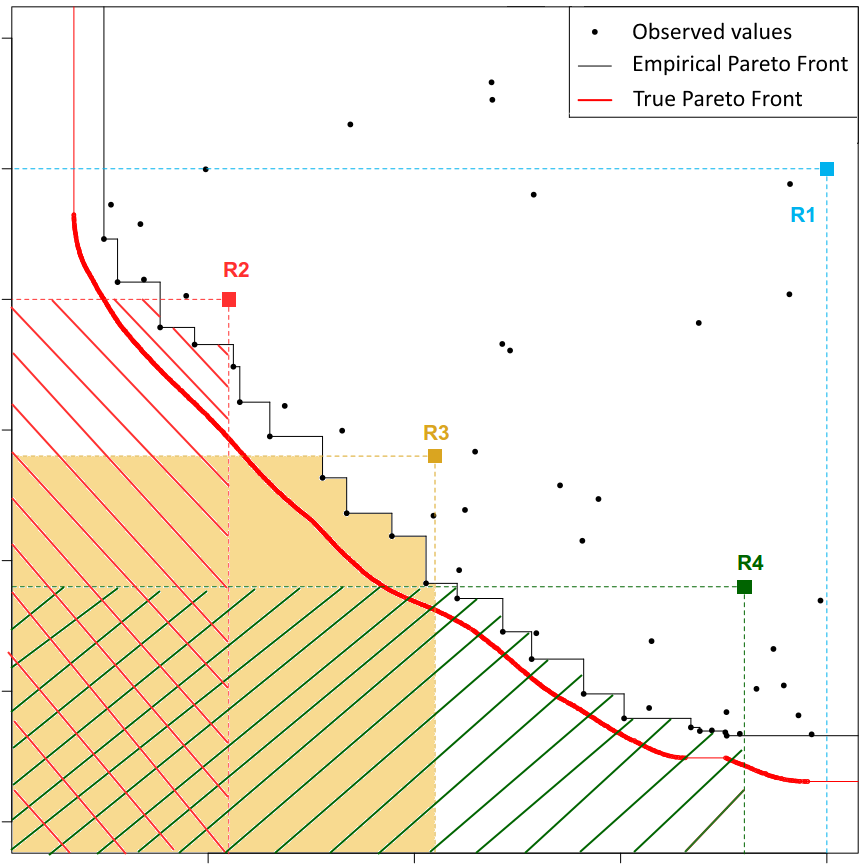}
		\caption{Different reference points and the areas $\mathcal I_{\RR}$ that are targeted}
		\label{fig:4R}
	\end{figure}
		
	Because of the extremely limited number of possible calls to the objective functions, we would like to prioritize the search by first looking for the Pareto front center: we implicitly prefer the center of the Pareto front over other solutions.
	This is implemented simply by setting the reference point as the estimated center, 
	$\RR \equiv \widehat{\mathbf C}$. Then, the algorithm maximizes $\text{EHI}(\mathbf x;\widehat{\mathbf C})$ on $\mathbf x$.
	In contrast to other works that set $\RR$ at levels dominated by all Pareto optimal points, $\RR$ at $\widehat{\mathbf C}$ will typically be non-dominated\footnote{When using the projection of the closest non-dominated point on the line, exceptions may occur and $\widehat{\mathbf C}$ may be dominated. In that case, it as to be slightly moved towards $\widehat{\I}$.} since it is close to the empirical Pareto front.
	
	$\widehat{\mathbf C}$ corresponds to the center of the \emph{current} approximation front, at a given moment $t$. 
	Since the goal is to find optimal, better, solutions, it makes sense to look for points dominating it: at each iteration of the algorithm, after having estimated $\widehat{\mathbf C}$, improvements over it are sought by maximizing EHI$(\cdot,\widehat{\mathbf C})$.
	
	\subsection{mEI, a computationally efficient proxy to EHI}
	\label{sec:mEI}
	Choosing the Pareto front center, a non-dominated point, as reference point in EHI has an additional advantage: it allows to define a criterion that can replace EHI for targeted optimization at a much lower computational cost. 
	We name this criterion mEI for multiplicative Expected Improvement.
	\begin{defn}[mEI criterion]
		\label{def:mEI}
		The multiplicative Expected Improvement is the product of Expected Improvements in each objective defined in Equation~(\ref{eq:EI0})
		\begin{equation}
		\text{mEI}(\cdot;\RR) := \prod_{j=1}^{m}\text{EI}_j(\cdot;R_j)~.
		\label{eq:mEI}
		\end{equation}
	\end{defn}
	mEI is a natural extension of the mono-objective Expected Improvement, as $(f_{min}-\widehat{y}(\x))_+$ is replaced by $\prod(\RR-\widehat{\Y}(\x))_+$.
	
	mEI is an attractive infill criterion in several ways. 
	First, it is able to target a part of the objective space via $\RR$ as the Improvement function it is built on differs from zero only in $\mathcal I_{\RR}$. 
	Conversely of course, as it does not take the shape of the current approximation front into account, mEI cannot help in finding well-spread Pareto optimal solutions. 
	
	Second, when $\widehat{\PY}\npreceq\RR$, mEI is equivalent to EHI but it is much easier to compute. Contrarily to EHI, mEI does not imply the computation of an $m$-dimensional hypervolume which potentially requires Monte-Carlo simulations (cf. Section~\ref{sec:multiobj}). Its formula is analytical (substitute Equation~(\ref{eq:EI}) into (\ref{eq:mEI})) and can easily be parallelized. 
	\begin{prop}[EHI-mEI equivalence]
		Let $Y_1(\cdot),\dotsc ,Y_m(\cdot)$ be independent GPs fitted to the observations $(\mathbb X,\mathbb Y)$, with the associated empirical Pareto front $\widehat{\PY}$.
		If $\widehat{\PY}\npreceq\RR$, $\text{EHI}(\cdot;\RR)=\text{mEI}(\cdot;\RR)$.
		\label{prop:ehi_mei}
	\end{prop}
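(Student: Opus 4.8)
The plan is to exploit the hypothesis $\widehat{\PY} \npreceq \RR$ to collapse the hypervolume improvement into the volume of a single box, and then to factorize the resulting expectation using the independence of the GPs. First I would unpack the meaning of $\widehat{\PY} \npreceq \RR$: following the domination convention of Section~\ref{sec:multiobj}, this says that no $\mathbf a \in \widehat{\PY}$ satisfies $\mathbf a \preceq \RR$. The key observation is that, in the product order, the slab $\{\mathbf z : \mathbf a \preceq \mathbf z \preceq \RR\}$ is empty whenever $\mathbf a \npreceq \RR$ (some coordinate of $\mathbf a$ exceeds the corresponding coordinate of $\RR$, so no admissible $\mathbf z$ exists). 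Hence every set in the union defining $H(\widehat{\PY};\RR)$ is empty, and therefore $H(\widehat{\PY};\RR)=0$. This is the crux of the argument and the step I expect to require the most care, since it is where the hypothesis is genuinely used; everything downstream is bookkeeping.

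Second, with $H(\widehat{\PY};\RR)=0$, the improvement induced by a realization $\y$ reduces to $I_H(\y;\RR)=H(\widehat{\PY}\cup\{\y\};\RR)-H(\widehat{\PY};\RR)=\Lambda(\{\mathbf z:\y\preceq\mathbf z\preceq\RR\})$, because the boxes attached to the points of $\widehat{\PY}$ contribute nothing. The Lebesgue measure of this remaining box is $\prod_{j=1}^m (R_j-y_j)_+$: the positive part correctly returns $0$ when $\y\npreceq\RR$ (the box is empty) and the product of the coordinate gaps when $\y\preceq\RR$. So $I_H(\y;\RR)=\prod_{j=1}^m (R_j-y_j)_+$ holds pointwise in $\y$.

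Finally, I would substitute $\y=\Y(\x)$, take the expectation, and use the independence of $Y_1(\cdot),\dots,Y_m(\cdot)$ — and hence of the scalar random variables $Y_1(\x),\dots,Y_m(\x)$ — to split the expectation of the product into a product of expectations:
\[
\text{EHI}(\x;\RR)=\E\Big[\prod_{j=1}^m (R_j-Y_j(\x))_+\Big]=\prod_{j=1}^m \E\big[(R_j-Y_j(\x))_+\big]=\prod_{j=1}^m \text{EI}_j(\x;R_j)=\text{mEI}(\x;\RR),
\]
where each factor is recognized as the one-dimensional Expected Improvement of Equation~(\ref{eq:EI0}), and the last equality is the definition of mEI. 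This closes the proof. The only genuinely subtle point is the emptiness-of-box argument in the first step; once independence is invoked the factorization is immediate, and no distributional assumption beyond independence of the objective GPs is needed for the identity to hold.
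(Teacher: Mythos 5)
Your proof is correct and follows essentially the same route as the paper's: both use $\widehat{\PY}\npreceq\RR$ to reduce the hypervolume improvement to the single box $\prod_{j=1}^m(R_j-y_j)_+$ and then factorize the expectation via independence of the $Y_j(\cdot)$. Your version merely spells out more explicitly why $H(\widehat{\PY};\RR)=0$, which the paper leaves implicit.
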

	\begin{proof}
		Let $\widehat{\PY}\npreceq\RR$. For such a reference point, the hypervolume improvement is \[I_H(\y;\RR)=H(\widehat{\PY}\cup\{\y\};\RR)-H(\widehat{\PY};\RR)=H(\{\y\};\RR)=\left\{
		\begin{array}{ll}
		\prod_{j=1}^{m}(R_j-y_j) & \text{ if }\y\preceq\RR\\
		0 & \text{ else }
		\end{array}
		\right.\]
		With the $(.)_+$ notation, $I_H(\y;\RR)=\prod_{j=1}^{m}(R_j-y_j)_+$ and EHI($\x;\RR)$ reduces to $\E[\prod_{j=1}^{m}(R_j-Y_j(\mathbf x))_+]=\prod_{j=1}^{m}\E[(R_j-Y_j(\mathbf x))_+]$ as the $Y_j(\cdot)$ are independent. This is the product of $m$ Expected Improvements
		with objectives at the thresholds $R_j$.
	\end{proof}
	
	\begin{figure}[!ht]
		\centering
		\includegraphics[width=0.6\textwidth]{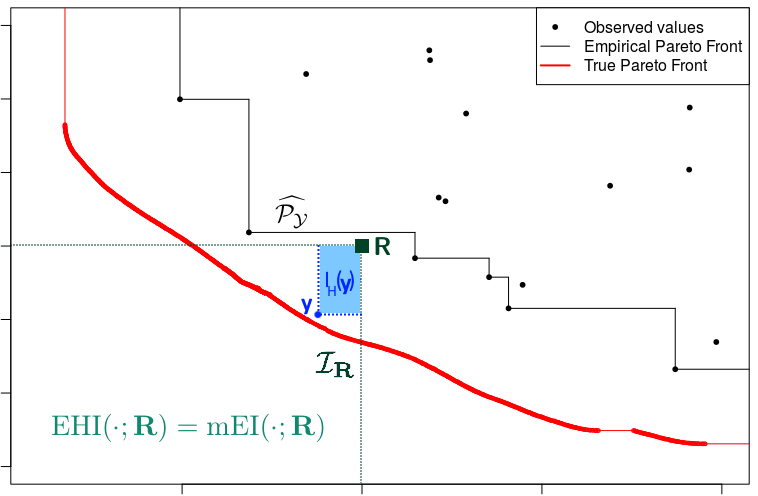}
		\caption{When using a non-dominated reference point w.r.t. $\widehat{\PY}$, EHI and mEI are equivalent. The area in blue corresponds to a sample of both the product of improvements w.r.t. $R_j$ and the hypervolume improvement.}
		\label{fig:ehi_ei}
	\end{figure}
	
	Third, being a product of Expected Improvements, $\nabla\text{mEI}(\x;\RR)$ is computable as \[\nabla\text{mEI}(\x;\RR)=\sum_{i=1}^{m}\left[\nabla\text{EI}_i(\x;R_i)\prod_{\substack{j=1 \\ j\neq i}}^m\text{EI}_j(\x;R_j)\right]\]
	where $\nabla EI(\x;\RR)$ has closed form, see \cite{roustant2012dicekriging} for instance. This offers the additional possibility of combining global optimization with gradient based methods when maximizing mEI$(\cdot;\RR)$. In comparison, EHI's gradient has no closed-form.
	
	As we shall soon observe with the numerical experiments in Section~\ref{sec:results}, mEI is an efficient infill criterion for attaining the Pareto front provided that $\RR$ is taken in the non-dominated neighborhood of the Pareto front.
	It is important that $\RR$ is not dominated, not only for the equivalence with EHI to hold. 
	Indeed, mEI with a dominated $\RR$ may lead to clustering: let $\y^{i_0}=\mathbf f(\x^{i_0})\in\PY$ such that $\y^{i_0}\preceq\RR$. 
	Then, because improvement over $\RR$ is certain at $\x^{i_0}$, mEI$(\x^{i_0};\RR)$ will be large and often maximal in the vicinity of $\x^{i_0}$. Clustering in both the objective and the design space will be a consequence, leading to	ill-conditioned covariance matrices. 
	Taking a non-dominated reference point instead will diminish this risk as $\prod_{j=1}^{m}(R_j-y_j)_+=0~~\forall\y\in\widehat{\PY}$, and no already observed solution will attract the search. 
	If the reference point is too optimistic, the mEI criterion makes the search exploratory as	the only points $\x$ where progress is achieved during GP sampling are those with a large associated uncertainty $s^2(\x)$.
	A clear example of a too optimistic reference point comes from the straightforward generalization of the default single objective EI$(\cdot;f_{min})$ to multiple objectives: it is the criterion $\prod_{j=1}^m \text{EI}(\cdot;f_{j,min}) ~\equiv~ \text{mEI}(\cdot;\mathbf I)$, that is, the mEI criterion with the empirical Ideal as a reference. In non-degenerated problems where the Ideal is unattainable, sequentially maximizing $\text{mEI}(\cdot;\mathbf I)$ will be close to sequentially maximizing $s^2(\x)$.
	
	Thus, $\RR$ has to be set up adequately. This is achieved in the proposed C-EHI algorithm by selecting the estimated Pareto front center as reference point and maximizing mEI$(\mathbf x ; \widehat {\mathbf C})$.
	
	\section{Detecting local convergence to the Pareto front}
	\label{sec:conv}
	The Pareto front center may be reached before depletion of the computational resources. If the algorithm continues targeting the same region, it can no longer improve the center,	and the infill criterion will favor the most uncertain parts of the design space. 
	It is necessary to detect convergence to the center so that a broader part of the Pareto front can be searched in the remaining iterations, as will be explained in Section \ref{sec:seconde_phase}. 
	In this section, we propose a novel method for checking convergence to the center. It does not utilize the mEI value which was found too unstable to yield a reliable stopping criterion.
	Instead, the devised test relies on a measure of local uncertainty.
	
	To test the convergence to a local part of the Pareto front, we define the \emph{probability of domination in the $Y$ space}\footnote{The probability of domination is also called ``attainment function'' in \cite{TheseBinois}.}, $p(\y)$, as the probability that there exists $\y'\in Y : \y'\preceq\y$. 
	$\y$'s for which $p(\y)$ is close to 0 or to 1 have a small or large probability, respectively, that there exist objective vectors dominating them. 
	On the contrary, $p(\y)$ close to 0.5 indicates no clear knowledge about the chances to find better vectors than $\y$. $p(\y)$ measures how certain domination or non-domination of $\y$ is.
	Formally, the domination $d(\y)$ is a binary variable that equals 1 if $\exists\x\in X:\mathbf f(\x)\preceq\y$ and 0 otherwise. The Pareto front being a boundary for domination, $d$ can also be expressed in the following way
	\[d(\y)=\begin{cases}
	1 \text{ if } \PY\preceq\y\\
	0 \text{ otherwise }
	\end{cases}\]
	$d(\y)$ can be seen as a binary classifier between dominated and non-dominated vectors whose frontier is the Pareto front and which is only known for previous observations $\y\in\mathbb Y$. 
	We now consider an estimator $D(\y)$ of $d(\y)$ that has value 1 when the random Pareto front of the GPs, $\mathcal P_{\Y(\cdot)}$, dominates $\y$, and has value 0 otherwise,
	\begin{equation*}
	D(\y) = \mathbbm 1(\mathcal P_{\Y(\cdot)} \preceq \y)
	\end{equation*}
	The reader interested in theoretical background about the random set $\mathcal P_{\Y(\cdot)}$ is referred to \cite{molchanov2005theory,TheseBinois}.
	$D(\y)$ is a Bernoulli variable closely related to the domination probability through $p(\y) = \Pr(D(\y)=1) = \E[D(\y)]$.
	If $p(\y)$ goes quickly from 0 to 1 as $\y$ crosses the Pareto front, the front	is precisely known around this $\y$.
	
	As the $Y_j(\cdot)$ are independent, it is easy to calculate the probability of domination for a specific $\x$, $\Pr(\Y(\x)\preceq\y)=\prod_{j=1}^{m}\Phi\left(\frac{y_j-\widehat{y}_j(\x)}{s_j(\x)}\right)$. 
	In contrast, the probability of dominating $\y$ at any $\x$ by $\Y(\x)$, $\Pr(\exists\x\in X:\Y(\x)\preceq\y)$, has no closed-form as many overlapping cases occur.
	Even for a discrete  set $\mathbb S=\{\x^{n+1},\dotsc,\x^{n+s}\}$, $\Pr(\exists\x\in \mathbb S:\Y(\x)\preceq\y)$ has to be estimated by numerical simulation because of the correlations in the Gaussian vector $\Y(\mathbb S)$.
	
	To estimate the probability $p(\y)$ that an objective vector $\y$ can be dominated, we 
	exploit the probabilistic nature of the GPs conditioned by previous observations: 
	we simulate $n_{sim}$ GPs, from which we extract the corresponding Pareto fronts $\widetilde{\PY}^{(k)}, k=1,\dotsc,n_{sim}$. 
	$D^{(k)}$ is a realization of the estimator and random variable $D(\y)$,
	\[D^{(k)}(\y)=\mathbbm 1(\widetilde{\PY}^{(k)}\preceq\y)=\begin{cases}
	1 \text{ if } \exists \mathbf z\in\widetilde{\PY}^{(k)} : \mathbf z\preceq\y\\
	0 \text{ otherwise }
	\end{cases}\] 
	Therefore, $p(\y)$ which is the mean of $D(\y)$ can be estimated by averaging the realizations,
	\[
	p(\y) = \lim_{n_{sim} \to \infty} \widehat p(\y) \qquad \text{ where }\qquad
	\widehat p(\y) =
	\frac{1}{n_{sim}}\sum_{k=1}^{n_{sim}}D^{(k)}(\y) ~.
	\] 
	One can easily check that $\widehat p(\y)$ is monotonic with 
	domination: if $\y'\preceq\y$, then every $\widetilde{\PY}^{(k)}$ dominating 
	$\y'$ will also dominate $\y$ and $\widehat p(\y')\le \widehat p(\y)$. 
	
	As discussed in Section \ref{sec:estimation}, the choice of points $\x^{n+i}\in X,i=1,\dotsc,s$ where the GP simulations are performed is crucial. Here, as the simulated Pareto fronts aim 
	at being possible versions of the true front, the $\x$'s are chosen according to 
	their probability of being non-dominated with regard to the current approximation 
	$\widehat{\PY}$ in a roulette wheel selection procedure \cite{LivreDeb} 
	to maintain both diversity and a selection pressure. 
	Using a space-filling DoE \cite{sobol,halton,doe} for the simulations would lead to less dominating simulated fronts, and to an under-estimated probability of dominating $\y$. 
	Another advantage of this technique is that the computational burden resides in the $\x$ selection procedure and the simulation of the GPs. 
	Once the simulated fronts have been generated, $p(\cdot)$ can be estimated for many $\y$'s $\in Y$ without significant additional effort.
	
	The variance of the Bernoulli variable $D(\y)$ is $p(\y)(1-p(\y))$ and can be interpreted as a measure of uncertainty about dominating $\y$. 
	When $p(\y)=1$ or 0, no doubt subsists regarding the fact that $\y$ is dominated or non-dominated, respectively. 
	When half of the simulated fronts dominate $\y$, $p(\y)=0.5$ and $p(\y)(1-p(\y))$ is maximal: uncertainty about the domination of $\y$ is at its highest. 
	
	Here, we want to check convergence to the Pareto front center which, by definition, is located on the estimated Ideal-Nadir line $\widehat{\L}$. 
	We therefore consider the uncertainty measure ($p(\y)(1-p(\y))$) for $\y$ varying along $\widehat{\L}$, convergence at the center being equivalent to a sufficiently small uncertainty of $D(\y)$ along $\widehat{\L}$.
	This leads to saying that convergence to the center has occurred if the \emph{line uncertainty} is below a threshold, $U(\widehat{\L})<\varepsilon$, where the line uncertainty is defined as
	\begin{equation}
	U(\widehat{\L}) := \frac{1}{\vert\widehat{\L}\vert}\int_{\widehat{\L}}p(\y)(1-p(\y))d\y~.
	\label{eq-convcenter}
	\end{equation}
	$\vert\widehat{\L}\vert$
	is the (Euclidean) distance between the estimated Ideal and Nadir points and $\varepsilon$ is a small positive threshold.
	Figure \ref{fig:p1p_milieu} illustrates a case of detection of convergence to the Pareto front center.
	On the left plot, when moving along $\widehat\L$ from $\widehat \I$ to $\widehat \N$, $p(\cdot)$ goes quickly from 0 to 1 when crossing the estimated and real Pareto fronts. 
	The variability between the simulated Pareto fronts is low in the central part, as seen on the right plot: $p(\y)(1-p(\y))$ equals 0 (up to estimation precision) all along $\widehat{\L}$ and in particular near the center of the approximation front where sufficiently many points $\mathbf f(\x)$ have been observed and no further improvement can be achieved.
		
	If $p(\y)$ equals either 0 or 1 along $\widehat{\L}$, all $n_{sim}$ simulated fronts are intersected at the same location by $\widehat{\L}$, thus convergence is assumed in this area.
	To set the threshold $\varepsilon$, we consider that convergence has occurred in the following limit scenarios: 
	as there are 100 integration points on $\widehat{\L}$ for the computation of the criterion (\ref{eq-convcenter}), $p(\y)$ jumps successively from 0 to 0.01 and 1 (or from 0 to 0.99 and 1);
	or $p(\y)$ jumps successively from 0 to 0.005, 0.995 and 1.
	This rule leads to a threshold $\varepsilon=10^{-4}$.
		
	\begin{figure}
		\centering
		\begin{minipage}{.48\textwidth}
			\centering
			\includegraphics[width=\linewidth]{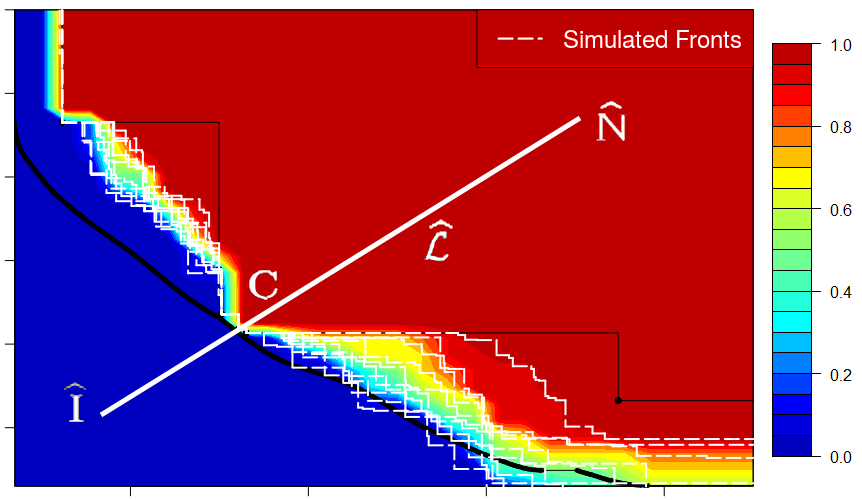}
			(a) $\widehat p$ in $Y$ space
			\label{fig:test1}
		\end{minipage}
		\begin{minipage}{.48\textwidth}
			\centering
			\includegraphics[width=\linewidth]{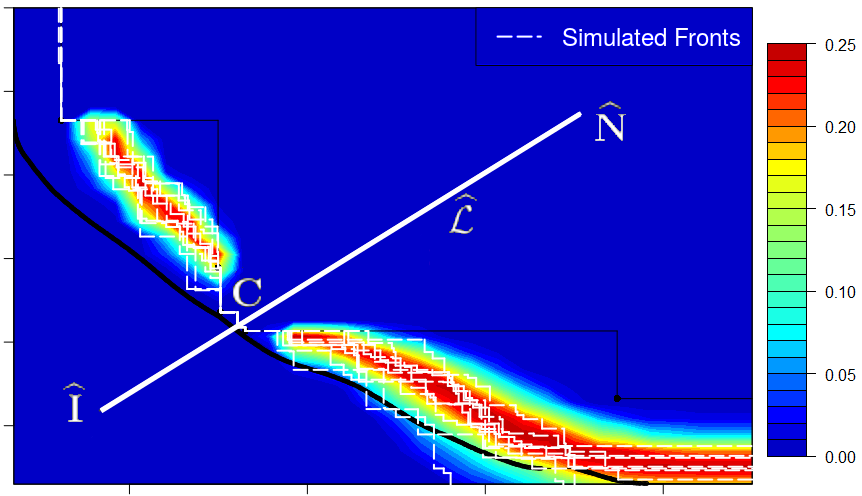}
			(b) $\widehat p(1-\widehat p)$ in $Y$ space
			\label{fig:test2}
		\end{minipage}
		\caption{Detection of convergence to the Pareto front center using simulated fronts. Five of the $n_{sim}=200$ simulated fronts are shown. The approximation $\widehat{\PY}$ (thin black line) has converged towards $\PY$ (thick black curve) at the center of the front (intersection with $\widehat\L$). Consequently, $p(\y)$ grows very fast from 0 to 1 along $\widehat\L$ and the domination uncertainty on the right plot $p(\y)(1-p(\y))$ is null.
			\label{fig:p1p_milieu}
		}
	\end{figure}
	
	\section{Expansion of the approximation front within the remaining budget}
	\label{sec:seconde_phase}
	If convergence to the center of the Pareto front is detected and the objective functions budget is not exhausted, the goal is no longer to search at the center where no direct progress is possible, but to investigate a \emph{wider central part} of the Pareto front.
	A second phase of the algorithm is started during which	a new, fixed, reference point $\RR$ is set for the EHI infill criterion. 
	To continue targeting the central part of the Pareto front, the new $\RR$ has to be located on $\widehat{\L}$. The more distant $\RR$ is from $\PY$, the broader the targeted area in the objective space will be, as $\mathcal I_{\RR}\subset \mathcal I_{\RR'}$ if $\RR\preceq\RR'$. As shown in Figure \ref{fig:zone_refpoints}, $\RR$ is instrumental in deciding in which area solutions are sought.
	After having spent the $b$ remaining calls to the objective functions, we would like to have (i) an approximation front $\widehat{\PY}$ as broad as possible, (ii) which has converged to $\PY$ in the entire targeted area $I_{\RR}$. These goals are conflicting: at a fixed budget $b$, the larger the targeted area, the least $\PY$ will be well described. The reference point leading to the best trade-off between convergence to the Pareto front and width of the final approximation front is sought.
	
	\begin{figure}[!ht]
		\centering
		\includegraphics[width=0.7\textwidth]{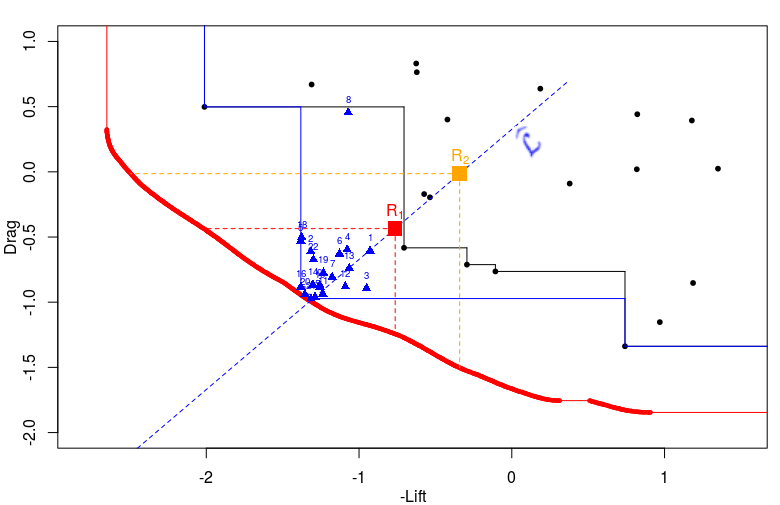}
		\caption{Two possible reference points $\RR_1$ and $\RR_2$ located on $\widehat\L$, and the part of the Pareto front they allow to target when used within EHI}
		\label{fig:zone_refpoints}
	\end{figure}
	
	\vskip\baselineskip
	To choose the \emph{best} reference point for the remaining $b$ iterations, we anticipate the behavior of the algorithm and the final approximation front obtained with a given $\RR$. 
	Candidate reference points $\RR^c, c=1,\dotsc,C$, are uniformly distributed along $\widehat\L$ with $\RR^0=\widehat{\mathbf C}$ and $\RR^C=\widehat{\N}$. 
	Each $\RR^c$ is related to an area in the objective space it targets, $\mathcal I_{\RR^c}$. 
	Starting from the current GPs $\Y(\cdot)$, $C$ virtual optimization scenarios are anticipated by sequentially maximizing EHI $b$ times for each candidate reference point $\RR^c$. 
	We use a Kriging Believer \cite{KrigingBeliever} strategy in which the metamodel is augmented at each virtual iteration using the kriging mean functions $\widehat\y(\x^{*i})$, $\x^{*i}$ being the maximizer of EHI$(\cdot;\RR^c)$ at one of the virtual step $i\in\{1,\dotsc,b\}$. 
	Such a procedure does not modify the posterior mean $\widehat\y(\cdot)$, but it changes the posterior variance $\mathbf s^2(\cdot)$. 
	The conditional GPs $\Y(\cdot)$ augmented by these $b$ Kriging Believer steps are denoted as $\Y^{KB}(\cdot)$.
	
	The optimizations for the $\RR^c$'s are independent and parallel computing can be exploited (in our implementation, it has been done through the \texttt{foreach} \texttt{R} package).
	At the end, $C$ different final Kriging Believer GPs ${\Y}^{KB}(\cdot)$ are obtained that characterize the associated $\RR^c$.
	$\RR$'s close to the center produce narrow and densely sampled final fronts whereas  distant $\RR$'s lead to more extended and sparsely populated fronts, as can be seen in Figure~\ref{fig:virtual_infills_different_R}. 
	
	\begin{figure}[!ht]
		\centering
		\includegraphics[width=.5\textwidth]{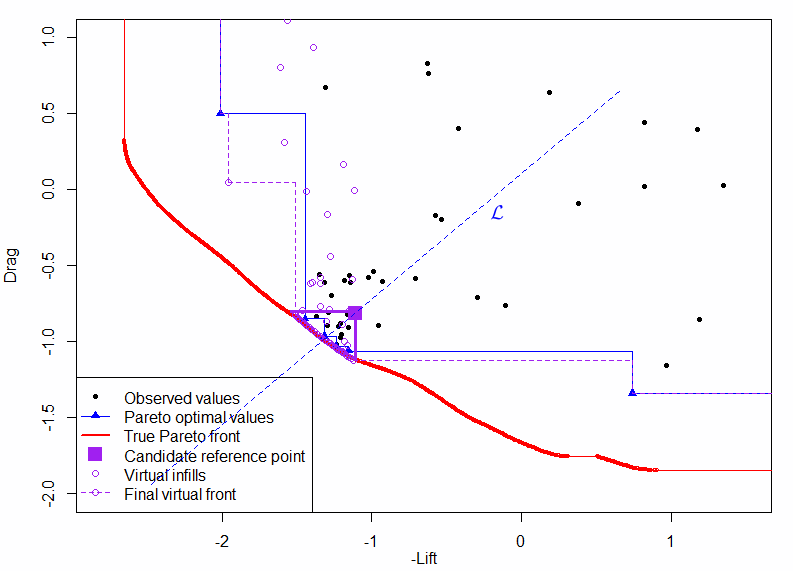}
		\includegraphics[width=.47\textwidth]{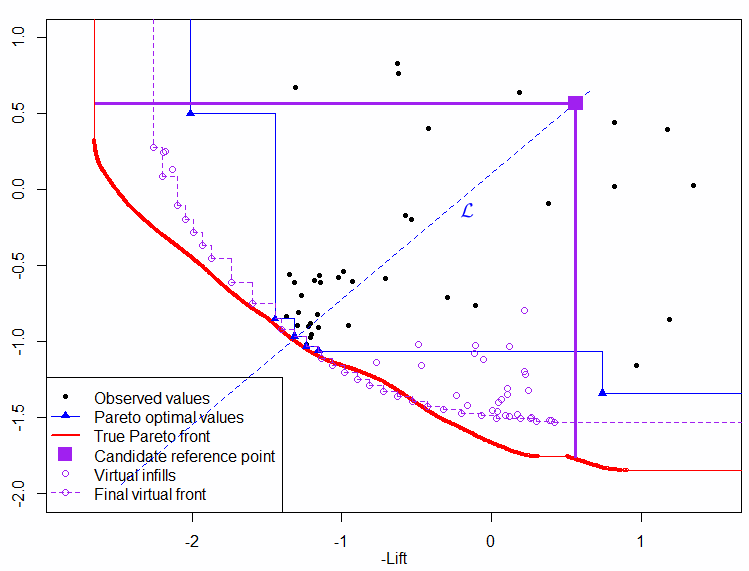}
		\caption{Virtual infills obtained by sequentially maximizing EHI$(\cdot;\RR)$ $b$ times, for two different reference points (purple squares). The shape and sampling density of the final virtual front depends on $\RR$.
		}
		\label{fig:virtual_infills_different_R}
	\end{figure}
	
	\vskip\baselineskip
	To measure how much is known about the Pareto front, we generalize the line uncertainty of Equation~(\ref{eq-convcenter}) to the 
	volume $\mathcal I_\RR$ and define the \emph{volume uncertainty}, $U(\RR;\Y)$ of the GPs $\Y(\cdot)$. 
	The volume uncertainty is the average domination uncertainty $p(\y)(1-p(\y))$ in the volume that dominates $\RR$ bounded by the Ideal point where $p(\y)$ is calculated for $\Y(\cdot)$,
	\begin{equation}
	U(\RR;\Y) := \frac{1}{Vol(\I,\RR)}\int_{\I\preceq\y\preceq\RR}p(\y)(1-p(\y))d\y ~. 
	\label{eq:Uvol}
	\end{equation}
	
	In practice, the estimated Ideal $\widehat \I$ is substituted for the Ideal.
	$U(\RR;\Y)$ quantifies the \emph{convergence} to the estimated Pareto front in the progress region delimited by $\RR$. It is a more rigorous uncertainty measure than others based on the density of points in the $\mathcal Y$ space as it accounts for the possibility of having many inverse images $\x$ to $\y$.
	
	The optimal reference point is the one that creates the largest and sufficiently well populated Pareto front.
	The concepts of augmented GPs and volume uncertainty to measure convergence allow to define the \emph{optimal reference point},
	\begin{equation}
	\begin{split}
	\RR^{*} := \RR^{c^*} \quad & \text{ where }\quad c^*=\max_{c=1,\dotsc,C} c \\
	& \text{ such that }U(\RR^c;\Y^{KB})<\varepsilon
	\label{eq:Rstar}
	\end{split}
	\end{equation}
	Note that the uncertainty is calculated with the augmented GPs $\Y^{KB}(\cdot)$, i.e., the domination probabilities $p(\y)$ in Equation~(\ref{eq:Uvol}) are obtained with $\Y^{KB}(\cdot)$. Associated to $\RR^*$ is the \emph{optimal improvement region}, $\mathcal I_{\RR^*}$, that will be the focus of the search in the second phase.
	For $\RR^*$ to be able to depart from the center, a threshold $\varepsilon$ 10 times larger as the one of Equation~(\ref{eq-convcenter}) is applied.
	The procedure for selecting $\RR$ after local convergence is illustrated in Figures~\ref{fig:choice_R_second_phase} and \ref{fig:apres_second_phase}. The initial DoE is made of 20 points and $\varepsilon=10^{-3}$. 
	Convergence to the center is detected after 26 added points, leaving $b=54$ points in the second phase of the algorithm for a total budget of 100 $\mathbf f(\cdot)$ evaluations.
	Figure~\ref{fig:choice_R_second_phase} shows the final virtual Pareto fronts obtained for two different reference points, as well as simulated fronts sampled from the final virtual posterior (those fronts are used for measuring the uncertainty).
	On the left, the area targeted by $\RR$ is small, and so is the remaining uncertainty ($U(\RR;\Y^{KB})=3\times10^{-6}<10^{-3}$). On the right, a farther $\RR$ leads to a broader approximation front, but to higher uncertainty ($U(\RR;\Y^{KB})=0.0015>10^{-3}$). 
	Figure~\ref{fig:apres_second_phase} represents the approximation front obtained when using the optimal $\RR^*$ ($U(\RR^*;\Y^{KB})=9.4\times10^{-4}$) of Equation~(\ref{eq:Rstar}). 
	A complete covering of $\PY$ in the targeted area is observed. As the remaining budget after local convergence was important in this example (54 iterations), the Pareto front has been almost entirely unveiled.
	
	\begin{figure}[!ht]
		\centering
		\includegraphics[width=\textwidth]{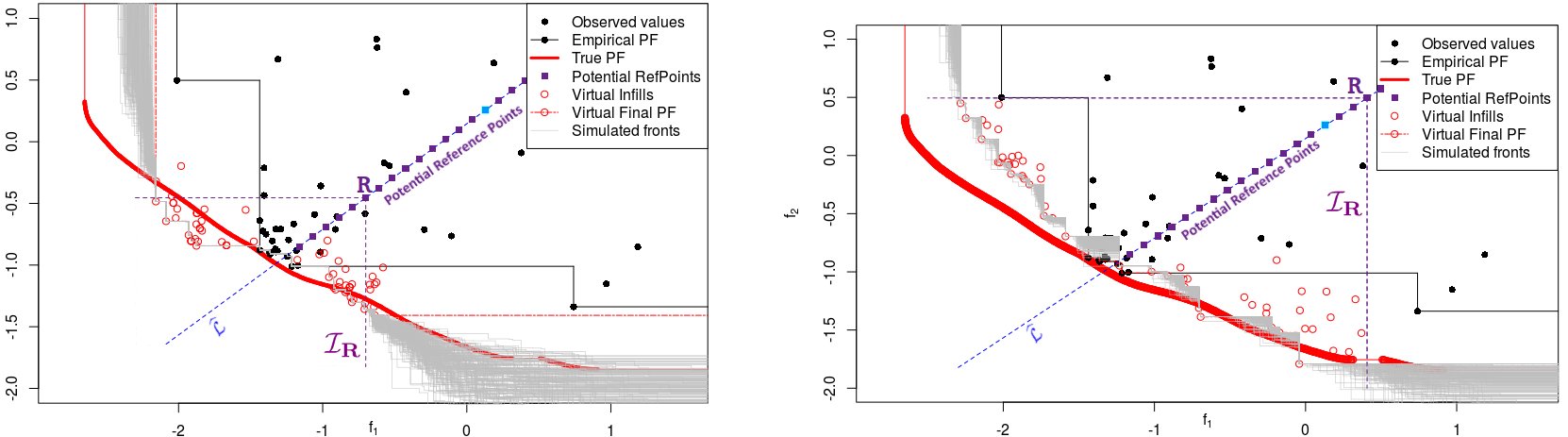}
		\caption{Uncertainty quantification through final virtual fronts.
			The anticipated remaining uncertainty can be visualized as the grey area within $\mathcal I_R$ roamed by the sampled fronts. It is small enough for the $\RR$ used on the left and too important for the $\RR$ on the right. The blue reference point on $\widehat\L$ is $\RR^{*}$, the farthest point that leads to a virtual front with low enough uncertainty.}
		\label{fig:choice_R_second_phase}
	\end{figure}
	
	\begin{figure}[!ht]
		\centering
		\includegraphics[width=0.6\textwidth]{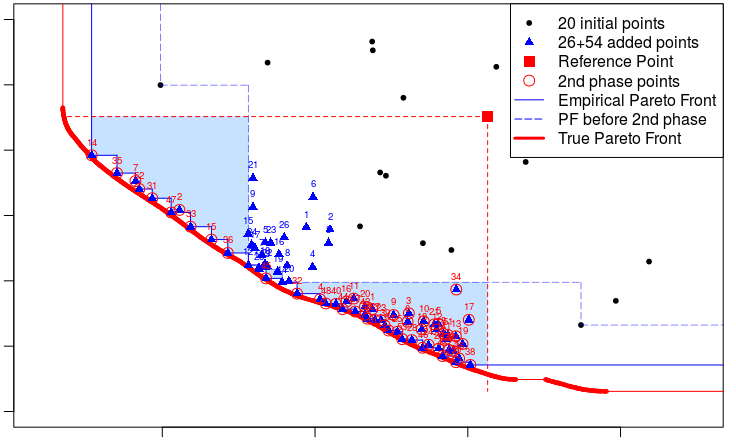}
		\caption{Final approximation of the Pareto front with, as a red square, the reference point of the second phase chosen as a solution to Problem~(\ref{eq:Rstar}), $\RR =\RR^*$. The objectives values added during the second phase of the algorithm are circled in red. Compared to the initial front obtained when searching for the center, the last approximation front is expanded as highlighted by the blue hypervolume.
		}
		\label{fig:apres_second_phase}
	\end{figure}

\paragraph{Possible improvements: }
The computational cost of this second phase of the C-EHI algorithm can be further improved.
When $m\le3$ the EHI has a closed-form expression and its update can be accelerated using the kriging variance update formulae \cite{chevalier2014corrected}. This is computationally appealing if the maximization is carried out on a fixed discrete set of designs. Another possibility for accelerating the virtual iterations is to replace the costly EHI by a cheaper and similar acquisition function such as SMS \cite{SMS}, or the Matrix-Based Expected Improvement \cite{zhan2017expected}. A last alternative is to pre-compute the Pareto set of the kriging mean functions, $\PX(\widehat{\y}(\cdot))$, using an EMOA, and to iteratively choose $\x^{*i}=\underset{\x\in\PX(\widehat{\y}(\cdot))}{\arg\max}\text{EHI}(\cdot,\RR^c)$.

	\section{Algorithm implementation and testing}
	\label{sec:results}
	\subsection{Implementation of the C-EHI algorithm}
	\label{sec:CEHI}
	The concepts and methods defined in Sections~\ref{sec:center} to \ref{sec:seconde_phase} are put together to make the C-EHI algorithm which stands for Centered Expected Hypervolume Improvement. The \texttt{R} package \texttt{DiceKriging} has been used for building the Gaussian processes and additional implementations were written in the \texttt{R} language. The C-EHI algorithm which was sketched in Figure~\ref{fig:resume_algo} is further detailed in Algorithm~\ref{algo}. The integral for $U(\widehat\L)$ is estimated numerically using $N=100$ points regularly distributed along $\widehat\L$. $U(\RR)$ is computed by means of Monte-Carlo techniques with $N=10^5$ samples.

The C-EHI algorithm can easily be extended to target non-central, user-defined, parts of the Pareto front. This extension is described in Appendix \ref{sec:noncentraltarget}.
	
	\begin{algorithm}[!ht]
		\caption{C-EHI (Centered Expected Hypervolume Improvement)}
		\label{algo}
		\begin{algorithmic}
			\STATE \textbf{Inputs:} uncertainty limit $\varepsilon$, $budget$\medskip
			\STATE create an initial DoE of $n$ points;
			\STATE initialize $m$ GPs for each objective $f_i, i=1,\dotsc,m$; \# see Section~\ref{sec:bayesian_optimization}
			\STATE $t=n$; $U(\widehat{\L})=+\infty$; \quad \# $U(\widehat{\L})$ line uncertainty, Eq.~(\ref{eq-convcenter})
			\STATE \# First phase: optimization towards the center
			\WHILE{($U(\widehat{\L})>\varepsilon$) \AND ($t\le budget$) }
			\STATE estimate $\widehat{\mathbf I}$, $\widehat{\mathbf N}$ and $\widehat{\mathbf C}$; \quad \# see Section~\ref{sec:center}
			\STATE $\x^{t+1}=\underset{\x\in X}{\arg\max}\text{ mEI}(\x; \widehat{\mathbf C})$; \quad \# see Section~\ref{sec:target}
			\STATE evaluate $\mathbf f(\x^{t+1})$ and update the GPs; \# see Section~\ref{sec:bayesian_optimization}
			\STATE compute $U(\widehat{\L})$; \quad \# see Section~\ref{sec:conv}
			\STATE $t=t+1$;
			\ENDWHILE
			\STATE \# If remaining budget after convergence: second phase
			\STATE \# Determine widest accurately attainable area and target it, see Section~\ref{sec:seconde_phase}
			\IF{$t\le budget$}
			\STATE choose $\RR^*$ solution of Eq.~(\ref{eq:Rstar}); \# see Section~\ref{sec:seconde_phase}
			\STATE $\RR^*=\underset{U(\RR;\Y^{KB})<\varepsilon}{\underset{\text{s.t. }\RR\in\widehat\L}{\arg\min}}\Vert\RR-\widehat\N\Vert$;
			\ENDIF
			\WHILE{$t\le budget$}
			\STATE $\x^{t+1}=\underset{\x\in X}{\arg\max}\text{ EHI}(\x; \RR^*)$; \# target larger improvement region $\mathcal I_{\RR^*}$
			\STATE evaluate $f_i(\x^{t+1})$ and update the GPs;
			\STATE $t=t+1$;
			\ENDWHILE
			\RETURN final DoE, final GPs, and approximation front $\widehat{\PY}$
		\end{algorithmic}
	\end{algorithm}
	
	\subsection{MetaNACA: a practical performance test bed}
	\label{sec:metaNACA}
	Comparing the efficiency of multi-objective optimizers is difficult because the performance of the algorithms depends on the test functions and a proper metric needs to be chosen to compare the Pareto fronts. The COCO platform \cite{coco-biobj-TR2016} allows the comparison of bi-objective optimizers on a general set of functions with the hypervolume improvement (calculated with respect to the Nadir point) as a performance measure.
	In the spirit of MOPTA \cite{jones2008large}, the choice was made here to test the optimizers on a set of functions that were designed to represent the real-world problems of interest.  The test set is called MetaNACA. 
For the purpose of comparison with other approaches, this set will be completed by two classical problems in Section \ref{sec:expe_analytic}.
	
The MetaNACA test bed has been built by combining surrogate modeling techniques and aerodynamic data coming from 2D simulations of the flow around a NACA airfoil (RANS with $k$-$\varepsilon$ turbulence model). 
More precisely, for each aerodynamic objective, a GP with a Matérn 5/2 kernel is first fit to an initial large space-filling DoE of 1000 designs. The evaluation of the aerodynamic performance of one design has a cost of approximately 15 minutes (wall clock time, on a standard personal computer). 
	Exploiting parallel computation, the evaluation of such a large DoE remains affordable.	Next, a sequential Bayesian multi-objective optimization infill criterion (as described in Section~\ref{sec:review_bayesian_optim}) is employed to enrich the DoE. 
	The goal of this step is to enhance the GPs in promising areas that are likely to be visited by a multi-objective optimizer. Last, 100 additional designs, drawn randomly in the design space are evaluated. 
	While these last points will help in improving the accuracy, 
	they are mainly useful in removing any artificial periodicity in the design space due to space-filling properties which might hinder the estimation of correlation parameters. 
	The evaluation, that is to say the computation of the kriging mean of the final GPs is very rapid (less than 0.1s on a personal computer), and has turned out to be an accurate substitute to the aerodynamic simulations after validation (Q2 between 0.96 and 0.99).
	The whole process of approximation building by a GP was repeated for the variable dimensions (CAD parameters) $d=3,8,22$ and $m=$ 2 to 4 objectives (lift and drag at 2 different angles of attack: 0$^\circ$ and 8$^\circ$). 
We have then computed the ``true'' Pareto front by applying the NSGA-II multi-objective optimization algorithm \cite{NSGAII} to the kriging mean functions. 
	In the following, experiments are only reported for $d=8$ variables, which compromises the dimension of the problem and the time of one optimization run, but the same conclusions have been obtained for the cases $d=3$ and $d=22$. One typical run of the C-EHI algorithm for $d=22$, $m=2$ objectives is shown in Figure \ref{fig:exemple_22d}.
	
Figure~\ref{fig:comparaison_approches} shows a typical run of the C-EHI algorithm when facing too restricted a budget to uncover the entire Pareto front. 
During the first iterations, the center of the Pareto front is targeted. Once local convergence has been detected, the part of the Pareto front in which convergence can be accurately obtained within the remaining budget is forecasted, and then targeted. The approximation of $\PY$ is enhanced in its central part. 
The same results are observed with three or four objectives and a typical run with $m=3$ is given in Figure~\ref{fig:comparaison_3obj}.
The targeting methodology gains in importance as the number of objectives increases because the relative number of Pareto optimal solutions grows and it becomes harder to approximate all of them.
\clearpage\thispagestyle{empty}
\begin{figure}[!ht]
	\centering
	\includegraphics[width=\textwidth]{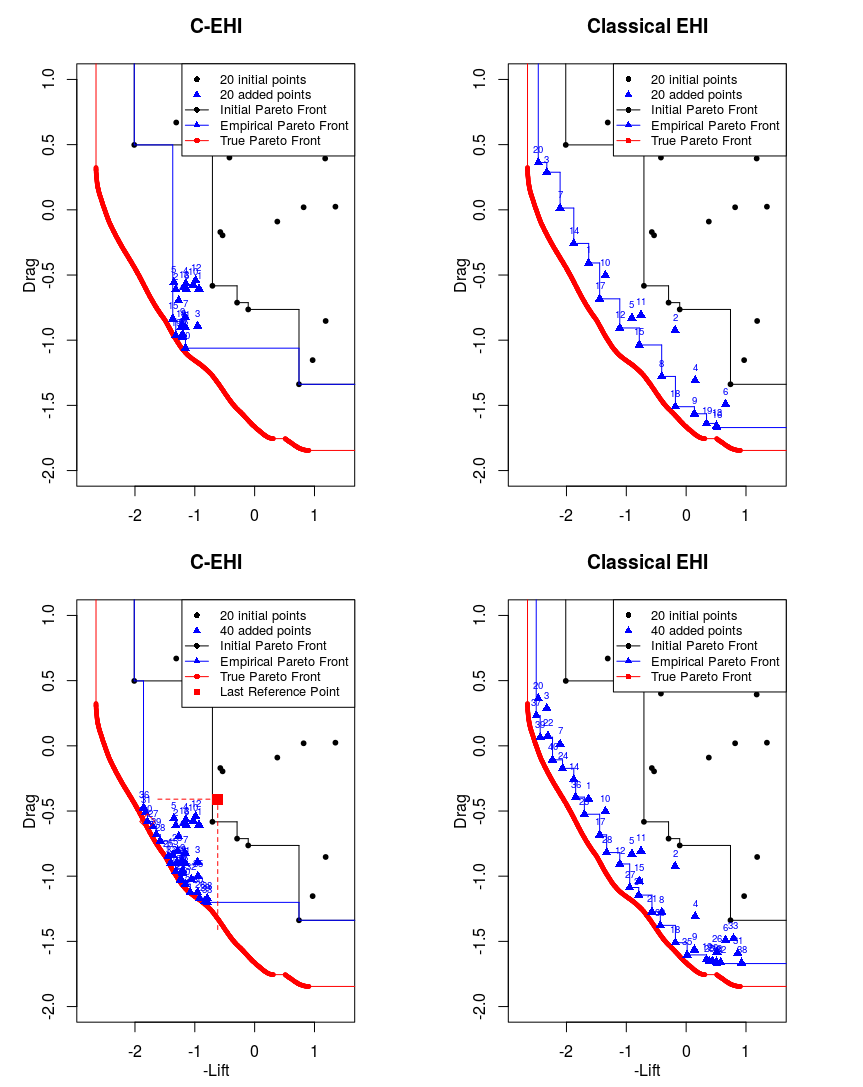}
	\caption{Comparison of C-EHI (left) with the standard EHI (right). Top: approximation front after 20 iterations: C-EHI better converges to the center of the Pareto front to the detriment of the front ends. Bottom: approximation front after 40 iterations: after local convergence (at the 22nd iteration here), a wider optimal improvement region (under the red square) is targeted for the 18 remaining iterations, is targeted by the algorithm. Compared to the standard EHI, the Pareto front is sought in a smaller balanced part of the objective space, at the advantage of a better convergence.}
	\label{fig:comparaison_approches}
\end{figure}\clearpage

\begin{figure}[h!]
	\centering
	\includegraphics[width=0.5\textwidth]{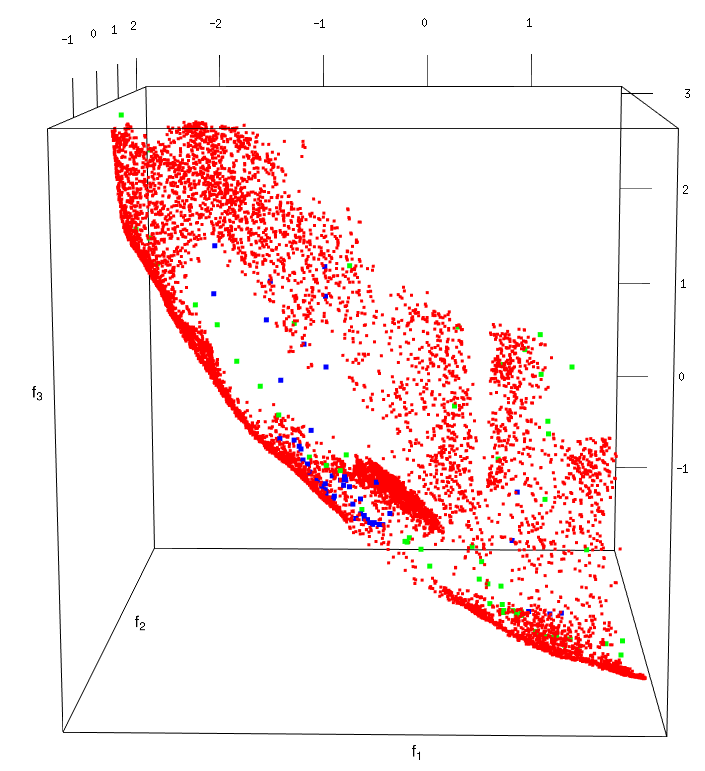}
	\caption{Typical C-EHI (blue points) and EHI (green points) runs on the MetaNACA problem with $m=3$ objectives. The true Pareto front (red) is attained at its center by C-EHI while it is approximated globally yet less accurately by EHI.}
	\label{fig:comparaison_3obj}
\end{figure}

\begin{figure}[h!]
	\centering
	\includegraphics[width=0.48\textwidth]{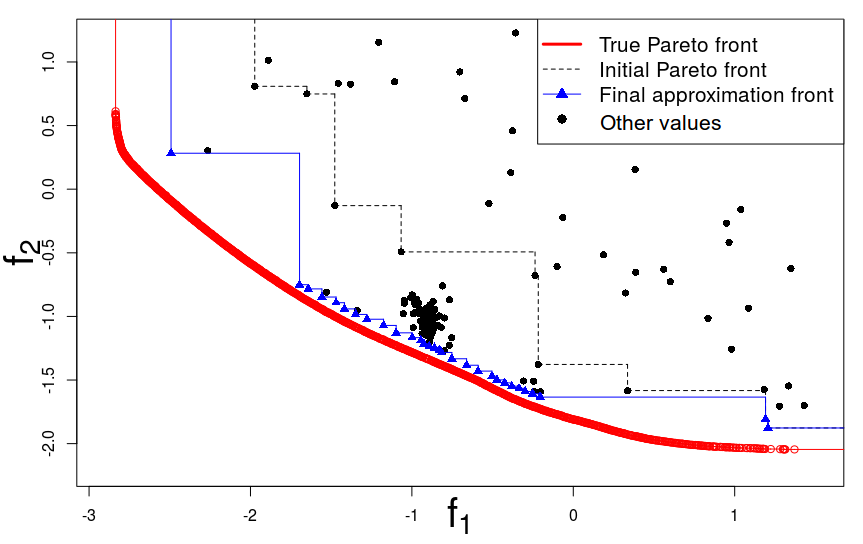}
	\includegraphics[width=0.48\textwidth]{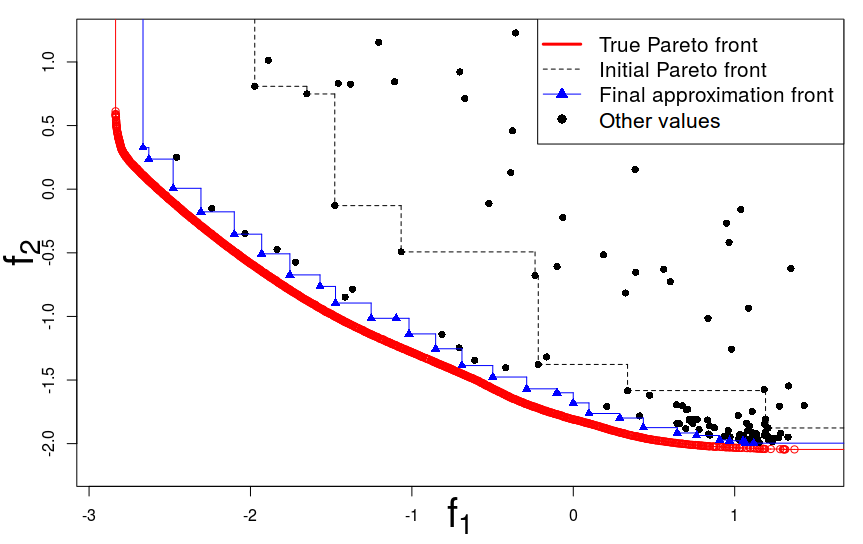}
	\caption{Comparison between C-EHI (left) and EHI (right) for one run of the MetaNACA problem in $d=22$ dimensions. 150 calls to $\mathbf f(\cdot)$ were allowed and 50 of them were devoted to the initial DoE. Again, C-EHI improves the Pareto front at its center, 
		EHI tries to uncover the whole front at the cost of a lower accuracy.
	}
	\label{fig:exemple_22d}
\end{figure}
	
	\subsection{Performance metrics}
	
For comparing approximation fronts produced by multi-objective algorithms, considering several indicators is recommended \cite{knowles2002metrics,zitzler2003performance}.
	In the following, we use three common metrics: the non-dominated 
hypervolume \cite{TheseZitzler} that we normalize with respect to the hypervolume of the true Pareto front, and the Inverse Generational Distance (IGD) 
\cite{IGD}, which corresponds to the mean distance between points of a reference set (in our case the true Pareto front) and the approximation front.
	A modified version of the $\varepsilon$-Indicator \cite{zitzler2003performance} is also used for measuring the minimal distance to the Pareto front of an approximation front: $\varepsilon(\widehat{\PY};\PY):=\underset{\y\in\widehat\PY}{\min}{\min}~\{\varepsilon:\nexists\mathbf z\in\PY,\mathbf z\preceq\y-\varepsilon\cdot\mathbf1_m\}$. 
It corresponds to the smallest value that has to be subtracted to $\widehat\PY$ such that one of its solutions becomes non-dominated with regard to $\PY$. To simplify, we will still refer to the $\varepsilon$-Indicator when considering this indicator.
	These metrics deal with approximations of the \emph{whole} Pareto front, and empirical Pareto fronts having a similar shape to the one shown in blue in Fig.~\ref{fig:zone_refpoints} will be measured as performing poorly as they do not cover the entire front. 
	
	In order to focus on the central part of the Pareto front, the indicators are restricted to the regions of interest
	\begin{equation*}
	{\mathcal I}_w := \{\y\in Y:\y\preceq \RR^w\} \quad\text{ where }\quad \RR^w := (1-w)\mathbf C+w\N~.
	\end{equation*}
	To focus on the central part, $w$'s ranging between 0.05 and 0.3 will be used.

Another performance metric, the attainment time, will allow to measure the convergence speed. 
The attainment time of $\RR^w$ which is the number of functions evaluations (including the initial DoE)
required by an algorithm to dominate $\RR^w$\footnote{If one run does not attain $\RR^w$, we compute a rough estimator of the Expected Runtime \cite{auger2005performance}, $\overline{T_s}/p_s$, where $\overline{T_s}$ and $p_s$ correspond to the runtime of successful runs and the proportion of successful runs, respectively.}.
	
	\subsection{Test results}
\label{sec-test_results}
	
	\subsubsection{Experiments with analytical test functions}
\label{sec:expe_analytic}
In this section, we investigate how C-EHI converges to the center of the Pareto front and compare it with two state-of-the-art algorithms: a Bayesian optimizer with the EHI infill criterion \cite{EHI} and the Evolutionary Algorithm NSGA-II \cite{NSGAII}. As discussed in Section \ref{sec:multiobj}, EHI is defined up to a reference point which is instrumental in selecting the part of the objective space $\mathcal I_\RR$ where $\PY$ is sought. 
To target the entire $\PY$ with EHI, $\RR$ should be placed at the Nadir point of the true Pareto front. 
Since $\PY$ is unknown, it is suggested \cite{ishibuchi2018specify,TheseFeliot} to take a conservative empirical Nadir point, $r\widehat\N+(1-r)\widehat\I$ with $r=1.1$
, where $\widehat\I$ and $\widehat\N$ stand here for the empirical Ideal and Nadir points.
	
This EHI implementation depends on $\widehat\PY$ through $\widehat\I$ and $\widehat\N$. 
We therefore consider three additional EHI variants.
In the idealized EHI$_{\PY}$, the reference point is $\RR:=\N$, the true Nadir point. 
In this variant, $\mathcal I_\RR=\mathcal I_{\PY}$: the considered improvement area is the right one. EHI$_{\PY}$ corresponds to an utopian setting where it would be known in advance where to look for the Pareto front in the objective space. 
Its interest is that it provides an upper bound on the expected performance of EHI.
	
The third variant, EHI$_\text{N}$, has $\RR$ defined as the estimated Nadir point of the Pareto front, $\widehat\N$ using the techniques of Section \ref{sec:estimation}. 
EHI$_\text{N}$ is a new version of the EHI algorithm: instead of defining $\RR$ relying on observed data such as the empirical front or extreme observations, $\RR$ is set up according to the metamodels.
	
Last, we consider the EHI$_\text{M}$ variant in which the reference point is $\RR:=\mathbf M$ where $\mathbf M$ stands for the maximal value observed, $M_j=\underset{i=1,\dotsc,t}{\max}f_j(\x^i)$, $j=1,\dotsc,m$. 
Contrarily to EHI$_\text{N}$, the maximum is taken over all the points instead of over those in $\widehat{\PY}$.
Such a reference point will often have large components. If it covers all of the objective space, it may over-emphasize the extreme parts of the Pareto front.

The algorithms are benchmarked with two popular analytical test functions for multi-objective optimization. 
The first one is the P1 problem of \cite{TheseParr}, which has $d=2$ dimensions and $m=2$ objectives. It is initialized with a design of experiments of size $n=8$ and run for 12 iterations. The second test problem is ZDT1 \cite{zdt2000a} in $d=4$ dimensions and $m=2$ objectives, initialized with a design of experiments of size $n=20$ and run for 40 additional iterations.
	
Two comparison metrics are considered. 
The first one is the hypervolume indicator restricted to $\mathcal I_w$ for $w=0.05,0.15,0.25$ to evaluate convergence and diversity in the central parts of the Pareto front. 
Figure \ref{fig:central_fronts} shows these improvement regions for both benchmark problems. 
The second performance metric is the attainment time which assesses the time it takes to a method for entering the improvement region irrespectively of the final hypervolume covered.
	
	\begin{figure}[h!]
		\centering
		\begin{subfigure}[b]{0.6\textwidth}\centering
		\includegraphics[width=\textwidth]{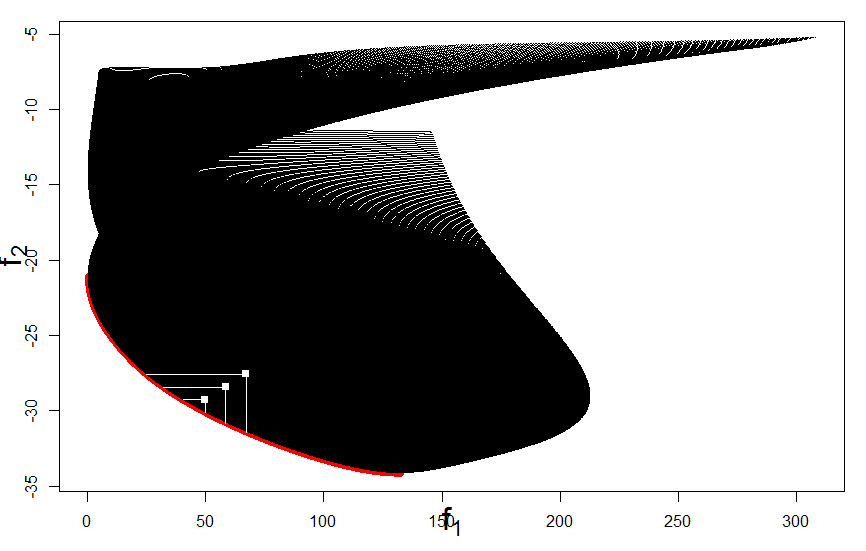}\caption{P1 objective space}
	\end{subfigure}\\
	\begin{subfigure}[b]{0.46\textwidth}
		\includegraphics[width=\textwidth]{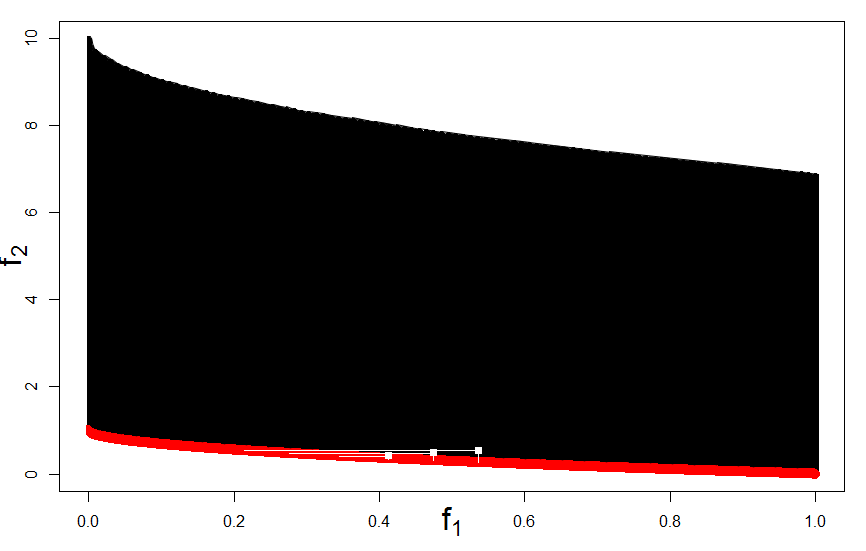}\caption{ZDT1 $(d=4)$ objective space}
	\end{subfigure}
		\begin{subfigure}[b]{0.46\textwidth}
		\includegraphics[width=\textwidth]{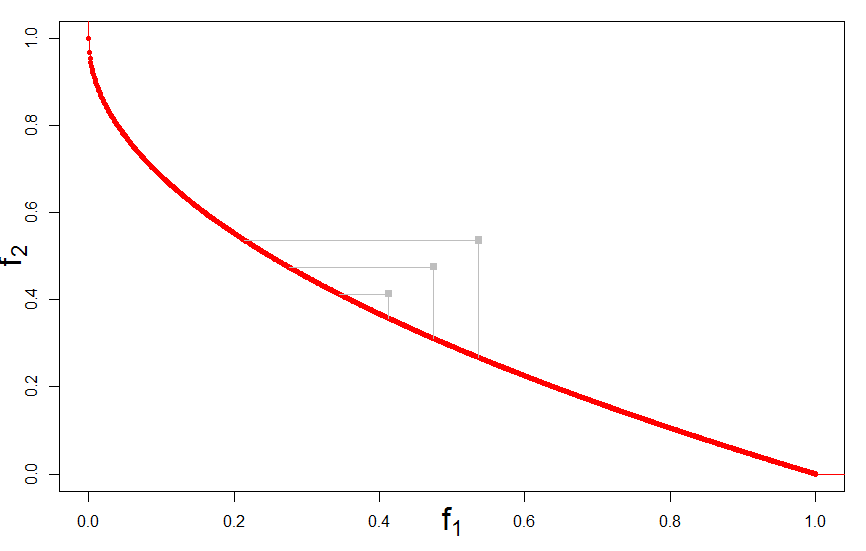}\caption{Zoom on the ZDT1 $(d=4)$ Pareto front}
	\end{subfigure}\\
		\caption{Pareto fronts (red) and objective spaces (black) of the P1 problem (top) and of the ZDT1, $d=4$, problem (bottom, zoom on $\PY$ on the right) with the $\mathcal I_w$ areas to which the performance metrics are restricted. These correspond to a central part of the Pareto front.
}
		\label{fig:central_fronts}
	\end{figure}
	
Runs are repeated 10 times starting from different initial space-filling designs. 
The metrics means and standard deviations are reported in Tables \ref{tab:results_p1} and \ref{tab:results_zdt1}. 
They are computed for C-EHI, the four EHI variants, and NSGA-II. 
The population size of NSGA-II is set to 12 and 20 for P1 and ZDT1, respectively.
The performance of NSGA-II is recorded at the smallest number of generations such that the number of functions evaluations is larger or equal to that of the Bayesian algorithms. This number of generations is 2 and 3 for P1 and ZDT1 and the metrics are on the NSGA-II$_b$ row in Tables \ref{tab:results_p1} and \ref{tab:results_zdt1}.
For comparison purposes, NSGA-II runs are continued until 120 and 800 functions evaluations are reached for the P1 and ZDT1 functions. The final metrics are given in both Tables on the NSGA-II$_+$ row.

\begin{table}[!ht]
		\centering
		\makebox[\textwidth][c]{
			\begin{tabu}{|c|c|c|c|c|c|c|}
				\hline
				 & \multicolumn{3}{c|}{Hypervolume} & \multicolumn{3}{c|}{Attainment time}\\\hline
				$w$ & 0.05 & 0.15 & 0.25 & 0.05 & 0.15 & 0.25\\\hline
				C-EHI & 0.185 \scriptsize(0.233) & 0.549 \scriptsize(0.263) & 0.668 \scriptsize(0.185) & {\color{red}21.6 \scriptsize[7]} & 13.1 \scriptsize(2.7) & 9.5 \scriptsize(1)\\
				EHI & 0.155 \scriptsize(0.218) & 0.465 \scriptsize(0.179) & 0.611 \scriptsize(0.114) & {\color{red}39.4 \scriptsize[4]} & 13.2 \scriptsize(2.6) & 11.4 \scriptsize(2.6)\\
				EHI$_{\PY}$ & 0.269 \scriptsize(0.260) & 0.446 \scriptsize(0.175) & 0.636 \scriptsize(0.136) & {\color{red}30.0 \scriptsize[6]} & 14 \scriptsize(3.2) & 11 \scriptsize(2.6)\\
				EHI$_\text{N}$ & 0.130 \scriptsize(0.158) & 0.312 \scriptsize(0.223) & 0.460 \scriptsize(0.192) & {\color{red}32.4 \scriptsize[5]} & {\color{red}16.7 \scriptsize[9]} & 11.5 \scriptsize(3.5)\\
				EHI$_\text{M}$ & 0.012 \scriptsize(0.039) & 0.202 \scriptsize(0.181) & 0.389 \scriptsize(0.136) & {\color{red}180 \scriptsize[1]} & {\color{red}22.7 \scriptsize[7]} & 12.6 \scriptsize(4.1)\\
				NSGA-II$_b$ & 0 & 0.052 \scriptsize(0.110) & 0.107 \scriptsize(0.183) & {\color{red}- \scriptsize[0]} & {\color{red}80 \scriptsize[2]} & {\color{red}51.1 \scriptsize[3]}\\
				NSGA-II$_+$ & 0.188 \scriptsize(0.219) & 0.576 \scriptsize(0.109) & 0.705 \scriptsize(0.069) & {\color{red}169.6 \scriptsize[5]} & 50.4 \scriptsize(31.1) & 41.3 \scriptsize(31.9)\\\hline
			\end{tabu}
		}
		\caption{Hypervolume and attainment time averaged over 10 runs (standard deviation in brackets), for different central parts of the Pareto front on the P1 problem. When at least one run did not attain $\RR^w$, red figures correspond to empirical runtimes with the number of successful runs in brackets. '-' indicates that no run was able to attain $\RR^w$ in the given budget.}
		\label{tab:results_p1}
	\end{table}

	\begin{table}[!ht]
	\centering
	\makebox[\textwidth][c]{
		\begin{tabu}{|c|c|c|c|c|c|c|}
			\hline
			& \multicolumn{3}{c|}{Hypervolume} & \multicolumn{3}{c|}{Attainment time}\\\hline
			$w$ & 0.05 & 0.15 & 0.25 & 0.05 & 0.15 & 0.25\\\hline
			C-EHI & 0.703 \scriptsize(0.049) & 0.895 \scriptsize(0.010) & 0.936 \scriptsize(0.006) & 26.8 \scriptsize(6.6) & 23.4 \scriptsize(2.2) & 23.4 \scriptsize(2.2)\\
			EHI & 0.065 \scriptsize(0.154) & 0.097 \scriptsize(0.204) & 0.101 \scriptsize(0.213) & {\color{red}145 \scriptsize[2]} & {\color{red}145 \scriptsize[2]} & {\color{red}145 \scriptsize[2]}\\
			EHI$_{\PY}$ & 0.611 \scriptsize(0.066) & 0.848 \scriptsize(0.029) & 0.901 \scriptsize(0.023) & 28.7 \scriptsize(2.8) & 22.8 \scriptsize(2.3) & 21.4 \scriptsize(0.5)\\
			EHI$_\text{N}$ & 0.362 \scriptsize(0.349) & 0.650 \scriptsize(0.246) & 0.740 \scriptsize(0.206) & {\color{red}48.1 \scriptsize[6]} & 22.2 \scriptsize(0.4) & 22.2 \scriptsize(0.4)\\
			EHI$_\text{M}$ & 0.575 \scriptsize(0.107) & 0.845 \scriptsize(0.038) & 0.906 \scriptsize(0.022) & 24.4 \scriptsize(5.6) & 22.2 \scriptsize(0.6) & 22.1 \scriptsize(0.3)\\
			NSGA-II$_b$ & 0 & 0 & 0 & {\color{red}- \scriptsize[0]} & {\color{red}- \scriptsize[0]} & {\color{red}- \scriptsize[0]}\\
			NSGA-II$_+$ & 0.375 \scriptsize(0.161) & 0.749 \scriptsize(0.075) & 0.842 \scriptsize(0.052) & 532.9 \scriptsize(143.4) & 331.9 \scriptsize(121) & 219.2 \scriptsize(101.5)\\\hline
		\end{tabu}
	}
	\caption{Hypervolume and attainment time averaged over 10 runs (standard deviation in brackets), for different central parts of the Pareto front on the ZDT1 problem. When at least one run did not attain $\RR^w$, red figures correspond to empirical runtimes with the number of successful runs in brackets. '-' indicates that no run was able to attain $\RR^w$ in the given budget.
}
	\label{tab:results_zdt1}
\end{table}

Before analyzing the results in more details, let us state the main conclusions of Tables \ref{tab:results_p1} and \ref{tab:results_zdt1}. 
On both test problems, C-EHI consistently outperforms all other EHI variants in terms of hypervolume and time to reach the central parts of $\PY$. 
The performances of the different optimizers depend on the test function and further explanations are given in the following. 
At the considered limited budget, the evolutionary algorithm NSGA-II gives a weaker approximation of the Pareto front central regions than the Bayesian methods, as measured by both the hypervolumes and the attainment times.

\subsubsection*{P1 problem}
The statistics of the hypervolumes reported in Table \ref{tab:results_p1} indicate that C-EHI better converges to the central part of the Pareto front than the other EHI algorithms.
The helped EHI$_{\PY}$ outperforms C-EHI only when $w=0.05$. 
This is due to the fact that this benchmark contains a local Pareto front (which can be seen on Figure \ref{fig:central_fronts} for small $f_1$ values and $f_2\approx-17$), which lightly deteriorates the Ideal and the Nadir point estimation, hence the estimation of the Center. 
The error in $\widehat{\mathbf C}$ leads to a slightly off-centered convergence which is highlighted by the fact that 3 C-EHI runs out of 10 did not attain this narrow part of $\PY$. 
Some difficulties in estimating $\N$ through GPs simulations are visible in the moderate performance of EHI$_\text{N}$ relatively to the standard EHI approach (where $\RR$ is defined according to the empirical front). 
Yet, as stated in Proposition \ref{prop:insensibilite}, the error in Nadir estimation barely affects C-EHI, but impacts EHI$_\text{N}$ more significantly.
Regarding EHI variants, EHI$_\text{M}$ performs poorly when compared to the standard EHI and EHI$_{\PY}$ because of the distant reference point which targets an unnecessarily large
part of the objective space. 
At the same number of function evaluations (20), C-EHI clearly outperforms NSGA-II which needs approximately 6 times more function evaluations to achieve the same performance. 
		
The attainment times recorded in Table \ref{tab:results_p1} for the P1 problem confirm that the center-targeting C-EHI reaches the central regions faster than the other methods. 
The thinnest area of interest ($w=0.05$) is attained more consistently (reached 7 times out of 10 against 6 times by EHI$_{\PY}$, 5 times by EHI$_\text{N}$, 4 times by EHI and 1 time by EHI$_\text{M}$). 
Because of its distant $\RR$, EHI$_\text{M}$ is the Bayesian method which needs the most function evaluations to find $\mathcal I_w$. 
The evolutionary NSGA-II is not able to attain $\mathcal I_{0.05}$ within 24 function evaluations, only 2 runs out of 10 attain $\mathcal I_{0.15}$ and 3 out of 10 attain $\mathcal I_{0.25}$. 
	
\subsubsection*{ZDT1 problem}
As shown at the bottom of Figure \ref{fig:central_fronts}, 
the ZDT1 problem has a wide $f_2$ range. In dimension $d=4$, it is difficult to find $f_2$ values in $\PY$'s range: only 0.8\% of $X$ leads to $f_2\le1$. On the contrary, all $f_1$ values are in $\PY$'s range.
Therefore, the definition of the part of the objective space where to seek $\PY$ through $\RR$ is critical.

C-EHI correctly identifies the center of $\PY$ and drives the optimization towards it, as evidenced by the larger hypervolumes of C-EHI in Table \ref{tab:results_zdt1} for all $w$'s. 
C-EHI has the best but one attainment time of $\mathcal I_{0.05}$ with 26.8 evaluations on the average. EHI$_\text{M}$ solely attains $\mathcal I_{0.05}$ in fewer function evaluations. 
It is worth mentioning that only $5\times10^{-6}$\% of the design space has an image in $\mathcal I_{0.05}$, 
highlighting the performance of C-EHI (and EHI$_\text{M}$ for the occasion).
The number of function evaluations to reach $\mathcal I_{0.15}$ and $\mathcal I_{0.25}$ is slightly larger for C-EHI than for the other EHI's. 
This is due to the fact that the first mEI iterations of the C-EHI algorithm sometimes target parts of $\PY$ that are not exactly at the center, because of ZDT1's objective space shape. Nonetheless, C-EHI corrects this initial inaccuracy and, at the end of the second phase, a better convergence is achieved as confirmed by the hypervolume.
Even though it is equipped with the correct $\RR$, EHI$_{\PY}$ does not exhibit results as good as C-EHI, except the attainment time of the wider central parts ($\mathcal I_{0.15}$ and $\mathcal I_{0.25}$). 

The EHI in which $\RR$ is computed through the empirical Ideal and Nadir points performs poorly.
Only two runs touch the central parts of $\PY$. Because the Pareto front of ZDT1 has a small $f_2$ range and a large $f_1$ range, the initial errors in $\widehat\RR$ cut large $f_1$ values out of the improvement region. Graphically, the search seems directed towards the left-hand-side of the Pareto front.
EHI$_\text{N}$ is outperformed by C-EHI and EHI$_{\PY}$, but achieves a much better convergence than EHI. 
This shows the benefits of estimating the location of the Nadir point through GP simulations instead of picking the empirical Nadir for $\RR$ in problems such as ZDT1, if the whole Pareto front is sought. 
Even though EHI$_\text{M}$ does not work well on general functions because of a too large targeted part in the objective space $\mathcal I_{\RR}$, it yields good results here both in terms of hypervolume and attainment time. 
Indeed, EHI$_\text{M}$ avoids the pitfalls of ZDT1 that were just mentioned, i.e., it does not remove large $f_1$ values from the improvement region.
At the same number of function evaluations (60, row NSGA-II$_b$), NSGA-II is never able to find any $\mathcal I_w$. 
Even when 800 designs (row NSGA-II$_+$) are evaluated, the hypervolume in these central areas is much smaller than that of C-EHI.

\subsubsection{Experiments on the MetaNACA test bed}
The Tables \ref{tab:hypervolume} to \ref{tab:epsilon} below contain the hypervolume indicator, the IGD, and the modified $\varepsilon$-Indicator for the 2, 3 and 4 objective MetaNACA test cases. 
They are computed in $\mathcal I_{0.1}$, $\mathcal I_{0.2}$ and $\mathcal I_{0.3}$, and averaged over 10 runs. Standard deviations are indicated in parentheses.
	The last column averages the indicator values restricted to $\mathcal I_{\RR^*}$ (the optimal reference point of Equation (\ref{eq:Rstar})) over the runs that reached the second phase. A - indicates that no run has reached the second phase for the considered budget. 
Similarly to the attainment times in the previous Section, red figures correspond to extrapolated indicators: when for at least one run, no solution was found in $\mathcal I_{w}$, the indicator is averaged over the runs which entered $\mathcal I_{w}$ and divided by the proportion of successful runs. Brackets indicate the number of successful runs.
	The indicator values of the C-EHI algorithm are compared to those obtained with the standard EHI \cite{EHI3} implementation of the \texttt{R} package \texttt{GPareto} \cite{GPareto} (right column). 
	In \texttt{GPareto}, the default reference point is taken at $\N +\mathbf{1}$.
Dealing with parsimonious calls to the objective functions, four tight optimization budgets are considered: 40, 60, 80 and 100 calls to $\mathbf f$. The 20 first calls are devoted to the initialization of the GPs using an LHS space-filling design \cite{stein1987large}, and the experiments are repeated 10 times starting from different initial designs.
	
	Figure \ref{fig:evolution_hypervolume} shows how the hypervolume indicator evolves with optimization iterations. The indicators are of course increasing with the iterations, and the C-EHI consistently outperforms the general EHI in finding points in the central part of the Pareto front for 2 and 3 objectives. For 4 objectives an important number of points obtained by both algorithms belongs to $\mathcal{I}_{0.2}$ and $\mathcal{I}_{0.3}$. 
While significantly more values (and Pareto-optimal values) are obtained by C-EHI in $\mathcal{I}_{0.2}$ and $\mathcal{I}_{0.3}$, EHI may episodically and non-significantly yield a larger hypervolume.
	
	A few words of caution are needed to read the Tables \ref{tab:hypervolume} to \ref{tab:epsilon}. 
As the width of the Pareto front that is targeted in the second phase depends on the remaining budget, runs of the C-EHI algorithm with different total budgets are not directly comparable. For instance, if convergence is detected after 35 iterations, the reference point that defines the targeted area for the last calculations $\RR^*$ will be different if 5 or 45 iterations remain. 
	The first case will concentrate on a very central part of the Pareto front, whereas the second will target a broader area. As a consequence, some numbers may express better performance in thinner portions of the Pareto front in spite of a smaller total budget, which is only due to the fact that they have explicitly targeted a smaller part of the solutions. 
	
	\begin{table}[!ht]
		\centering
		\setlength\tabcolsep{3pt}
		\makebox[\textwidth][c]{
			\begin{tabu}{|c|c|c|c|c|c|c|c|c|c|c|c|}
				\hline
				$m$ & $budget$ & \multicolumn{2}{c|}{$\RR^{0.1}$} & \multicolumn{2}{c|}{$\RR^{0.2}$} & \multicolumn{2}{c|}{$\RR^{0.3}$} & \multicolumn{2}{c|}{$\RR^*$}\\
				\rowfont{\scriptsize}
				& & C-EHI & EHI & C-EHI & EHI & C-EHI & EHI & C-EHI & EHI\\\hline
				\rowfont{\normalsize}
				& 40 & 0.275 \scriptsize(0.18) & 0.025 \scriptsize(0.04) & 0.498 \scriptsize(0.17) & 0.227 \scriptsize(0.15) & 0.581 \scriptsize(0.10) & 0.386 \scriptsize(0.19) & 0.664 & 0.253\\
				2 & 60 & 0.377 \scriptsize(0.19) & 0.096 \scriptsize(0.12) & 0.651 \scriptsize(0.11) & 0.342 \scriptsize(0.14) & 0.719 \scriptsize(0.09) & 0.525 \scriptsize(0.12) & 0.768 \scriptsize(0.13) & 0.418 \scriptsize(0.24)\\
				& 80 & 0.548 \scriptsize(0.10) & 0.118 \scriptsize(0.11) & 0.759 \scriptsize(0.05) & 0.398 \scriptsize(0.12) & 0.821 \scriptsize(0.03) & 0.572 \scriptsize(0.11) & 0.881 \scriptsize(0.04) & 0.606 \scriptsize(0.22)\\
				& 100 & 0.524 \scriptsize(0.14) & 0.153 \scriptsize(0.16) & 0.744 \scriptsize(0.08) & 0.503 \scriptsize(0.13) & 0.831 \scriptsize(0.05) & 0.658 \scriptsize(0.08) & 0.919 \scriptsize(0.02) & 0.805 \scriptsize(0.08)\\\hline
				& 40 & 0.013 \scriptsize(0.02) & 0 \scriptsize(0) & 0.181 \scriptsize(0.09) & 0.086 \scriptsize(0.05) & 0.319 \scriptsize(0.05) & 0.237 \scriptsize(0.07) &  - & -\\			
				3 & 60 & 0.058 \scriptsize(0.06) & 0.010 \scriptsize(0.02) & 0.267 \scriptsize(0.08) & 0.136 \scriptsize(0.06) & 0.394 \scriptsize(0.05) & 0.305 \scriptsize(0.04) & 0.286 \scriptsize(0.03) & 0.021 \scriptsize(0.03)\\		
				& 80 & 0.109 \scriptsize(0.08) & 0.012 \scriptsize(0.02) & 0.327 \scriptsize(0.14) & 0.170 \scriptsize(0.10) & 0.447 \scriptsize(0.17) & 0.321 \scriptsize(0.13) & 0.476 \scriptsize(0.08) & 0.161 \scriptsize(0.11)\\		
				& 100 & 0.160 \scriptsize(0.09) & 0.016 \scriptsize(0.02) & 0.412 \scriptsize(0.07) & 0.218 \scriptsize(0.06) & 0.546 \scriptsize(0.04) & 0.391 \scriptsize(0.06) & 0.584 \scriptsize(0.05) & 0.224 \scriptsize(0.09)\\\hline
				& 40 & 0.113 \scriptsize(0.11) & 0.075 \scriptsize(0.10) & 0.291 \scriptsize(0.09) & 0.240 \scriptsize(0.10) & 0.374 \scriptsize(0.06) & 0.378 \scriptsize(0.09) & - & -\\			
				4 & 60 & 0.187 \scriptsize(0.15) & 0.138 \scriptsize(0.09) & 0.356 \scriptsize(0.08) & 0.340 \scriptsize(0.09) & 0.418 \scriptsize(0.05) & 0.473 \scriptsize(0.07) & 0.533 & 0.238\\		
				& 80 & 0.312 \scriptsize(0.16) & 0.198 \scriptsize(0.08) & 0.470 \scriptsize(0.09) & 0.413 \scriptsize(0.07) & 0.516 \scriptsize(0.09) & 0.533 \scriptsize(0.06) & 0.617 \scriptsize(0.08) & 0.338 \scriptsize(0.07)\\		
				& 100 & 0.519 \scriptsize(0.08) & 0.219 \scriptsize(0.07) & 0.612 \scriptsize(0.11) & 0.464 \scriptsize(0.07) & 0.642 \scriptsize(0.12) & 0.580 \scriptsize(0.06) & 0.729 \scriptsize(0.05) & 0.453 \scriptsize(0.04)\\\hline
			\end{tabu}
		}
		\caption{Hypervolume indicator averaged over 10 runs for different central parts of the Pareto front, budgets and number of objectives. The true Pareto front has an hypervolume indicator of 1.
\label{tab:hypervolume}
}
	\end{table}
	
	\begin{table}[!ht]
		\centering
		\setlength\tabcolsep{3pt}
		\makebox[\textwidth][c]{
			\begin{tabu}{|c|c|c|c|c|c|c|c|c|c|c|c|}
				\hline
				$m$ & $budget$ & \multicolumn{2}{c|}{$\RR^{0.1}$} & \multicolumn{2}{c|}{$\RR^{0.2}$} & \multicolumn{2}{c|}{$\RR^{0.3}$} & \multicolumn{2}{c|}{$\RR^*$}\\
				\rowfont{\scriptsize}
				& & C-EHI & EHI & C-EHI & EHI & C-EHI & EHI & C-EHI & EHI\\\hline
				\rowfont{\normalsize}		
				& 40 & {\color{red}0.130 \scriptsize[9]} & {\color{red}0.391 \scriptsize[5]} & 0.176 \scriptsize(0.09) & {\color{red}0.246 \scriptsize[9]} & 0.228 \scriptsize(0.05) & 0.293 \scriptsize(0.20) & 0.069 & 0.175\\		
				2 & 60 & 0.095 \scriptsize(0.05) & {\color{red}0.242 \scriptsize[7]} & 0.109 \scriptsize(0.05) & 0.204 \scriptsize(0.08) & 0.133 \scriptsize(0.06) & 0.184 \scriptsize(0.06) & 0.066 \scriptsize(0.02) & {\color{red}0.101 \scriptsize[9]}\\
				& 80 & 0.059 \scriptsize(0.02) & {\color{red}0.203 \scriptsize[8]} & 0.058 \scriptsize(0.01) & 0.171 \scriptsize(0.05) & 0.067 \scriptsize(0.02) & 0.161 \scriptsize(0.07) & 0.050 \scriptsize(0.01) & 0.149 \scriptsize(0.05)\\
				& 100 & 0.067 \scriptsize(0.02) & {\color{red}0.177 \scriptsize[8]} & 0.059 \scriptsize(0.02) & 0.138 \scriptsize(0.05) & 0.055 \scriptsize(0.02) & 0.118 \scriptsize(0.03) & 0.048 \scriptsize(0.02) & 0.109 \scriptsize(0.03)\\\hline
				& 40 & {\color{red}0.736 \scriptsize[5]} & {\color{red}4.267 \scriptsize[1]} & 0.455 \scriptsize(0.13) & 0.518 \scriptsize(0.13) & 0.531 \scriptsize(0.12) & 0.500 \scriptsize(0.10) & - & -\\			
				3 & 60 & {\color{red}0.390 \scriptsize[8]} & {\color{red}0.961 \scriptsize[4]} & 0.388 \scriptsize(0.11) & 0.460 \scriptsize(0.11) & 0.471 \scriptsize(0.13) & 0.439 \scriptsize(0.06) & 0.196 \scriptsize(0.03) & {\color{red}0.287 \scriptsize[8]}\\
				& 80 & 0.238 \scriptsize(0.10) & {\color{red}0.550 \scriptsize[5]} & 0.256 \scriptsize(0.12) & 0.361 \scriptsize(0.17) & 0.339 \scriptsize(0.14) & 0.356 \scriptsize(0.14) & 0.181 \scriptsize(0.05) & {\color{red}0.241 \scriptsize[9]}\\
				& 100 & 0.226 \scriptsize(0.05) & {\color{red}0.510 \scriptsize[6]} & 0.250 \scriptsize(0.05) & 0.349 \scriptsize(0.06) & 0.335 \scriptsize(0.08) & 0.351 \scriptsize(0.07) & 0.183 \scriptsize(0.05) & 0.349 \scriptsize(0.08)\\\hline
				& 40 & {\color{red}0.345 \scriptsize[9]} & {\color{red}0.624 \scriptsize[6]} & 0.381 \scriptsize(0.05) & 0.447 \scriptsize(0.12) & 0.626 \scriptsize(0.07) & 0.571 \scriptsize(0.07) & - & -\\			
				4 & 60 & 0.280 \scriptsize(0.13) & {\color{red}0.374 \scriptsize[8]} & 0.334 \scriptsize(0.04) & 0.359 \scriptsize(0.06) & 0.587 \scriptsize(0.07) & 0.512 \scriptsize(0.07) & 0.197 & 0.233\\
				& 80 & 0.210 \scriptsize(0.06) & 0.282 \scriptsize(0.06) & 0.285 \scriptsize(0.05) & 0.298 \scriptsize(0.04) & 0.523 \scriptsize(0.08) & 0.460 \scriptsize(0.06) & 0.212 \scriptsize(0.04) & 0.262 \scriptsize(0.08)\\
				& 100 & 0.158 \scriptsize(0.02) & 0.266 \scriptsize(0.06) & 0.236 \scriptsize(0.05) & 0.277 \scriptsize(0.03) & 0.468 \scriptsize(0.08) & 0.430 \scriptsize(0.05) & 0.257 \scriptsize(0.04) & 0.291 \scriptsize(0.08)\\\hline
			\end{tabu}
		}
		\caption{Inverted Generational Distance averaged over 10 runs for different central parts of the Pareto front, budgets and number of objectives. Lower values are better.}
		\label{tab:igd}
	\end{table}
	
	\begin{table}[!ht]
		\centering
		\setlength\tabcolsep{3pt}
		\makebox[\textwidth][c]{
			\begin{tabu}{|c|c|c|c|c|c|c|c|c|c|c|c|c|c|}
				\hline
				$m$ & $budget$ & \multicolumn{2}{c|}{$\RR^{0.1}$} & \multicolumn{2}{c|}{$\RR^{0.2}$} & \multicolumn{2}{c|}{$\RR^{0.3}$} & \multicolumn{2}{c|}{Whole front} & \multicolumn{2}{c|}{$\RR^*$}\\
				\rowfont{\scriptsize}
				& & C-EHI & EHI & C-EHI & EHI & C-EHI & EHI & C-EHI & EHI & C-EHI & EHI\\\hline
				\rowfont{\normalsize}
				& 40 & {\color{red}0.048 \scriptsize[9]} & {\color{red}0.189 \scriptsize[5]} & 0.033 \scriptsize(0.02) & {\color{red}0.112 \scriptsize[8]} & 0.033 \scriptsize(0.02) & 0.121 \scriptsize(0.12) & 0.033 \scriptsize(0.02) & 0.076 \scriptsize(0.08) & 0.014 & 0.078\\
				2 & 60 & 0.024 \scriptsize(0.02) & {\color{red}0.121 \scriptsize[7]} & 0.014 \scriptsize(0.01) & 0.081 \scriptsize(0.04) & 0.014 \scriptsize(0.01) & 0.061 \scriptsize(0.03) & 0.012 \scriptsize(0.01) & 0.042 \scriptsize(0.03) & 0.009 \scriptsize(0.01) & {\color{red}0.070 \scriptsize[9]}\\
				& 80 & 0.010 \scriptsize(0.01) & {\color{red}0.099 \scriptsize[8]} & 0.008 \scriptsize(0) & 0.062 \scriptsize(0.02) & 0.007 \scriptsize(0) & 0.052 \scriptsize(0.03) & 0.006 \scriptsize(0) & 0.032 \scriptsize(0.02) & 0.003 \scriptsize(0) & 0.044 \scriptsize(0.02)\\
				& 100 & 0.017 \scriptsize(0.02) & {\color{red}0.083 \scriptsize[8]} & 0.010 \scriptsize(0.01) & 0.041 \scriptsize(0.02) & 0.008 \scriptsize(0.01) & 0.034 \scriptsize(0.02) & 0.003 \scriptsize(0.01) & 0.022 \scriptsize(0.02) & 0.003 \scriptsize(0) & 0.027 \scriptsize(0.02)\\\hline
				& 40 & {\color{red}0.212 \scriptsize[5]} & {\color{red}1.954 \scriptsize[1]} & 0.086 \scriptsize(0.07) & 0.162 \scriptsize(0.07) & 0.060 \scriptsize(0.03) & 0.128 \scriptsize(0.07) & 0.046 \scriptsize(0.03) & 0.037 \scriptsize(0.02) & - & -\\
				3 & 60 & {\color{red}0.071 \scriptsize[8]} & {\color{red}0.303 \scriptsize[4]} & 0.037 \scriptsize(0.02) & 0.116 \scriptsize(0.05) & 0.023 \scriptsize(0.02) & 0.083 \scriptsize(0.04) & 0.019 \scriptsize(0.01) & 0.021 \scriptsize(0.02) & 0.039 \scriptsize(0.01) & {\color{red}0.083 \scriptsize[8]}\\
				& 80 & 0.053 \scriptsize(0.04) & {\color{red}0.129 \scriptsize[5]} & 0.022 \scriptsize(0.02) & 0.078 \scriptsize(0.05) & 0.008 \scriptsize(0.01) & 0.044 \scriptsize(0.03) & 0.008 \scriptsize(0.01) & 0.010 \scriptsize(0.01) & 0.017 \scriptsize(0.01) & {\color{red}0.050 \scriptsize[9]}\\
				& 100 & 0.044 \scriptsize(0.03) & {\color{red}0.102 \scriptsize[6]} & 0.023 \scriptsize(0.02) & 0.065 \scriptsize(0.03) & 0.004 \scriptsize(0.01) & 0.042 \scriptsize(0.03) & 0.004 \scriptsize(0.01) & 0.008 \scriptsize(0.01) & 0.008 \scriptsize(0.01) & 0.053 \scriptsize(0.03)\\\hline
				& 40 & {\color{red}0.047 \scriptsize[9]} & {\color{red}0.039 \scriptsize[6]} & 0.016 \scriptsize(0.02) & 0.023 \scriptsize(0.03) & 0.016 \scriptsize(0.02) & 0.010 \scriptsize(0.02) & 0.012 \scriptsize(0.01) & 0.004 \scriptsize(0.01) & - & -\\
				4 & 60 & 0.028 \scriptsize(0.04) & {\color{red}0.035 \scriptsize[8]} & 0.005 \scriptsize(0.01) & 0.015 \scriptsize(0.02) & 0.005 \scriptsize(0.01) & 0.007 \scriptsize(0.02) & 0.005 \scriptsize(0.01) & 0 \scriptsize(0) & 0 & 0\\
				& 80 & 0.008 \scriptsize(0.01) & 0.019 \scriptsize(0.02) & 0.001 \scriptsize(0) & 0.005 \scriptsize(0.01) & 0.001 \scriptsize(0) & 0.004 \scriptsize(0.01) & 0.001 \scriptsize(0) & 0 \scriptsize(0) & 0 \scriptsize(0) & 0.010 \scriptsize(0.01)\\	
				& 100 & 0 \scriptsize(0) & 0.012 \scriptsize(0.01) & 0 \scriptsize(0) & 0.002 \scriptsize(0) & 0 \scriptsize(0) & 0.001 \scriptsize(0) & 0 \scriptsize(0) & 0 \scriptsize(0) & 0 \scriptsize(0) & 0.003 \scriptsize(0.01)\\\hline
			\end{tabu}
		}
		\caption{$\varepsilon$-Indicator averaged over 10 runs for different central parts of the Pareto front, budgets and number of objectives. Lower values are better.
		}
		\label{tab:epsilon}
	\end{table}
	
	\begin{figure}[!ht]
		\centering
		\makebox[\textwidth][c]{\includegraphics[width=1.3\textwidth]{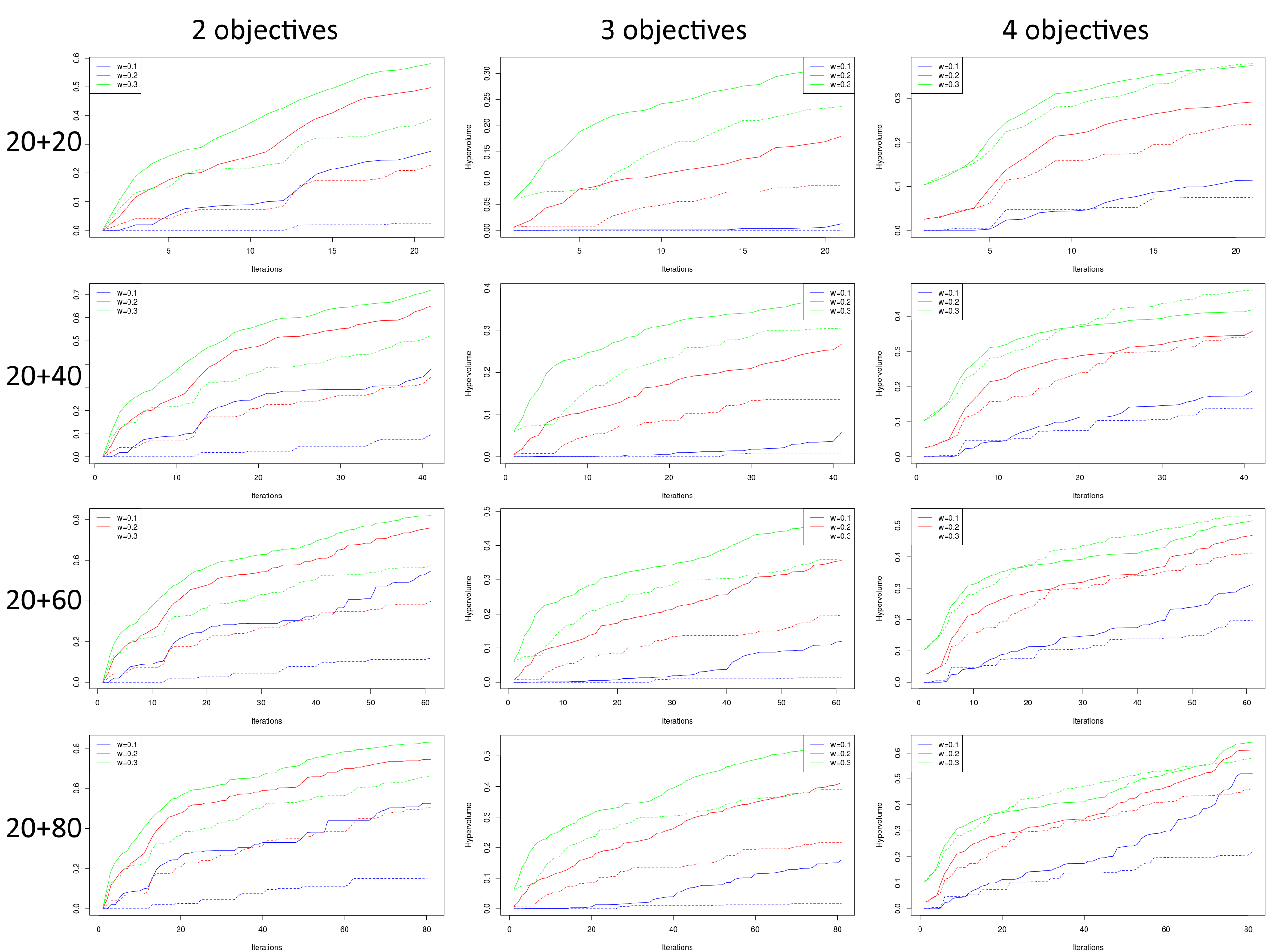}}
		\caption{Mean hypervolume indicator for 2, 3 or 4 objectives (as columns) and total budgets of 40, 60, 80, 100 (as rows). The blue, red and green colors correspond to the improvement regions $\mathcal{I}_{0.1}$, $\mathcal{I}_{0.2}$ and $\mathcal{I}_{0.3}$, respectively. Dashed lines correspond to the standard EHI, continuous lines to the C-EHI algorithm.
}
		\label{fig:evolution_hypervolume}
	\end{figure}
	
	The average performance measures reported in Tables \ref{tab:hypervolume} to \ref{tab:epsilon} confirm the behavior of the C-EHI algorithm already illustrated in Figure \ref{fig:comparaison_approches} for a typical run: mEI set to improve on the estimated center efficiently drives the algorithm towards the (unknown) central part of the real Pareto front. 
	Table \ref{tab:hypervolume} summarizes test results expressed in terms of hypervolume improvements.
	In the most central part of the front ($w=0.1$) C-EHI significantly surpasses the standard EHI. It is also remarkable that despite early GPs inaccuracies, the algorithm does not drift towards off-centered locations of the front.
	EHI outperforms C-EHI only with 4 objectives and $w=0.3$, since in this case $\mathcal I_w$ is not a restrictive central part in such dimension.
		
	The IGD (Table \ref{tab:igd}) shows similar results. Notice that for at least one run, the classical EHI does not reach the $\mathcal{I}_{0.1}$ area in the two and three objective cases, even if 100 evaluations are allowed. 
	In the 4 dimensional case, at least 80 iterations are needed. Again, the results show smaller distances between points in $\PY\cap\mathcal I_{w}$ and $\widehat\PY$ with C-EHI for 2 objectives, and when the restriction area is small. 
	For 4 objectives and $w=0.3$, EHI outperforms C-EHI, but in this case $\mathcal I_{0.3}$ is a quite large part of $Y$. Many solutions in $\PY\cap\mathcal I_{0.3}$ are thus far away from the area where C-EHI converges.
	
	Test results expressed in terms of the $\varepsilon$-Indicator, which is a distance to the Pareto front, are provided in Table \ref{tab:epsilon}. 
	In narrow central areas, C-EHI performs very well, meaning that the best point of $\widehat\PY\cap\mathcal{I}_{w}$ is close to $\PY\cap\mathcal I_{w}$.
	When considering the whole Pareto front, closeness to optimal solutions is improved using C-EHI with 2 and 3 objectives. 
	The $\varepsilon$-Indicator with the whole front is similar to that with the restrictions to $\mathcal I_{w}$, meaning that the closest points to $\PY$ have occured in the central part. It is not necessarily the case for EHI.
	Many 0's occur in the last row of Table \ref{tab:epsilon} where $m=4$. The reason is that, with many objectives, the true Pareto front does not only contain points coming from NSGA-II but also from EHI or C-EHI optimizations.	
	
	Other indicators such as attainment times (average/median/worse number of iterations over the 10 experiments to reach some central objective values) confirm the results reported above, but are not given here for reasons of brevity.
	
	\section{Conclusions}
	In this work, we have developed new concepts and have adapted existing Bayesian multi-objective optimization methods to enhance convergence to equilibrated	solutions of a multi-objective optimization problem at severely restricted number of calls to the objective functions. A general definition of the Pareto front center, valid for non-convex, discontinuous, convoluted fronts has been given and some of its properties analyzed. We have proposed the C-EHI optimization algorithm which first estimates the Pareto center, then maximizes the mEI criterion and finally chooses a targeted central part of the Pareto front in accordance with the remaining budget. 
The C-EHI algorithm has shown better convergence to the center of the Pareto front than other state-of-the-art approaches.
A possible continuation to this work is to study the effect of further increasing the number of objectives as the topology of Pareto fronts in high dimensional spaces remains largely unknown and point targeting becomes more necessary. 
	
	\section*{Acknowledgements}
	This research was performed within the framework of a CIFRE grant (convention \#2016/0690) established between the ANRT and the Groupe PSA for the doctoral work of David Gaudrie.
	
	\noindent The authors would like to thank Philippe Solal for discussions about the center of the Pareto front and Eric Touboul for his help with the geometric proofs of the center invariance to linear scalings.
	
	\appendix
	\section{Appendix: Nadir point estimation using Gaussian Processes}
	\label{sec:appendix1}
	In the field of EMOA's, estimation procedures for extreme points, thus components of $\N$, have been proposed \cite{DebNadir,BechikhNadir}. 
	In the Gaussian Processes framework, we look for $\x$'s that are likely to be extreme design points (Definition~\ref{defextremepoints}).
	Estimating the Nadir point through surrogates is a difficult task. 
	When $m>2$, the Nadir components come from extreme points that are not necessarily optimal in a single objective (cf. Definition~\ref{defnadir}).
	A straightforward estimation of the Nadir involves the knowledge of the whole Pareto front, as each component $j$ of the Nadir point is dependent on the $j$-th objective function, but also on all other functions through the component-wise non-domination property of $\N$.
	However, the C-EHI algorithm only targets central solutions.
	With this algorithm, the GPs may not be accurate at non central locations of $\PY$.
	Using simulated values of the GPs instead of the kriging prediction should nonetheless reduce the impact of a potential inaccuracy as the latter is implicitly considered.
	Applying a step of mono-objective $f_j(\cdot)$ minimization (e.g. using EGO) might diminish this difficulty (at least for the $\I$ estimation), at the expense of $m$ costly evaluations of the computer code.
	
	\vskip\baselineskip
	We now explain the proposed estimation approach. Extreme points have the property of being both \emph{large} in the $j$-th objective and not dominated (ND).
	We are thus interested in $\x's$ with a high probability $\Pr(Y_j(\x)>a_j ~,~ \Y(\x) \text{ ND})$, for $j=1,\dotsc,m$. A typical choice for $a_j$ is the $j$-th component of the Nadir of the current Pareto front approximation, $\widehat{\nu}^{j}_j$. Non-Domination refers to the current Pareto front approximation $\widehat\PY$. 
	These events are not independent since $\Y(\x)$ contains $Y_j(\x)$. However, by conditioning on $Y_j(\x)>\widehat{\nu}^{j}_j$, $\Pr(Y_j(\x)>\widehat{\nu}^{j}_j,\Y(\x) \text{ ND})={\Pr(\Y(\x) \text{ ND}\vert Y_j(\x)>\widehat{\nu}^{j}_j)}\times{\Pr(Y_j(\x)>\widehat{\nu}^{j}_j)}$. The first part can be further simplified: to be non-dominated by $\widehat\PY$, a vector $\mathbf z\in\R^m$ with $z_j>\underset{\y\in\widehat{\PY}}{\max}~y_j$ has to be non-dominated by $\widehat\PY$ with regard to objectives ${1,\dotsc,j-1,j+1,\dotsc,m}$. Hence, $\Pr(\Y(\x) \text{ ND}\vert Y_j(\x)>\widehat{\nu}^{j}_j)=\Pr(\Y(\x) \text{ ND}_{\backslash\{j\}})$ where $\text{ ND}_{\backslash\{j\}}$ stands for non-domination omitting the objective $j$. Finally, the most promising candidates for generating extreme points of the Pareto front are those with large probability $\Pr(\Y(\x) \text{ ND}_{\backslash\{j\}})\times{\Pr(Y_j(\x)>\widehat{\nu}^{j}_j)}$.
	
	Besides these candidates, a second scenario will lead to new extreme points. If $\mathbf z\in\R^m\preceq\widehat{\pmb\nu}^{j}$ is obtained through simulations, $\widehat{\pmb\nu}^{j}$ will no longer belong to the simulated Pareto front. Consequently, the $j$-th component of the Nadir point of the simulated front will also be modified in that case. When $m=2$, the new $\widehat{\nu}^{j}_j$ will be $z_j$, but this does not necessarily hold in higher dimensions.
	
	In short, two events will lead to new extreme points: dominating the $j$-th current extreme point, $\{\Y(\x)\preceq\widehat{\pmb\nu}^{j}\}$, or being both larger than it in $j$-th objective and ND with respect to the approximation front in the remaining objectives, $\{Y_j(\x)>\widehat{\nu}^{j}_j,\Y(\x) \text{ ND}_{\backslash\{j\}}\}$. The areas corresponding to these events are sketched with a 2D example in Figure~\ref{fig:recherche_nadir}.
	Being disjoint, the probability of the union of these events equals the sum. In the end, for estimating the $j$ extreme points and by extension $\N$, the most promising candidates are those maximizing
	\begin{equation}\Pr(\Y(\x) \text{ ND}_{\backslash\{j\}})\times{\Pr(Y_j(\x)>\widehat{\nu}^{j}_j)}+\Pr(\Y(\x)\preceq\widehat{\pmb\nu}^{j})\text{,}\label{eq:candidats_nadir}\end{equation}
	for $j=1,\dotsc,m$. $\Pr(\Y(\x) \text{ ND}_{\backslash\{j\}})$ is the probability of being non-dominated with respect to a $m-1$ dimensional front (which is smaller than the restriction of $\widehat\PY$ to $\{1,\dotsc,m\}\backslash\{j\}$) and is the more computationally demanding term for a given $\x$. 
	The other terms are univariate and product of univariate Gaussian CDF's, respectively.
	
	\begin{figure}[!ht]
		\centering
		\includegraphics[width=0.6\textwidth]{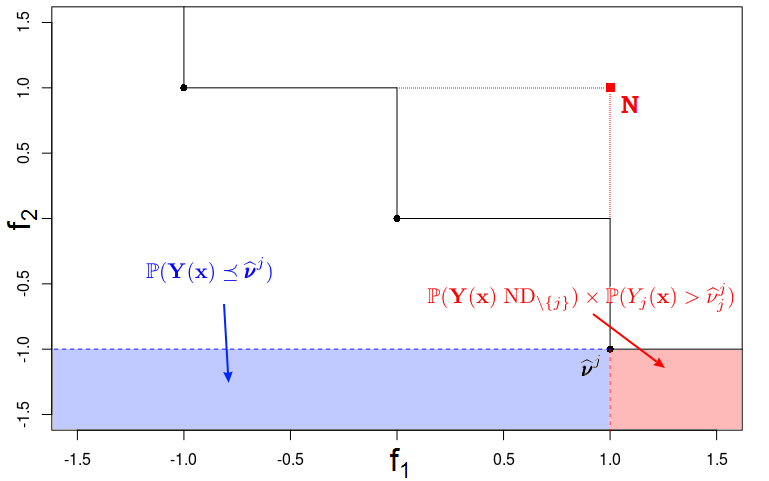}
		\caption{Areas leading to a new first component of the Nadir ($j=1$). A point in the red zone (larger than the first extreme point in the first objective) or in the blue zone (dominating the current extreme point) becomes the new (first) extreme point, and therefore induces a modification of $\N$. }
		\label{fig:recherche_nadir}
	\end{figure}
	
	In the particular case of two objectives, the union of these events reduces to dominating $\widehat{\pmb\nu^{j}}$ in all objectives but $j$, that is to say, in the other objective $\bar j$.
	This is equivalent to looking for candidates with lower $f_{\bar j}(\cdot)$, which has already been investigated when looking for candidates for estimating $I_{\bar j}$. Unfortunately, in a general $m$-dimensional case no simplification occurs.
	The set of candidates that are likely to dominate $\widehat{\pmb\nu}^{j}$ in all objectives but $j$ is included but not equal to the set of candidates likely to maximize (\ref{eq:candidats_nadir}), whose probabilities are respectively $\Pr(\Y(\x)\preceq_{\backslash\{j\}}\widehat{\pmb\nu^{j}})$ and $\Pr(\Y(\x) \text{ ND}_{\backslash\{j\}})\times{\Pr(Y_j(\x)>\widehat{\nu}^{j}_j)}+\Pr(\Y(\x)\preceq\widehat{\pmb\nu}^{j})$, as the latter encompasses more cases for producing new extreme points when $m>2$.
	It is indeed possible to construct $\mathbf z\in\R^m$ such that $z_j>\nu^{j}_j$, $\mathbf z$ ND$_{\backslash\{j\}}$ and $\mathbf z\npreceq_{\backslash\{j\}}\widehat{\pmb\nu^{j}}$.
	Such a $\mathbf z$ will become the $j$-th extreme point without dominating the previous $j$-th extreme point in objectives $\{1,\dotsc,m\}\backslash\{j\}$.
	
	\label{annexes:estimation_nadir}
	
	\section{Appendix: Targeting non central parts of the Pareto front}
\label{sec:noncentraltarget}
In the main body of this paper, we have assumed that the end-user has not expressed any preference and have therefore targeted the empirical center of the Pareto front as a default setting.
In Section \ref{sec:center}, this center $\widehat{\mathbf C}$ was built as the point of the estimated Ideal-Nadir line, $\widehat{\mathcal L}$, the closest to the empirical front $\widehat{\PY}$.
	
Practitioners may nonetheless have preferences regarding the objective space. 
When expressed through a reference point $\RR$ given as an aspiration level, these preferences can be incorporated in our algorithm very simply, by using mEI together with an adequate $\widehat\RR$. 
The adapted reference point $\widehat\RR$ is the point of the segments $\widehat{\mathcal L'}$ joining the estimated Ideal, the reference point $\RR$ and the Nadir, which is the closest to the front $\widehat{\PY}$.
This mechanism accommodates both situations when $\RR$ can and cannot be reached (i.e., $\RR$ is on both sides of the true front $\PY$) and it is illustrated in Figure \ref{fig:update}.
	
	\begin{figure}[h!]
		\centering
		\includegraphics[width=0.8\textwidth]{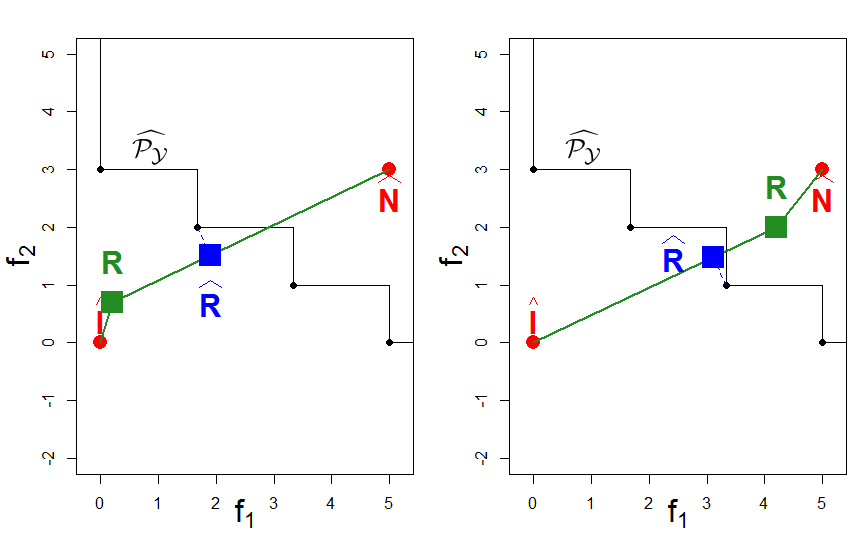}
		\caption{To stay non-dominated and to adapt to $\widehat\PY$, the user-supplied $\RR$ is updated to an $\widehat\RR$. Left: case where $\RR$ is clearly too optimistic, and $\widehat\RR$ is better suited to the current Pareto front $\widehat\PY$. Right: the user-provided target has been attained and a more ambitious $\widehat\RR$ is used instead.}
		\label{fig:update}
	\end{figure}
	
The Algorithm \ref{algo} is readily transformed into a method that aims at $\RR$ just by changing the update of $\widehat{\mathbf{C}}$ into that of $\widehat\RR$. 
Other parts of Algorithm \ref{algo} remain unchanged.

Figure \ref{fig:rehi} shows one optimization run in which a non-central target $\RR$ has been provided.
		\begin{figure}[h!]
			\centering
			\includegraphics[width=0.8\textwidth]{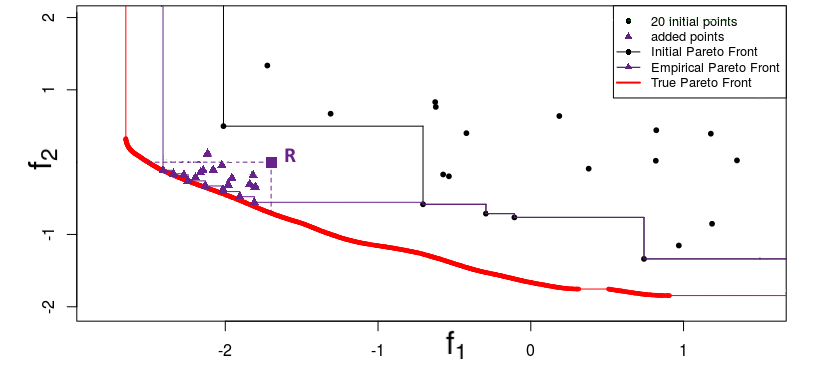}
			\caption{Example of an optimization run where an off-centered target $\RR$ is provided. The Pareto front is found within the user-defined improvement region.}
			\label{fig:rehi}
		\end{figure}

	\vskip\baselineskip
	\bibliographystyle{plain}
	\bibliography{biblio}	
	
\end{document}